\newif\iftechreport
\newif\iftwocolumns
\newif\ifincludebody
\newif\ifincludeappendix
\newif\ifincludeinlineproofs
\newif\ificmlformat
\newif\ifspacetight
\def\internal{\text{\rm internal}}
\def\swap{\text{\rm swap}}
\def\external{\text{\rm external}}
\def\localinternal{\text{\rm localinternal}}
\def\localswap{\text{\rm localswap}}
\def\localexternal{\text{\rm localexternal}}
\def\localcolor{\text{\rm localcolor}}
\def\dist{\text{\rm d}}
\def\root{\text{\rm root}}
\def\level{{\cal L}}
\def\C{\mathbf{C}}
\def\Ecloserto#1{E^{#1}}
\def\Root{\root}
\def\Level{\level}
\newcommand{\TT}{\mbox{true}}
\newcommand{\FA}{\mbox{false}}
\newcommand\dropbox[1]{\setbox\@tempboxa\vbox to\ht\strutbox{\hbox{#1}\vss}%
  \dp\@tempboxa\dp\strutbox\box\@tempboxa}
\newcommand\basebox[1]{\setbox\@tempboxa\vbox{\hbox{#1}}%
  \setlength\@tempdima{\ht\@tempboxa}\addtolength\@tempdima{\dp\@tempboxa}%
  \ht\@tempboxa=0.5\@tempdima\dp\@tempboxa=0.5\@tempdima%
  \box\@tempboxa}
\let\oldnormalsize\normalsize
\def\normalsize{\oldnormalsize%
  \abovedisplayskip2pt plus1pt minus 2pt%
  \belowdisplayskip2pt plus1pt minus 2pt}
\renewcommand\paragraph[1]{\par\noindent{\bf #1.}\xspace}
\renewcommand\subsection[1]{\paragraph{#1}}
\def\viz{\mbox{viz.}}
\declaretheorem{requirement}
\declaretheorem{example}
\icmltitlerunning{On Local Regret}
\title{On Local Regret}
\author{Michael Bowling\\
Computing Science Department\\
University of Alberta \\
Edmonton, Alberta T6G2E8 Canada \\
\texttt{\small bowling@cs.ualberta.ca}
\and 
Martin Zinkevich \\
Yahoo! Research \\
Santa Clara, CA 95051 USA \\
\texttt{\small maz@yahoo-inc.com}
}
\begin{document}

\ificmlformat
\twocolumn[
\icmltitle{On Local Regret}

\icmlauthor{Michael Bowling}{bowling@cs.ualberta.ca}
\icmladdress{Computing Science Department, University of Alberta, Edmonton, Alberta T6G2E8 Canada}
\icmlauthor{Martin Zinkevich}{maz@yahoo-inc.com}
\icmladdress{Yahoo! Research, Santa Clara, CA 95051 USA}

\vskip 0.3in
]
\else
\maketitle
\fi

\ifincludebody
\begin{abstract}
Online learning aims to perform nearly as well as the best hypothesis in hindsight.  For some hypothesis classes, though, even finding the best hypothesis offline is challenging.  In such offline cases, local search techniques are often employed and only local optimality guaranteed.  For online decision-making with such hypothesis classes, we introduce local regret, a generalization of regret that aims to perform nearly as well as only nearby hypotheses.  We then present a general algorithm to minimize local regret with arbitrary locality graphs.  We also show how the graph structure can be exploited to drastically speed learning.  These algorithms are then demonstrated on a diverse set of online problems: online disjunct learning, online Max-SAT, and online decision tree learning.
\end{abstract}

\section{Introduction}
An online learning task involves repeatedly taking actions and, after an action is chosen, observing the result of that action.  This is in contrast to offline learning where the decisions are made based on a fixed batch of training data.  As a consequence offline learning typically requires i.i.d. assumptions about how the results of actions are generated (on the training data, and all future data).  In online learning, no such assumptions are required.  Instead, the metric of performance used is regret: the amount of additional utility that could have been gained if some alternative sequence of actions had been chosen.  The set of alternative sequences that are considered defines the notion of regret.  Regret is more than just a measure of performance, though, it also guides algorithms.  For specific notions of regret, no-regret algorithms exist, for which the total regret is growing at worst sublinearly with time, hence their average regret goes to zero.  These guarantees can be made with no i.i.d., or equivalent assumption, on the results of the actions.

One traditional drawback of regret concepts is that the number of alternatives considered must be finite.  This is typically achieved by assuming the number of available actions is finite, and for practical purposes, small.  In offline learning this is not at all the case: offline hypothesis classes are usually very large, if not infinite.  There have been attempts to achieve regret guarantees for infinite action spaces, but these have all required assumptions to be made on the action outcomes (e.g., convexity or smoothness).  In this work, we propose new notions of regret, specifically for very large or infinite action sets, while avoiding any significant assumptions on the sequence of action outcomes.  Instead, the action set is assumed to come equipped with a notion of locality, and regret is redefined to respect this notion of locality.  This approach allows the online paradigm with its style of regret guarantees to be applied to previously intractable tasks and hypothesis classes.

\section{Background}
For $t \in \{ 1, 2, \ldots \}$, let $a^t \in A$ be the action at time $t$, and $u^t : A \rightarrow \mathbb{R}$ be the utility function over actions at time $t$.
\begin{requirement}
For all $t$, $\max_{a,b \in A} |u^t(a) - u^t(b)| \le \Delta$.
\label{req:bounded-u}
\end{requirement}
The basic building block of regret is the additional utility that could have been gained if some action $b$ was chosen in place of action $a$: $R^T_{a,b} = \sum_{t=1}^T 1(a^t = a) \left( u^t(b) - u^t(a) \right)$,
where $1(\text{\it condition})$ is equal to $1$ when {\it condition} is true and $0$ otherwise.  We can use this building block to define the traditional notions of regret.
\begin{gather}
R^T_{\internal} = \max_{a,b \in A} R^{T,+}_{a,b} \quad\quad
R^T_{\swap} = \sum_{a \in A} \max_{b \in A} R^{T,+}_{a,b} \\
R^T_{\external} = \max_{b \in A} 
  \left(\ifspacetight\textstyle\fi\sum_{a \in A} R^T_{a,b}\right)^+ 
\end{gather}
where $x^+ = \max(x, 0)$ so that $R^{T,+}_{a,b} = \max(R^T_{a,b}, 0)$.  Internal regret~\citep{HarMas02} is the maximum utility that could be gained if one action had been chosen in place of some other action.  Swap regret~\citep{GreJaf03} is the maximum utility gained if each action could be replaced by another.  External regret~\citep{Hannan57}, which is the original pioneering concept of regret, is the maximum utility gained by replacing all actions with one particular action.  This is the most relaxed of the three concepts, and while the others must concern themselves with $|A|^2$ possible regret values (for all pairs of actions) external regret only need worry about $|A|$ regret values.  So although the guarantee is weaker, it is a simpler concept to learn which can make it considerably more attractive.  These three regret notions have the following relationships.
\begin{align}
R^T_{\internal} &\le R^T_{\swap} \le |A| R^T_{\internal} &
R^T_{\external} &\le R^T_{\swap}
\end{align}

\subsection{Infinite Action Spaces}
This paper considers situations where $A$ is infinite.  To keep the notation simple, we will use max operations over actions to mean suprema operations and summations over actions to mean the suprema of the sum over all finite subsets of actions.  Since we will be focused on regret over a finite time period, there will only ever be a finite set of actually selected actions and, hence only a finite number of non-zero regrets, $R^T_{a,b}$.  The summations over actions will always be thought to be restricted to this finite set.

None of the three traditional regret concepts are well-suited to $A$ being infinite.  Not only does $|A|$ appear in the regret bounds, but one can demonstrate that it is impossible to have no regret in some infinite cases.  Consider $A = \mathbb{N}$ and let $u^t$ be a step function, so
$u^t(a) = 1$ if $a > y^t$ for some $y^t$ and $0$ otherwise.  Imagine $y^t$ is selected so that $\Pr[ a^t > y^t | u^{1,\ldots,T-1}, a^{1,\ldots,T-1}]\le 0.001$, which is always possible.  Essentially, high utility is always just beyond the largest action selected.  Now, consider $y^* = 1 + \max_{t \le T} y^t$.  In expectation $\frac{1}{T} \sum_{t=1}^T u^t(a_t) \leq 0.001$ while $\frac{1}{T}\sum_{t=1}^T u^t(y^*) = 1$ (\ie, there is large internal and external regret for not having played $y^*$,) so the average regret cannot approach zero.

Most attempts to handle infinite action spaces have proceeded by making assumptions on both $A$ and $u$.  For example, if $A$ is a compact, convex subset of $\mathbb{R}^n$ and the utilities are convex with bounded gradient on $A$, then you can minimize regret even though $A$ is infinite~\citep{Zin03}.
We take an alternative approach where we make use of a notion of locality on the set $A$, and modify regret concepts to respect this locality.  Different notions of locality then result in different notions of regret.  Although this typically results in a weaker form of regret for finite sets, it breaks all dependence of regret on the size of $A$ and allows it to even be applied when $A$ is infinite and $u$ is an arbitrary (although still bounded) function.  Wide range regret
methods~\cite{Lehrer03} can also bound regret with respect to a set of (countably) infinite ``alternatives'', but unlike our results, their asymptotic bound does not apply uniformly across the set, and uniform finite-time bounds depend upon a finite action space~\cite{BlumMansour07}.

\section{Local Regret Concepts}

Let $G = (V, E)$ be a directed graph on the set of actions, \ie, $V = A$.  We do not assume $A$ is finite, but we do assume $G$ has bounded out-degree $D = \max_{a\in V} |\{ b: (a,b) \in E\}|$.  This graph can be viewed as defining a notion of locality.  The semantics of an edge from $a$ to $b$ is that one should consider possibly taking action $b$ in place of action $a$.  Or rather, if there is no edge from $a$ to $b$ then one need not have any regret for not having taken action $b$ when $a$ was taken.  By limiting regret only to the edges in this graph, we get the notion of local regret.  Just as with traditional regret, which we will now refer to as global regret, we can define different variants of regret.
\begin{align}
R^T_{\localinternal} &= \max_{(a,b) \in E} R^{T,+}_{a,b} 
\iftwocolumns \\ \else & \fi
R^T_{\localswap} &= \sum_{a \in A} \max_{b : (a,b)\in E} R^{T,+}_{a,b} 
\end{align}
Local internal and local swap regret just involve limiting regret to edges in $G$.  Local external regret is more subtle and requires a notion of edge lengths.  For all edges $(i,j)\in E$, let $c(i,j) > 0$ be the edge's positive length.  Define $\dist(a,b)$ to be the sum of the edge lengths on a shortest path from vertex $a$ to vertex $b$, and $\Ecloserto{b} = \{ (i, j) \in E : d(i, j) = c(i, j) + d(j, b) \}$ to be the set of edges that are on any shortest path to vertex $b$.
\begin{equation}
R^T_{\localexternal} = \max_{b \in A} \left(\sum_{(i,j) \in \Ecloserto{b}} R^T_{i,j}/D\right)^+
\end{equation}
Global external regret considers changing all actions to some target action, regardless of locality or distance between the actions.
In local external regret, only adjacent actions are considered, and so actions are only replaced with actions that take one step toward the target action.  The factor of $1/D$ scales the regret of any one action by the out-degree, which is the maximum number of actions that could be one-step along a shortest path.  This keeps local external regret on the same scale as local swap regret.

It is easy to see that these concepts hold the same relationships between each other as their global counterparts.
\begin{align}
R^T_{\localinternal} &\le R^T_{\localswap} \le |A| R^T_{\localinternal} \\
R^T_{\localexternal} &\le R^T_{\localswap} 
\end{align}
More interestingly, in complete graphs where there is an edge between every pair of actions (all with unit lengths) and so everything is local, we can exactly equate global and local regret.
\begin{theorem}
If $G$ is a complete graph with unit edge lengths then,
\iftwocolumns
$R^T_{\localinternal} = R^T_{\internal}$; $R^T_{\localswap} = R^T_{\swap}$; and $R^T_{\localexternal} = R^T_{\external}/D$.
\else
\begin{align}
R^T_{\localinternal} &= R^T_{\internal} & R^T_{\localswap} &= R^T_{\swap} & \mbox{and} &&
R^T_{\localexternal} = R^T_{\external}/D.
\end{align}
\fi
\end{theorem}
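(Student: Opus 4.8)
The plan is to prove the three identities separately, leaning on two elementary observations. First, the diagonal carries no regret: $R^T_{a,a}=\sum_{t=1}^T 1(a^t=a)\left(u^t(a)-u^t(a)\right)=0$, so $R^{T,+}_{a,a}=0$. Second, in a complete graph with unit edge lengths every path uses at least one edge of length one, and the direct edge already realizes this, so $\dist(i,j)=1$ whenever $i\neq j$ while $\dist(j,b)=0$ iff $j=b$.

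For local internal and local swap regret, I would note that the edge set of a complete directed graph is exactly $E=\{(a,b):a\neq b\}$ (self-loops, if one even admits them, are irrelevant by the first observation). Since each $R^{T,+}_{a,b}\geq 0$ and $R^{T,+}_{a,a}=0$, adjoining or deleting the diagonal pairs changes neither $\max_{(a,b)\in E}R^{T,+}_{a,b}$ nor, for each fixed $a$, $\max_{b:(a,b)\in E}R^{T,+}_{a,b}$. Hence $\max_{(a,b)\in E}R^{T,+}_{a,b}=\max_{a,b\in A}R^{T,+}_{a,b}$, i.e.\ $R^T_{\localinternal}=R^T_{\internal}$, and the term-by-term equality of the inner maxima gives $R^T_{\localswap}=R^T_{\swap}$ upon summing over $a$.

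For local external regret the one substantive step is identifying $\Ecloserto{b}$. An edge $(i,j)$ lies in $\Ecloserto{b}$ iff $\dist(i,j)=c(i,j)+\dist(j,b)=1+\dist(j,b)$; since $\dist(i,j)\leq 1$ and $\dist(j,b)\geq 0$, this forces $\dist(j,b)=0$, that is $j=b$. So $\Ecloserto{b}=\{(i,b):i\neq b\}$ is precisely the set of edges into $b$, and $\sum_{(i,j)\in\Ecloserto{b}}R^T_{i,j}=\sum_{i\neq b}R^T_{i,b}=\sum_{i\in A}R^T_{i,b}$, the last equality again because $R^T_{b,b}=0$. Pulling the positive constant $1/D$ through $(\cdot)^+$ and then through $\max_b$ yields $R^T_{\localexternal}=\max_b\big(\tfrac1D\sum_{i\in A}R^T_{i,b}\big)^+=\tfrac1D\max_b\big(\sum_{i\in A}R^T_{i,b}\big)^+=R^T_{\external}/D$.

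The only points that require a little care are the characterization of $\Ecloserto{b}$ — specifically verifying that no multi-edge path can beat the direct edge, which is exactly where the unit-length hypothesis is used — and the bookkeeping around self-loops and the positivity of $D$ needed to commute $1/D$ with $(\cdot)^+$ and $\max$. Neither is a genuine obstacle, so I expect the argument to be short; the mild surprise, rather than difficulty, is that the $1/D$ scaling baked into the definition of $R^T_{\localexternal}$ is precisely what makes the external identity come out clean rather than merely proportional.
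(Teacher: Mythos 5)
Your proof is correct and follows essentially the same route as the paper's, which simply chains the definitional equalities; the only substantive step in either argument is identifying $\Ecloserto{b}$ as the set of edges into $b$, which you verify explicitly where the paper leaves it implicit. Your bookkeeping about the diagonal terms $R^T_{a,a}=0$ and the commuting of $1/D$ with $(\cdot)^+$ and $\max$ is a harmless elaboration of the same computation.
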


\ifincludeinlineproofs
\begin{proof}
\begin{align}
R^T_{\localinternal} &= \max_{(a,b) \in E} R^{T,+}_{a,b} = \max_{a,b \in A} R^{T,+}_{a,b} = R^T_{\internal} \\
R^T_{\localswap} &= \sum_{a \in A} \max_{b : (a,b)\in E} R^{T,+}_{a,b} = 
\sum_{a \in A} \max_{b \in A} R^{T,+}_{a,b} = R^T_{\swap} \\
R^T_{\localexternal} &= \max_{b \in A} \sum_{(i,j) \in \Ecloserto{b}} R^{T,+}_{i,j} / D \\
&= 1/D \max_{b \in A} \sum_{a \in A} R^{T,+}_{a,b} = R^T_{\external} / D
\end{align}
\end{proof}
\else
The proofs of the paper's theorems are not included for space reasons.  When there is a useful insight, we discuss the proof techniques and implications.  The full proofs can be found in the longer version of this work available as a technical report~\cite{12icml-localregret-tr}. 
\fi

So our concepts of local regret match up with global regret when the graph is complete.  Of course, we are not really interested in complete graphs, but rather more intricate locality structures with a large or infinite number of vertices, but a small out-degree.  Before going on to present algorithms for minimizing local regret, we consider possible graphs for three different online decision tasks to illustrate where the graphs come from and what form they might take.

\begin{example}[Online Max-3SAT]\rm
\label{ex:max-cnf} 
Consider an online version of Max-3SAT.  The task is to choose an assignment for $n$ boolean variables: $A = \{0, 1\}^n$.  After an assignment is chosen a clause is observed; the utility is 1 if the clause is satisfied by the chosen assignment, 0 otherwise.  Note that $|A| = 2^n$ which is computationally intractable for global regret concepts if $n$ is even moderately large.  One possible locality graph for this hypothesis class is the hypercube with an edge from $a$ to $b$ if and only if $a$ and $b$ differ on the assignment of exactly one variable\ifspacetight\else~(see Figure~\ref{fig:graphs:hypercube})\fi, and all edges have unit lengths.  So the out-degree $D$ for this graph is only $n$.  Local regret, then, corresponds to the regret for not having changed the assignment of just one variable.  In essence, minimizing this concept of regret is the online equivalent of local search (e.g.,  WalkSAT~\citep{SeKaCo93}) on the maximum satisfiability problem, an offline task where all of the clauses are known up front.
\end{example}
\ifspacetight\else
\iftwocolumns
\def\figscale{0.34}
\begin{figure}
\centering
\includegraphics[scale=\figscale]{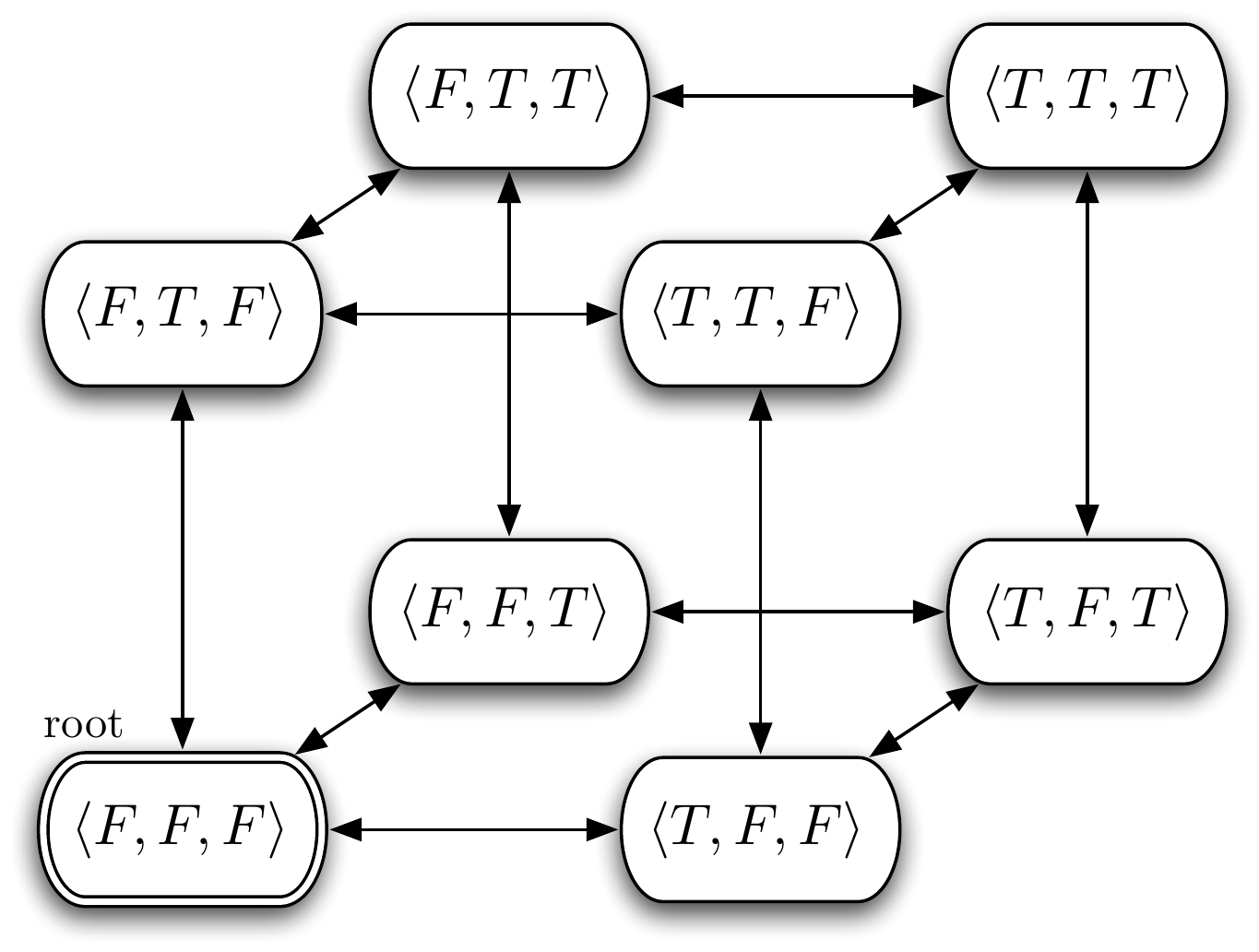}
\caption{Example graph for Max-3SAT and disjuncts ($n=3$).}%
\label{fig:graphs:hypercube}
\end{figure}
\else
\begin{figure}
\centering
\subfigure[]{%
\includegraphics[width=0.42\hsize]{figs/hypercube}
\label{fig:graphs:hypercube}}
\subfigure[]{%
\includegraphics[width=0.55\hsize]{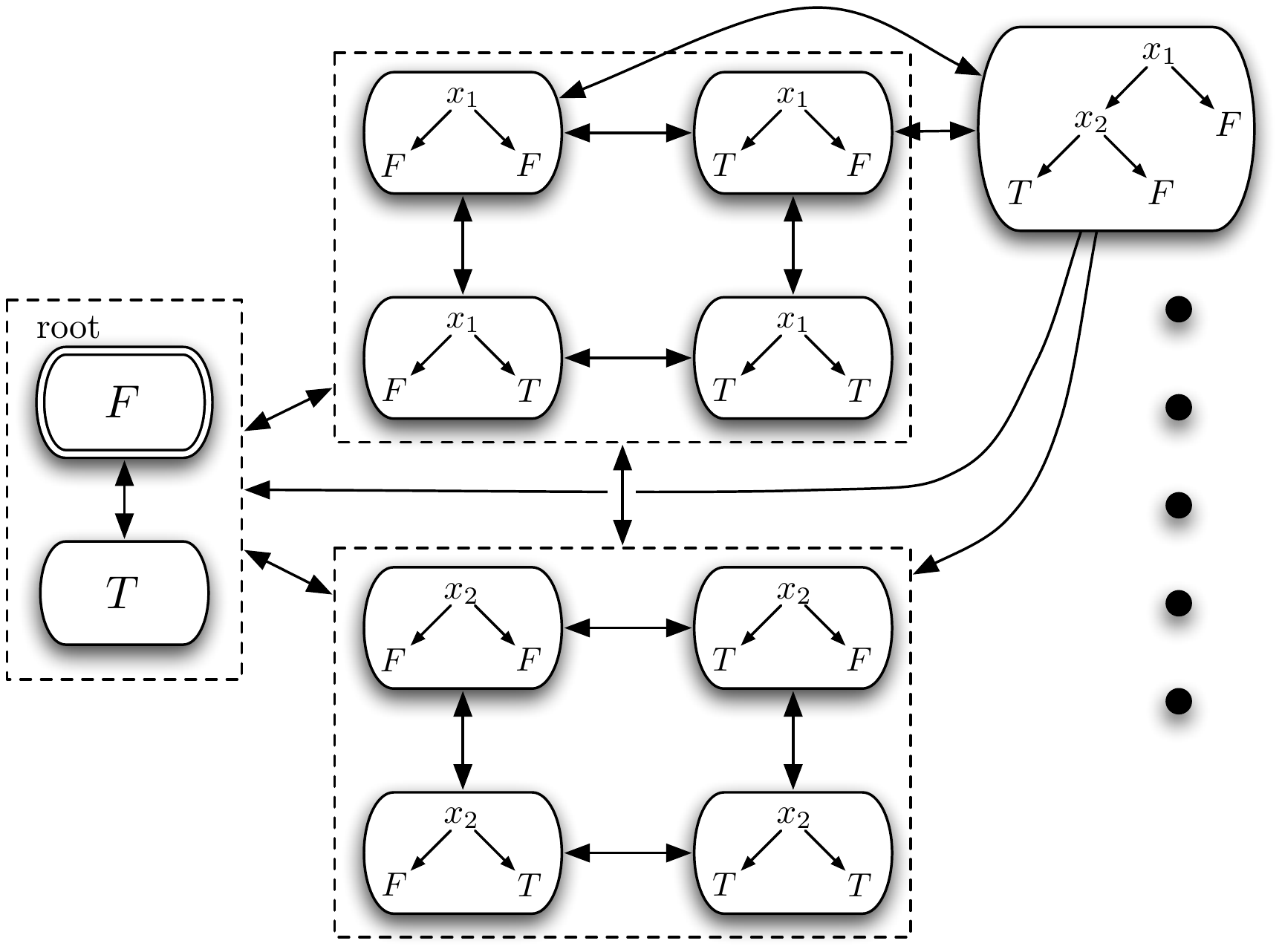}
\label{fig:graphs:dtree}}
\caption{Example graphs.  (a) Graph for Max-3SAT and disjuncts ($n=3$).  (b) Part of graph for decision trees ($n=2$), where edges to and from the dashed boxes represent edges to and from every vertex in the box.}
\end{figure}
\fi
\fi
\begin{example}[Online Disjunct Learning]\rm
\label{ex:disjuncts} 
Consider a boolean online classification task where input features are boolean vectors $x \in \{0,1\}^n$ and the target $y$ is also boolean.  Consider $A = \{0,1\}^n$, to be the set of all disjuncts such that $a \in A$ corresponds to the disjunct $x_{i_1} \vee x_{i_2} \vee \ldots \vee x_{i_k}$ where $i_{1\le j\le k}$ are all of the $k$ indices of $a$ such that $a_{i_j} = 1$.  In this online task, one must repeatedly choose a disjunct and then observe an instance which includes a feature vector and the correct response.  There is a utility of 1 if the chosen disjunct over the feature vector results in the correct response; 0 otherwise.   Although a very different task, the action space $A = \{0, 1\}^n$ is the same as with Online Max-SAT and we can consider the same locality structure as that proposed for disjuncts: a hypercube with unit length edges for adding or removing a single variable to the disjunction\ifspacetight\else~(see Figure~\ref{fig:graphs:hypercube})\fi.  
And as before $|A| = 2^n$ while $D = n$.  
\end{example}

\begin{example}[Online Decision Tree Learning]\rm
\label{ex:dtree} 
Imagine the same boolean online classification task for learning disjuncts, but the hypothesis class is the set of all possible decision trees.  The number of possible decision trees for $n$ boolean variables is more than a staggering $2^{2^n}$, which for any practical purpose is infinite.  We can construct a graph structure that mimics the way decision trees are typically constructed offline, such as with C4.5~\citep{Quinlan93}.  In the graph $G$, add an edge from one decision tree to another if and only if the latter can be constructed by choosing any node (internal or leaf) of the former and replacing the subtree rooted at the node with a decision stump or a label.  There is one exception: you cannot replace a non-leaf subtree with a stump splitting on the same variable as that of the root of the subtree.  \ifspacetight\else See Figure~\ref{fig:graphs:dtree} for a portion of the graph. \fi  Edges that replace a subtree with a label have length 1, while edges replacing a subtree with a stump (being a more complex change) have distance 1.1.  So, we have local regret for not having further refined a leaf or collapsing a subtree to a simpler stump or leaf.  Notice that the graph edges in this case are not all symmetric (\viz, collapsing edges).  In essence, this is the online equivalent of tree splitting algorithms.  While $|A| \ge 2^{2^n}$, the out-degree is no more than $(n+1)2^{n+1}$.  The maximum size of the out-degree still appears disconcertingly large, and we will return to this issue in Section~\ref{sec:exploiting-locality} where we show how we can exploit the graph structure to further simplify learning.
\end{example}

\ifspacetight\else
\iftwocolumns
\begin{figure}
\centering
\includegraphics[scale=\figscale]{figs/dtree}
\caption{Part of an example graph for decision trees ($n=2$).}
\label{fig:graphs:dtree}
\end{figure}
\fi
\fi

\section{An Algorithm for Local Swap Regret}
\label{sec:analgorothmforlocalswapregret}
We now present an algorithm for minimizing local swap regret, similar to global swap regret algorithms~\citep{HarMas02,GreJaf03}, but with substantial differences.  The algorithm essentially chooses actions according to the stationary distribution of a Markov process on the graph, with the transition probabilities on the edges being proportional to the accumulated regrets.  However there are two caveats that are needed for it to handle infinite graphs: it is prevented from playing beyond a particular distance from a designated $\root$ vertex, and there is an internal bias towards the actual actions chosen.  

Formally, let $\root$ be some designated vertex.  Define $\dist_1$ to be the unweighted shortest path distance between two vertices. Define the level of a vertex as its distance from $\root$: $\level(v) = \dist_1(\root, v)$.  Note that, $\level(\root) = 0$, and $\forall (i,j) \in E$, $\level(j) \le \level(i) + 1$.
All of the algorithms in this paper take a parameter $L$, and will never choose actions at a level greater than $L$.  In addition, the algorithms all maintain values $\tilde{R}^t_{i,j}$ (which are biased versions of $R^t_{i,j}$) and use these to compute $\pi^t_j$, the probability of choosing action $j$ at time $t$.  These probabilities are always computed according to the following requirement, which is a generalization of~\citep{HarMas02,GreJaf03}.

\begin{requirement}\raggedright
Given a parameter $L$, for all $t \le T$, and some $\tilde{R}^{t,+}_{i,j}$ let $\pi^{t+1}$ be such that
\begin{enumerate}[(a)]
\item $\sum_{j \in V} \pi^{t+1}_j = 1$, and $\forall j\in V$, $\pi^{t+1}_j \ge 0$
\item $\forall j\in V$ such that $\level(j) > L$, $\pi^{t+1}_j = 0$.
\item $\forall j\in V$ such that $1 \le \level(j) \le L$,
$\pi^{t+1}_j = \sum_{i:(i,j)\in E} (\tilde{R}^{t,+}_{i,j} / M) \pi^{t+1}_i + (1 - \sum_{k:(j,k)\in E} \tilde{R}^{t,+}_{j,k}/M) \pi^{t+1}_j$
\item $\pi^{t+1}_\root = \sum_{i:(i,\root)\in E} (\tilde{R}^{t,+}_{i,\root} / M)\pi^{t+1}_i+\sum_{j : \level(j) = L+1} \sum_{i:(i,j)\in E} (\tilde{R}^{t,+}_{i,j} / M)\pi^{t+1}_i + (1 - \sum_{j : (\root,j)\in E} \tilde{R}^{t,+}_{\root,j} / M) \pi^{t+1}_\root$
\item If there exists $j \in V$ such that $\pi^{t+1}_j > 0$ and $\sum_{k : (j,k) \in E} \tilde{R}^{t,+}_{j,k} = 0$, then for all $j \in V$ where $\pi^{t+1}_j > 0$, $\sum_{k : (j,k) \in E} \tilde{R}^{t,+}_{j,k} = 0$, and we call such a $\pi^{t+1}$ {\em degenerate}.
\end{enumerate}
where $M = \max_{(i,j) \in E} \tilde{R}^{t,+}_{i,j}$.  These conditions require $\pi^{t+1}$ to be the stationary distribution of the transition function whose probabilities on outgoing edges are proportional to their biased positive regret, with the root vertex as the starting state, and all outgoing transitions from vertices in level $L$ going to the root vertex instead.
\label{req:stationary-distribution}
\end{requirement}

\begin{definition}
$(b, L)$-regret matching is the algorithm that initializes $\tilde{R}^0_{i,j} = 0$, chooses actions at time $t$ according to a distribution $\pi^t$ that satisfies Requirement~\ref{req:stationary-distribution} and after choosing action $i$ and observing $u^t$ updates $\tilde{R}^{t}_{i,j} = \tilde{R}^{t-1}_{i,j} + (u^t(j) - u^t(i) - b)$ for all $j$ where $(i,j) \in E$, and for all other $(k,l)\in E$ where $k\ne i$, $\tilde{R}^{t}_{k,l} = \tilde{R}^{t-1}_{k,l}$.
\label{def:alg-local-swap}
\end{definition}
There are two distinguishing factors of our algorithm from~\citep{HarMas02,GreJaf03}: $\tilde{R}\ne R$, and past a certain distance from the root, we loop back.  $\tilde{R}$ differs from $R$ by the bias term, $b$.  This term can be thought of as a bias toward the action selected by the algorithm.  This is \textit{not} the same as approaching the negative orthant with a margin for error.  This small amount is only applied to the action taken, which is very different from adding a small margin of error to \textit{every} edge.

\begin{restatable}{theorem}{thmlocalswap}
For any directed graph with maximum out-degree $D$ and any designated vertex $\root$, $(\Delta/ (L+1), L)$-regret matching, after $T$ steps, will have expected local swap regret no worse than,
\begin{align}
\frac{1}{T} E[R^T_{\mathrm{localswap}}] \le 
\frac{\Delta}{L+1} + \frac{\Delta \sqrt{D |E_L|}}{\sqrt{T}}\label{eqn:localswapbound}
\end{align}
where $E_L = \{(i,j) \in E| \level(i) \le L\}$.  
\label{thm:localswap}
\end{restatable}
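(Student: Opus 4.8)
The approach is the Blackwell--approachability / regret-matching potential argument of \citet{HarMas02,GreJaf03}, modified in three ways: regret is tracked only on the edges of $E_L$ rather than on all pairs of actions; the Markov chain of Requirement~\ref{req:stationary-distribution} is confined to levels $0,\dots,L$ with its overflow rerouted to $\root$; and the recorded regrets are biased by $b$. The bias is what produces the $\Delta/(L+1)$ term, and the interaction of the rerouting with the bias is the technical heart. First I would dispose of the bias. Since the algorithm never plays an action at level $>L$, $R^T_{a,b}=0$ whenever $\level(a)>L$, and for every remaining $a$ the definition of $(b,L)$-regret matching gives $\tilde{R}^T_{a,b}=R^T_{a,b}-bN^T_a$ where $N^T_a$ counts the plays of $a$. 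Hence $(R^T_{a,b})^+\le(\tilde{R}^T_{a,b})^++bN^T_a$, and taking $\max_b$, summing over $a$, and using $\sum_a N^T_a=T$ yields
\[
R^T_{\localswap}\ \le\ \sum_{a}\max_{b:(a,b)\in E}\tilde{R}^{T,+}_{a,b}\ +\ bT .
\]
With $b=\Delta/(L+1)$ the last term contributes exactly the $\Delta/(L+1)$ summand of \eqref{eqn:localswapbound} after dividing by $T$, so it remains to show $E\big[\sum_a\max_b\tilde{R}^{T,+}_{a,b}\big]\le\Delta\sqrt{D|E_L|\,T}$.

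For that I would track the potential $\Phi^t=\sum_{(i,j)\in E_L}(\tilde{R}^{t,+}_{i,j})^2$, a finite sum since only finitely many regrets are ever nonzero. Writing $z^t_j=u^t(j)-u^t(a^t)$ and applying the elementary inequality $((x+w)^+)^2\le(x^+)^2+2x^+w+w^2$ to each updated edge, then taking expectation over $a^t\sim\pi^t$ conditioned on the history $\mathcal{F}^{t-1}$ (which fixes $\tilde{R}^{t-1}$, $\pi^t$, and the adversary's $u^t$),
\[
E[\Phi^t-\Phi^{t-1}\mid\mathcal{F}^{t-1}]\ \le\ 2Y^t+\sum_i\pi^t_i\!\!\sum_{j:(i,j)\in E}\!\!(z^t_j-b)^2,
\qquad Y^t:=\sum_i\pi^t_i\!\!\sum_{j:(i,j)\in E}\!\!\tilde{R}^{t-1,+}_{i,j}(z^t_j-b).
\]
The second-order term is $O(D\Delta^2)$ since each vertex has at most $D$ out-edges and each summand is at most $(\Delta+b)^2$; the crux is to show $Y^t\le 0$.

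To bound $Y^t$ I would invoke the stationarity conditions of Requirement~\ref{req:stationary-distribution}: $\pi^t$ is invariant for the chain that, from $i$, ``uses edge $(i,j)$'' with probability $\tilde{R}^{t-1,+}_{i,j}/M$, landing at $j$ if $\level(j)\le L$ and at $\root$ if $\level(j)=L+1$. Substituting the flow-balance equations (inflow equals outflow at every state, with the level-$(L{+}1)$ edge-flow rerouted into $\root$) makes the $u^t$-dependent parts of $Y^t$ telescope, leaving only a boundary contribution:
\[
Y^t\ =\ M\!\!\sum_{j:\level(j)=L+1}\!\!\big(u^t(j)-u^t(\root)\big)\,\mathrm{In}_j\ -\ bW^t ,
\]
where $\mathrm{In}_j$ is the stationary edge-flow into the level-$(L{+}1)$ vertex $j$, and $W^t=\sum_i\pi^t_i\sum_j\tilde{R}^{t-1,+}_{i,j}=M\rho^t$ with $\rho^t$ the total per-step flow. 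Thus $Y^t\le\Delta M r^t-bM\rho^t$ with $r^t=\sum_{\level(j)=L+1}\mathrm{In}_j$. The key combinatorial claim is $\rho^t\ge(L+1)r^t$: in the stationary chain the expected one-step change of $\level(\cdot)$ is zero, each reroute lowers $\level$ by exactly $L$ while every non-reroute move raises it by at most $1$, so the total ``up''-flow is at least $L\,r^t$, whence $\rho^t\ge(\text{up-flow})+r^t\ge(L+1)r^t$. Since $b=\Delta/(L+1)$ this gives $bW^t=bM\rho^t\ge\Delta M r^t$, hence $Y^t\le 0$. (The degenerate case of Requirement~\ref{req:stationary-distribution}(e) is immediate: there every reachable $i$ has $\tilde{R}^{t-1,+}_{i,j}=0$, so $Y^t=0$.)

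Combining the pieces, $E[\Phi^t-\Phi^{t-1}\mid\mathcal{F}^{t-1}]=O(D\Delta^2)$, so $E[\Phi^T]=O(TD\Delta^2)$. Finally $\max_b\tilde{R}^{T,+}_{a,b}\le\big(\sum_b(\tilde{R}^{T,+}_{a,b})^2\big)^{1/2}$ and at most $|E_L|$ source vertices contribute, so Cauchy--Schwarz gives $\sum_a\max_b\tilde{R}^{T,+}_{a,b}\le\sqrt{|E_L|\,\Phi^T}$; taking expectations, applying Jensen, combining with the first display, and dividing by $T$ yields a bound of the form \eqref{eqn:localswapbound}. I expect the main obstacle to be the third paragraph: performing the flow-balance cancellation for the rerouting chain and proving $\rho^t\ge(L+1)r^t$, which is exactly where the calibration $b=\Delta/(L+1)$ is forced. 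Sharpening the constant from $O(D\Delta^2)$ to the stated $\Delta^2D$ is routine bookkeeping on the second-order term.
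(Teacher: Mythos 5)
Your proposal is correct and its overall architecture coincides with the paper's: strip the bias to get the $bT=\Delta T/(L+1)$ term, run the Blackwell/regret-matching potential $\sum_{(i,j)\in E_L}(\tilde{R}^{t,+}_{i,j})^2$, reduce the drift term to a statement about the flow $f_{i,j}=\pi^{t+1}_i\tilde{R}^{t,+}_{i,j}$, and finish with Cauchy--Schwarz and Jensen. Where you genuinely diverge is in the two flow lemmas at the heart of the argument. For the boundary term, the paper telescopes level by level using node-wise conservation of $f$ on levels $1,\dots,L$ and is left with flow into level $L+1$ weighted by $u(j)$ minus flow out of the root weighted by $u(\root)$, which it must then split using a second inequality (flow into level $L+1$ is at most flow out of the root); your use of conservation of the rerouted flow $f'$ produces the difference $\sum(u^t(j)-u^t(\root))\,\mathrm{In}_j$ directly, which is cleaner and sidesteps a sign condition on $u(\root)$ that the paper's write-up implicitly assumes. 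For the key inequality $\rho^t\ge(L+1)r^t$ (the paper's Lemma~\ref{lem:levelregret-dag}), the paper runs an explicit accounting of ``backwards flow'' via quantities $\delta_i$ and a recursion $g_{L,0}=n_i-\delta_i$; your argument---apply stationarity of $\pi^{t+1}$ to the potential $\level(\cdot)$, note that non-reroute transitions raise the level by at most $1$ while each reroute lowers it by exactly $L$, and conclude that the up-flow is at least $L$ times the reroute flow---proves the same fact in a few lines and is, to my eye, more illuminating about why the calibration $b=\Delta/(L+1)$ arises. One caveat on constants: your per-step second-order bound $(\Delta+b)^2$ per edge yields a final second term of $\Delta\tfrac{L+2}{L+1}\sqrt{D|E_L|/T}$ rather than the stated $\Delta\sqrt{D|E_L|/T}$, and the ``routine bookkeeping'' you defer is not entirely innocent---the paper's own $(\Delta-b)^2$ step quietly assumes $|u^t(j)-u^t(i)-b|\le\Delta-b$. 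This affects only a constant factor bounded by $\tfrac{L+2}{L+1}\le\tfrac32$, not the structure or validity of the argument.
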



\ifincludeappendix
The proof can be found in Appendix~\ref{sec:appendix:local-swap-regret}.
\fi
The overall structure of the proof is similar to~\citep{Blackw56,HarMas02,GreJaf03} with a few significant changes. As with most algorithms based on Blackwell, if there is an action you do not regret taking, playing that action the next round is ``safe''.  If not, the key quantity in the proof is a flow $f_{i,j}=\pi^{t+1}_i \tilde{R}^{t,+}_{i,j}$ for each edge. On most of the graph, the incoming flow is equal to the outgoing flow for each node in levels 1 to $L$. Since all the flow out from the nodes on one level is equal to the flow into the next, the total flow into (and out of) each level is equal. Thus, the flow out of the last level is only $1/(L+1)$ of the total flow on all edges since there are $L+1$ levels, including the root.

Traditionally, we wish to show that the incoming flow of an action times the utility minus the outgoing flow of an action times the utility summed over all nodes is nonpositive, and then Blackwell's condition holds. In traditional proofs, for any given node, the flow in and out are equal, so regardless of the utility, they cancel. For our problem, the flow out of the last level is really a flow into the $(L+1)$st level, not the zeroeth level, so the difference in utilities between the zeroeth level and the $(L+1)$st level creates a problem. On the other hand, because we subtract $b$ from whatever action we select, we get to subtract $b$ times the total flow. Since exactly $1/(L+1)$ fraction of the flow is going into the $(L+1)$st level, these two discrepancies from the traditional approach exactly cancel.  The second term of Equation~\eqref{eqn:localswapbound} is a result of the traditional Blackwell approach.
In the final analysis, we must account for the amount $b$ we subtract from the regret each round. This means that if we get $\tilde{R}$ to approach the negative orthant, we only have $bT$ local swap regret left. This is the first term of Equation~\eqref{eqn:localswapbound}.

\section{Exploiting Locality Structure}

\label{sec:exploiting-locality}

The local swap regret algorithm in the previous section successfully drops all dependence on the size of the action set and thus can be applied even for infinite action sets.  However, the appearance of $|E_L|$ in the bound in Theorem~\ref{thm:localswap} is undesirable as $|E_L| \in O(D^L)$, and $L$ is more likely to be 100 than 2, in order to keep the first term of the bound low.  The bound, therefore, practically provides little beyond an asymptotic guarantee for even the simplest setting of Example~\ref{ex:max-cnf}.  In this section, we will appeal to (i) the structure in the locality graph, and (ii) local external regret to achieve a more practical regret bound and algorithm.

\subsection{Cartesian Product Graphs}
We begin by considering the case of $G$ having a very strong structure, where it can be entirely decomposed into a set of product graphs.  In this case, we can show that by independently minimizing local regret in the product graphs we can minimize local regret in the full graph.

\ifincludeinlineproofs
\begin{theorem}
Let $G$ be a Cartesian product of graphs, $G = G_1 \otimes \ldots \otimes G_k$ where $G_l = (V_l, E_l)$.  For all $l \in \{1, \ldots, k\}$, define $u^t_l : V_l \rightarrow \mathbb{R}$, such that $u^t_l(a_l) = u^t(\left<a^t_1, \ldots, a^t_{l-1}, a_l, a^t_{l+1}, \ldots, a^t_k\right>)$, so $u^t_l$ is a utility function on the $l$th component of the action at time $t$ assuming the other components remain unchanged.
Let $E[l] \subseteq E$ be the set of edges that change only on the $l$th component, so $\{E[l]\}_{l=1,\ldots, k}$ forms a partition of $E$.  Let $D_l \le D$ be the maximum degree of $G_l$.  Finally, define 
\begin{align}
R^{T,l}_{\localexternal} &=
\max_{b \in V_l} \left(\sum_{(i,j) \in E^b_l} \sum_{t=1}^T 1(a^t_l = i) (u^t_l(j) - u^t_l(i)) /D_l\right)^+,
\nonumber
\end{align}
where $\Ecloserto{b}_l = \{ (i, j) \in E[l]  : d(i, b_l) = c(i, j) + d(j, b_l)  \}$, i.e., it contains the edges that moves the $l$th component closer to $b_l$.  Then, $R^T_{\localexternal} \le \sum_{l=1}^k R^{T,l}_{\localexternal}$.
\end{theorem}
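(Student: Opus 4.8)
The plan is to exploit the product structure in three separate places—the distance function, the one-step regret $R^T_{i,j}$, and the maximization over the target vertex—and show that each factorizes coordinatewise, so that the full-graph quantity $R^T_{\localexternal}$ breaks into a sum over the factor graphs.

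\emph{Step 1 (distances and shortest-path edges factorize).} First I would record the standard fact that in a Cartesian product with additive edge lengths $\dist(a,b)=\sum_{l=1}^k \dist_l(a_l,b_l)$, where $\dist_l$ is shortest-path distance in $G_l$: any $a$--$b$ walk in $G$ projects, for each $l$, onto a walk in $G_l$ consisting of the steps that change coordinate $l$, its total length is the sum over $l$ of the projected lengths (each at least $\dist_l(a_l,b_l)$), and concatenating optimal per-coordinate paths attains this. Since an edge $(i,j)\in E[l]$ changes only coordinate $l$ and carries the length $c(i,j)=c_l(i_l,j_l)$, the shortest-path condition $\dist(i,b)=c(i,j)+\dist(j,b)$ collapses, after cancelling the unchanged coordinates, to $\dist_l(i_l,b_l)=c_l(i_l,j_l)+\dist_l(j_l,b_l)$. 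Hence an edge $(i,j)\in E[l]$ lies in $\Ecloserto{b}$ iff its projection $(i_l,j_l)$ lies in $\Ecloserto{b}_l$, and (using that $\{E[l]\}$ partitions $E$) $\Ecloserto{b}$ is the disjoint union over $l$ of the edges of $E[l]$ whose $l$-projection lies in $\Ecloserto{b}_l$; this collection depends on $b$ only through $b_l$.

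\emph{Step 2 (regret factorizes; collapse the free coordinates).} For $(i,j)\in E[l]$, when $a^t=i$ all coordinates of $a^t$ equal those of $i$, and $j$ agrees with $i$ off coordinate $l$, so $u^t(i)=u^t_l(i_l)$ and $u^t(j)=u^t_l(j_l)$; therefore $R^T_{i,j}=\sum_{t=1}^T 1(a^t_l=i_l)\,1(a^t_{-l}=i_{-l})\,(u^t_l(j_l)-u^t_l(i_l))$, where $a^t_{-l}$ denotes the coordinates other than the $l$-th. Fixing a target $b$ and grouping $\sum_{(i,j)\in\Ecloserto{b}}R^T_{i,j}$ by the changed coordinate $l$, an edge in $\Ecloserto{b}\cap E[l]$ is specified by a projection $(p,q)\in\Ecloserto{b}_l$ together with a free tuple $w=i_{-l}=j_{-l}$ over the remaining coordinates; summing over $w$ uses $\sum_w 1(a^t_{-l}=w)=1$, so $\sum_{(i,j)\in\Ecloserto{b}\cap E[l]}R^T_{i,j}=\sum_{(p,q)\in\Ecloserto{b}_l}\sum_{t=1}^T 1(a^t_l=p)(u^t_l(q)-u^t_l(p))=:T_l(b_l)$, which is exactly the bracket in $R^{T,l}_{\localexternal}$ before dividing by $D_l$ and taking the positive part. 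Thus $\sum_{(i,j)\in\Ecloserto{b}}R^T_{i,j}/D=\tfrac1D\sum_{l=1}^k T_l(b_l)$.

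\emph{Step 3 (the outer maximum splits).} Because $b$ ranges over the product $V_1\times\cdots\times V_k$ and the $l$-th summand depends only on $b_l$, $\max_b\sum_l T_l(b_l)=\sum_l\max_{b_l\in V_l}T_l(b_l)$; applying $(\cdot)^+$, using $\bigl(\sum_l x_l\bigr)^+\le\sum_l x_l^+$ together with $\bigl(\max_{b_l}T_l(b_l)\bigr)^+=\max_{b_l}\bigl(T_l(b_l)\bigr)^+=D_l\,R^{T,l}_{\localexternal}$, and finally $D_l\le D$, gives $R^T_{\localexternal}=\tfrac1D\bigl(\sum_l\max_{b_l}T_l(b_l)\bigr)^+\le\tfrac1D\sum_l D_l\,R^{T,l}_{\localexternal}\le\sum_{l=1}^k R^{T,l}_{\localexternal}$, as claimed. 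Everything after Step 1 is bookkeeping—regrouping the edge sum, collapsing the free coordinates, subadditivity of $(\cdot)^+$, and the harmless slack $D_l\le D$—so I expect the only step that genuinely uses the hypothesis, and the one to state carefully, to be Step 1: the coordinatewise decomposition of $\dist$ and the resulting identification of $\Ecloserto{b}\cap E[l]$ with edges of $E[l]$ whose $l$-projection lies in $\Ecloserto{b}_l$, where the edge-length convention on $G$ and the meaning of $\dist(i,b_l)$ must be pinned down.
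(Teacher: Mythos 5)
Your proposal is correct and follows essentially the same route as the paper's proof: partition $\Ecloserto{b}$ by the changed coordinate $l$, push the positive part and the maximum over $b$ inside the sum over $l$, use $D_l \le D$, and collapse the free coordinates via $\sum_w 1(a^t_{-l}=w)=1$ to identify each per-coordinate term with $R^{T,l}_{\localexternal}$. The only difference is that you explicitly justify the key identification of $E[l]\cap \Ecloserto{b}$ with the edges projecting into $\Ecloserto{b}_l$ via the additive factorization of shortest-path distances, a step the paper's proof asserts without comment.
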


\begin{proof}
\begin{align}
R^T_{\localexternal}
&= \max_{b \in V} \left(\sum_{(i,j) \in \Ecloserto{b}} R^T_{i,j}/D\right)^+ \\
&= \max_{b \in V} \left(\sum_{l=1}^k \sum_{(i,j) \in E[l] \cap \Ecloserto{b}} R^T_{i,j} /D\right)^+ \\
&\le \sum_{l=1}^k \max_{b \in V} \left(\sum_{(i,j) \in E[l] \cap \Ecloserto{b}} R^T_{i,j} /D\right)^+ \\
\intertext{Since $D_l \le D$,}
&\le \sum_{l=1}^k \max_{b \in V} \left(\sum_{(i,j) \in E[l] \cap \Ecloserto{b}} R^T_{i,j} /D_l\right)^+ \\
&= \sum_{l=1}^k \max_{b \in V} \left(\sum_{(i,j) \in E[l] \cap \Ecloserto{b}} \sum_{t=1}^T 1(a^t = i) (u^t(j) - u^t(i)) /D_l\right)^+ \\
&= \sum_{l=1}^k \max_{b \in V} \left(\sum_{(i,j) \in \Ecloserto{b}_l} \sum_{t=1}^T 1(a^t = i) (u^t_l(j) - u^t_l(i)) /D_l\right)^+ \\
&= \sum_{l=1}^k R^{T,l}_{\localexternal}
\end{align}
\end{proof}
\else
\begin{theorem}
Let $G$ be a Cartesian product of graphs, $G = G_1 \otimes \ldots \otimes G_k$.  Let $R^{T,l}_\localexternal$ be the measured external regret on the $l^{\rm th}$ component graph, where the action at time $t$ is the $l^{\rm th}$ component of $a^t$ and regret is on the edges in $G_l$ that transform the $l^{\rm th}$ component.  Then, $R^T_{\localexternal} \le \sum_{l=1}^k R^{T,l}_{\localexternal}$.
\end{theorem}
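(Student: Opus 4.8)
The plan is to unfold the definition of $R^T_{\localexternal}$ and push the outer maximum and positive-part through a partition of the edge set induced by the product structure. Write $E = E[1] \sqcup \cdots \sqcup E[k]$, where $E[l]$ is the set of edges of $G$ that change only the $l$-th coordinate; this decomposition is exactly what it means for $G$ to be the Cartesian product $G_1 \otimes \cdots \otimes G_k$. Then
\[
R^T_{\localexternal} = \max_{b\in V}\Big(\sum_{l=1}^k \sum_{(i,j)\in E[l]\cap \Ecloserto{b}} R^T_{i,j}/D\Big)^+ .
\]
Using $(x_1+\cdots+x_k)^+\le x_1^+ + \cdots + x_k^+$ together with $\max_b\sum_l g_l(b) \le \sum_l \max_b g_l(b)$, this is at most $\sum_{l=1}^k \max_{b\in V}\big(\sum_{(i,j)\in E[l]\cap\Ecloserto{b}} R^T_{i,j}/D\big)^+$; and since the maximum degree $D_l$ of $G_l$ satisfies $D_l \le D$, replacing $D$ by $D_l$ in the $l$-th term only makes it larger.

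Two facts then do the real work. First, in a Cartesian product, under the natural convention that edge lengths on $E[l]$ agree with the lengths in $G_l$, shortest-path distance decomposes coordinatewise, $\dist(a,b) = \sum_{m=1}^k \dist_m(a_m,b_m)$; hence for an edge $(i,j)\in E[l]$, whose endpoints agree on every coordinate other than $l$, the defining condition $\dist(i,b) = c(i,j) + \dist(j,b)$ reduces to $\dist_l(i_l,b_l) = c_l(i_l,j_l) + \dist_l(j_l,b_l)$. So whether such an edge lies in $\Ecloserto{b}$ depends only on its $l$-th coordinate projection and on $b_l$, and $E[l]\cap\Ecloserto{b}$ consists of all edges whose $l$-projection is an edge of $G_l$ on a shortest path to $b_l$, the spectator coordinates ranging freely. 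Second, for an edge $(i,j)$ changing only coordinate $l$ and any $t$, the summand $1(a^t=i)(u^t(j)-u^t(i))$ vanishes unless $a^t$ agrees with $i$ on every coordinate, and when it does not vanish $u^t(j)-u^t(i) = u^t_l(j_l)-u^t_l(i_l)$ directly from the definition of $u^t_l$. Summing over the spectator coordinates therefore collapses the conjunctive indicator $1(a^t=i)$ into the marginal indicator $1(a^t_l=i_l)$, so that $\sum_{(i,j)\in E[l]\cap\Ecloserto{b}} R^T_{i,j}$ equals $\sum_{(i_l,j_l)\in \Ecloserto{b_l}_l}\sum_{t=1}^T 1(a^t_l=i_l)\big(u^t_l(j_l)-u^t_l(i_l)\big)$, the sum now running over the edge set of $G_l$.

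Plugging these into the $l$-th term, and noting it depends on $b$ only through $b_l$ so $\max_{b\in V}$ becomes $\max_{b_l\in V_l}$, identifies it exactly with $R^{T,l}_{\localexternal}$, which gives $R^T_{\localexternal}\le\sum_{l=1}^k R^{T,l}_{\localexternal}$. I expect the main obstacle to be the bookkeeping in the second fact: checking carefully that a single sum over all edges of $E[l]\cap\Ecloserto{b}$ carrying the conjunctive indicator $1(a^t=i)$ reorganizes into a sum over the edges of $G_l$ carrying the marginal indicator $1(a^t_l=i_l)$ — in particular that, for each $t$, exactly one assignment of the spectator coordinates survives $1(a^t=i)$, and that it is precisely the one for which $u^t(j)-u^t(i)$ equals $u^t_l(j_l)-u^t_l(i_l)$. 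The coordinatewise distance decomposition in the first fact is standard for Cartesian products but worth stating explicitly, since $\Ecloserto{b}$ is defined through shortest-path distances rather than individual edges.
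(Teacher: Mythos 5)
Your proposal is correct and follows essentially the same route as the paper's proof: partition $E$ into the sets $E[l]$ of edges changing only the $l$-th coordinate, push the positive part and the maximum over $b$ through the sum over $l$, relax $D$ to $D_l$, and then identify each term with $R^{T,l}_{\localexternal}$. The only difference is that you explicitly justify the two identification steps (the coordinatewise decomposition of shortest-path distance, and the collapse of $1(a^t=i)$ and $u^t(j)-u^t(i)$ to their $l$-th marginals when summing over spectator coordinates) that the paper simply asserts as equalities.
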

\fi

The implication is that we if we apply independent regret minimization to each factor of our product graph, we can minimize local external regret on the full graph.  For example, consider the hypercube graphs from Example~\ref{ex:max-cnf} and \ref{ex:disjuncts}.  By applying $n$ independent external regret algorithms (the component graphs in this case are 2-vertex complete graphs), the overall local external regret for the graph is at most $n$ times bigger than the factors' regrets, so under regret matching it is bounded by $n\Delta \sqrt{2}/\sqrt{T}$.  Hence, we are able to handle an exponentially large graph (in $n$) with local external regret only growing linearly (in $n$).  If the component graphs are not complete graphs, then we can simply apply our local swap regret algorithm from the previous section to the graph factors, which minimizes local external regret as well.

\subsection{Color Regret}
\label{sec:colorregret}
Cartesian product graphs are a powerful, but not very general structure.  We now substantially generalize the product graph structure, which will allow us to achieve a similar simplification for very general graphs, such as the graph on decision trees in Example~\ref{ex:dtree}.  The key insight of product graphs is that for any vertex $b$, an edge moves toward $b$ if and only if its corresponding edge in its component graph moves toward $b_l$.  In other words, either all of the edges that correspond to some component edge will be included in the external regret sum, or none of the eges will.  We can group together these edges and only worry about the regret of the group and not its constituents.  We generalize this fact to graphs which do not have a product structure. 

\begin{definition}
An edge-coloring $\C = \{C_i \}_{i=1,2,\ldots}$ for an arbitrary graph $G$ with edge lengths is a partition of $E$: $C_i \subseteq E$, $\bigcup_i C_i = E$, and $C_i \bigcap C_j = \emptyset$.  We say that $\C$ is admissble if and only if for all $b \in V$, $C \in \C$, and $(i,j), (i',j') \in C$, $\dist(i,b) = c(i,j) + \dist(i,b) \Leftrightarrow \dist(i',b) = c(i',j') + \dist(j', b)$.  In other words, for any arbitrary target, all of the edges with the same color are on a shortest path, or none of the edges are.
\end{definition}

We now consider treating all of the edges of the same color as a single entity for regret.  This gives us the notion of local colored regret.
\begin{align}
R^T_{\mathrm{localcolor}} &= \sum_{C \in \C} \left(\sum_{(i,j) \in C} R^T_{i,j}\right)^+
\end{align}

\begin{theorem}\raggedright
If $\C$ is admissible then $R^T_\localexternal \le R^T_\localcolor/D$.
\end{theorem}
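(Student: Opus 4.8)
The plan is to exploit admissibility to show that, for every target vertex $b \in V$, the shortest-path edge set $\Ecloserto{b}$ is a union of \emph{entire} color classes, and then to pass from the single clipped sum defining $R^T_\localexternal$ to the sum of per-color clipped sums defining $R^T_\localcolor$ using subadditivity of $x \mapsto x^+ = \max(x,0)$.

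First I would fix an arbitrary $b \in V$ and establish the ``all-or-nothing'' property: if a color class $C \in \C$ contains some edge $(i,j) \in \Ecloserto{b}$, then $C \subseteq \Ecloserto{b}$. This is exactly the admissibility condition applied at the target $b$: the hypothesis $(i,j) \in \Ecloserto{b}$ says $\dist(i,b) = c(i,j) + \dist(j,b)$, and the admissibility equivalence then forces $\dist(i',b) = c(i',j') + \dist(j',b)$, i.e.\ $(i',j') \in \Ecloserto{b}$, for every $(i',j') \in C$. Because $\C$ is a partition of $E$, this gives $\Ecloserto{b} = \bigcup_{C \in \C_b} C$ as a \emph{disjoint} union, where $\C_b := \{ C \in \C : C \subseteq \Ecloserto{b}\}$.

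Next I would chain inequalities starting from $R^T_\localexternal = \tfrac1D \max_{b \in V}\bigl(\sum_{(i,j)\in\Ecloserto{b}} R^T_{i,j}\bigr)^+$. By the disjoint decomposition, the inner sum equals $\sum_{C\in\C_b}\sum_{(i,j)\in C}R^T_{i,j}$; applying $\bigl(\sum_k x_k\bigr)^+ \le \sum_k x_k^+$ with $x_C = \sum_{(i,j)\in C}R^T_{i,j}$ gives $\bigl(\sum_{(i,j)\in\Ecloserto{b}}R^T_{i,j}\bigr)^+ \le \sum_{C\in\C_b}\bigl(\sum_{(i,j)\in C}R^T_{i,j}\bigr)^+$; and enlarging the index set from $\C_b$ to all of $\C$ only adds nonnegative positive-part terms, so this is $\le \sum_{C\in\C}\bigl(\sum_{(i,j)\in C}R^T_{i,j}\bigr)^+ = R^T_\localcolor$. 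Since $\bigl(\sum_{(i,j)\in\Ecloserto{b}}R^T_{i,j}\bigr)^+ \le R^T_\localcolor$ holds uniformly in $b$, taking the maximum over $b$ and dividing by $D$ yields $R^T_\localexternal \le R^T_\localcolor/D$.

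I do not anticipate a real obstacle; the one step requiring care is genuinely invoking the admissibility \emph{equivalence} (not merely the fact that $\C$ is a partition) for the all-or-nothing claim, since that is precisely what makes $\Ecloserto{b}$ respect the coloring. It is also worth a sanity check on the inequality direction: we are over-estimating one clipped aggregate by a sum of clipped sub-aggregates, which can only increase the value, consistent with the claimed ``$\le$''. As a consistency check, specializing $\C$ to the partition of the edges of a Cartesian product graph according to which component they change (which is admissible) recovers a bound in the spirit of the preceding product-graph theorem.
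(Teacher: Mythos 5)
Your proposal is correct and follows essentially the same route as the paper's proof: both use admissibility to show that $\Ecloserto{b}$ decomposes as a disjoint union of whole color classes $\C_b$, then apply subadditivity of $x\mapsto x^+$ over colors and enlarge the index set from $\C_b$ to $\C$ before maximizing over $b$. The only difference is expository --- you make the ``all-or-nothing'' consequence of admissibility explicit where the paper simply invokes it.
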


\ifincludeinlineproofs
\begin{proof}
\begin{align}
R^T_{\localexternal} &= \max_{b \in A} \left(\sum_{(i,j) \in \Ecloserto{b}} R^T_{i,j}/D\right)^+ \\
&= \max_{b \in A} \left(\sum_{C\in \C} \sum_{(i,j) \in C\cap \Ecloserto{b}} R^T_{i,j}/D\right)^+
\end{align}
For a particular target $b$ let $\C_b = \{C \in \C : C\subseteq \Ecloserto{b}\}$, \ie, $\C_b$ is the set of colors that reduces the distance to $b$.  Then by $\C$'s admissibility,
\begin{align}
R^T_{\localexternal} &= \max_{b \in A} \left(\sum_{C\in \C_b} \sum_{(i,j) \in C}R^T_{i,j}/D\right)^+ \\
&\le \max_{b \in A} \sum_{C\in \C_b} \left(\sum_{(i,j) \in C} R^T_{i,j}/D\right)^+ \\
&\le \sum_{C\in \C} \left(\sum_{(i,j) \in C} R^T_{i
,j}/D\right)^+ \\
&= R^T_{\localcolor} / D 
\end{align}
\end{proof}
\fi

\noindent
So by minimizing local colored regret, we minimize local external regret.  The natural extension of our local swap regret algorithm from the previous section results in an algorithm that can minimize local colored regret.

\begin{definition}
$(b, L, \C)$-colored-regret-matching is the algorithm that initializes $\tilde{R}^0_C = 0$, for all $C\in \C$, 
chooses actions at time $t$ according to a distribution $\pi^t$ that satisfies Requirement~\ref{req:stationary-distribution} with $\tilde{R}^t_{i,j} \equiv \tilde{R}^t_{c(i,j)}$,
and after choosing action $i$ and observing $u^t$ at time $t$ for all $C\in \C$ updates $\tilde{R}^{t}_C = \tilde{R}^{t-1}_C +\sum_{j:(i,j)\in C} (u^t(j) - u^t(i) - b)$.
\end{definition}

\begin{restatable}{theorem}{thmcolorregretbound}
For an arbitrary graph $G$ with maximum degree $D$, arbitrarily chosen vertex $\root$, and edge coloring $\C$, $(\Delta/ (L+1), L, \C)$-colored-regret matching applied after $T$ steps will have expected local colored regret no worse than,
\[
\frac{1}{T} E[R^T_{\mathrm{localcolor}}] \le 
\frac{\Delta D}{L+1} + \frac{\Delta \sqrt{D |C_L|}}{\sqrt{T}}
\]
where $C_L = \{ C \in \C | \exists (i,j) \in C\text{~s.t.~} \level(i) \le L \}$.
\label{thm:color-regret-bound}
\label{thm:last-body}
\end{restatable}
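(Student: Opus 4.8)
The plan is to mimic the Blackwell‑approachability proof of Theorem~\ref{thm:localswap}, but keeping one coordinate per color instead of per edge. At round $t$ the algorithm plays $a^t\sim\pi^t$ and incurs the reward vector $r^t$ with $r^t_C=\sum_{j:(a^t,j)\in C}(u^t(j)-u^t(a^t)-b)$, so $\tilde R^t_C=\sum_{s\le t}r^s_C$. Unwinding the bias as in the local‑swap case, for each color $\sum_{(i,j)\in C}R^T_{i,j}=\tilde R^T_C+bN_C$ where $N_C=\sum_t|\{j:(a^t,j)\in C\}|\ge0$ and $\sum_C N_C\le DT$, since each round contributes the out‑degree of $a^t$. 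Hence $R^T_{\localcolor}\le\sum_C\tilde R^{T,+}_C+bDT$. A coordinate $\tilde R^t_C$ changes only when the source of some edge in $C$ is played, which by construction is at level $\le L$, so only colors in $C_L$ are ever positive, and Cauchy--Schwarz gives $\sum_C\tilde R^{T,+}_C\le\sqrt{|C_L|}\,\|(\tilde R^T)^+\|$. With $b=\Delta/(L+1)$ the $bDT$ term is exactly the $\Delta D/(L+1)$ in the bound, so it remains to show $E\|(\tilde R^T)^+\|=O(\Delta\sqrt{DT})$.

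For that I would run the standard potential recursion: $((x+y)^+)^2\le(x^++y)^2$ coordinatewise gives $\|(\tilde R^{t+1})^+\|^2\le\|(\tilde R^t)^+\|^2+2\langle(\tilde R^t)^+,r^{t+1}\rangle+\|r^{t+1}\|^2$, and $\|r^{t+1}\|^2=O(\Delta^2D)$ exactly as in Theorem~\ref{thm:localswap} (at most $D$ touched colors per round, each of magnitude $\le\Delta+b\le2\Delta$). Telescoping and taking expectations reduces everything to the Blackwell condition $\langle(\tilde R^t)^+,E[r^{t+1}\mid\mathcal F_t]\rangle\le0$, which, as usual, need only be checked against the worst‑case $u^{t+1}$ allowed by Requirement~\ref{req:bounded-u}, since $\pi^{t+1}$ is determined by the past and $a^{t+1}\sim\pi^{t+1}$ is conditionally independent of $u^{t+1}$.

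The heart of the argument is this inequality. Put $M=\max_C\tilde R^{t,+}_C$ and $f_{i,j}=\pi^{t+1}_i\tilde R^{t,+}_{C(i,j)}/M$ for each edge (when $M=0$ the inner product is $0$). Expanding $r^{t+1}$ and re‑indexing by edges,
\[
\langle(\tilde R^t)^+,E[r^{t+1}\mid\mathcal F_t]\rangle=M\sum_{(i,j)\in E}f_{i,j}(u^{t+1}(j)-u^{t+1}(i))-bM\sum_{(i,j)\in E}f_{i,j}.
\]
The identification $\tilde R^t_{i,j}\equiv\tilde R^t_{C(i,j)}$ does not disturb the flow analysis: Requirement~\ref{req:stationary-distribution}(c) gives $\mathrm{inflow}(v)=\mathrm{outflow}(v)$ at every $v$ in levels $1,\dots,L$ (it is stated directly in terms of $\pi^{t+1}$ and the $\tilde R^{t,+}$ values), so the first sum collapses to $M\sum_{\level(j)=L+1}(u^{t+1}(j)-u^{t+1}(\root))\,\mathrm{inflow}(j)\le M\Delta F_{L+1}$, using Requirement~\ref{req:stationary-distribution}(b),(d) (the root absorbs exactly the inflow $F_{L+1}:=\sum_{\level(j)=L+1}\mathrm{inflow}(j)$ into level $L+1$, and level $L+1$ has no outflow) together with Requirement~\ref{req:bounded-u}. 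The second sum is $MF$ with $F=\sum_{(i,j)}f_{i,j}$ the total flow. The one genuinely new step is $F_{L+1}\le F/(L+1)$: pushing node conservation level by level shows that the forward flow from level $\ell$ to level $\ell+1$ equals $F_{L+1}$ plus the backward flow from $\ell+1$ to $\ell$, hence is at least $F_{L+1}$ for each $\ell=0,\dots,L$; since a level's total outflow is at least its forward part, summing over all $L+1$ levels gives $F\ge(L+1)F_{L+1}$. With $b=\Delta/(L+1)$ this yields $\langle(\tilde R^t)^+,E[r^{t+1}\mid\mathcal F_t]\rangle\le M(\Delta F_{L+1}-bF)\le0$. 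The degenerate case of Requirement~\ref{req:stationary-distribution}(e) is trivial: then every played vertex has no outgoing positive regret, so each color with $\tilde R^{t,+}_C>0$ contributes an empty sum to $r^{t+1}$ and the inner product is zero.

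I expect the flow accounting to be the main obstacle: one has to check that tying the transition probabilities of same‑colored edges preserves vertex‑level conservation (it does, because Requirement~\ref{req:stationary-distribution}(c)--(d) are phrased via $\pi^{t+1}$ and the $\tilde R^{t,+}$ values rather than per‑edge transition rates), and that $F_{L+1}\le F/(L+1)$ survives within‑level and backward edges (it does, since the argument only ever uses ``outflow of a level $\ge$ its forward flow $\ge F_{L+1}$''). The remaining ingredients---the potential recursion, the $O(\Delta^2D)$ per‑round bound, Cauchy--Schwarz over $C_L$, and the bias bookkeeping---are routine and mirror the proof of Theorem~\ref{thm:localswap}.
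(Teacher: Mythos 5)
Your proposal is correct and follows essentially the same route as the paper's proof: unwind the bias to obtain the $bDT=\Delta DT/(L+1)$ term, apply Cauchy--Schwarz over $C_L$, run the Blackwell potential recursion with a per-round increment of $O(\Delta^2 D)$ (which, as you use implicitly and the paper states as an explicit requirement on colorings, needs each color to contain at most one edge out of any given vertex), and reduce the colored Blackwell condition to the edge-level flow inequality via the identification $\tilde R^{t,+}_{i,j}\equiv\tilde R^{t,+}_{c(i,j)}$. The only real difference is cosmetic: you re-derive the flow-conservation and level-counting argument inline, whereas the paper notes that the color-indexed sum collapses to the edge-indexed sum and invokes Lemma~\ref{lem:blackwellcondition} verbatim (your $\Delta+b$ per-round magnitude is also slightly more careful than the paper's $\Delta-b$, costing only a constant).
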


\ifincludeappendix
The proof is in Appendix~\ref{sec:appendix:local-colored-regret}.
\fi
The consequence of this bound depends upon the number of colors needed for an admissible coloring.  Very small admissible colorings are often possible.  The hypercube graph needs only $2n$ colors to give an admissible coloring, which is exponentially smaller than the total number of edges, $n2^n$.  We can also find a reasonably tight coloring for our decision tree graph example, despite being a complex asymmetric graph.

\begin{example}[Colored Decision Tree Learning]\rm
Reconsider Example~\ref{ex:dtree}\ifspacetight. \else~and the graph in Figure~\ref{fig:graphs:dtree}. \fi  Recall that an edge exists between one decision tree and another if the latter can be constructed from the former by replacing a subtree at any node (internal or leaf) with a label (edge length 1) or a stump (edge length 1.1).  We will color this edge with the pair: (i) the sequence of variable assignments that is required to reach the node being replaced, and (ii) the stump or label that replaces it.  This coloring is admissible.  We can see this fact by considering a color: the sequence of variable assignments and resulting stump or label.  If this color is consistent with the target decision tree (i.e., the sequence exists in the target decision tree, and the variable of the added stump matches the variable split on at that point in the target decision tree) then the color must move you closer to the target tree.  
\ifincludeappendix
A formal proof of its admissibility is very involved and can be found in Appendix~\ref{sec:appendix:dtree}.
\fi
\label{ex:dtree-coloring}
\end{example}

\section{Experimental Results}

The previous section presented algorithms that minimize local swap and local external regret (by minimizing local colored regret).  The regret bounds have no dependence on the size of the graph beyond the graph's degree, and so provide a guarantee even for infinite graphs.  
We now explore these algorithms' practicality as well as illustrate the generality of the concepts by applying them to a diverse set of online problems.  The first two tasks we examine, online Max-3SAT and online decision tree learning, have not previously been explored in the online setting.  The final task, online disjunct learning, has been explored previously, and will help illustrate some drawbacks of local regret.  

In all three domains we examine two algorithms.  The first minimizes local swap regret by applying $(\Delta/(L+1), L)$-regret matching with $L$ chosen specifically for the problem.  This will be labeled ``Local Swap''.  The second focuses on local external regret by using a tight, admissible edge-coloring and applying $(\Delta/(L+1), L, \C)$-colored-regret matching.  This will be labeled simply ``Local External''.

\subsection{Online Max-3SAT}
First, we consider Example~\ref{ex:max-cnf}.  We randomly constructed problem instances with $n=20$ boolean variables and 201 clauses each with 3 literals.  
On each timestep, the algorithms selected an assignment of the variables, a clause was chosen at random from the set, and the algorithm received a utility of 1 if the assignment satisfied the clause, 0 otherwise.  This was repeated for 1000 timesteps.  
The locality graph used was the $n$-dimensional hypercube from Example~\ref{ex:max-cnf}.  The admissible coloring used to minimize local external regret was the $2n$ coloring that has two colors per variable (one for turning the variable on, and one for turning the variable off).  In both cases we set $L = \infty$ and $b = 0$, since the bounds do not depend on $L$ once it exceeds 20.  This also achieved the best performance for both algorithms.  The average results over 200 randomly constructed sets of clauses are shown in Figure~\ref{fig:results-3cnf}, with 95\% confidence bars.

\iftwocolumns
\begin{figure}
\centering\begin{tabular}{cc}
(a) & \basebox{\includegraphics[width=0.75\hsize]{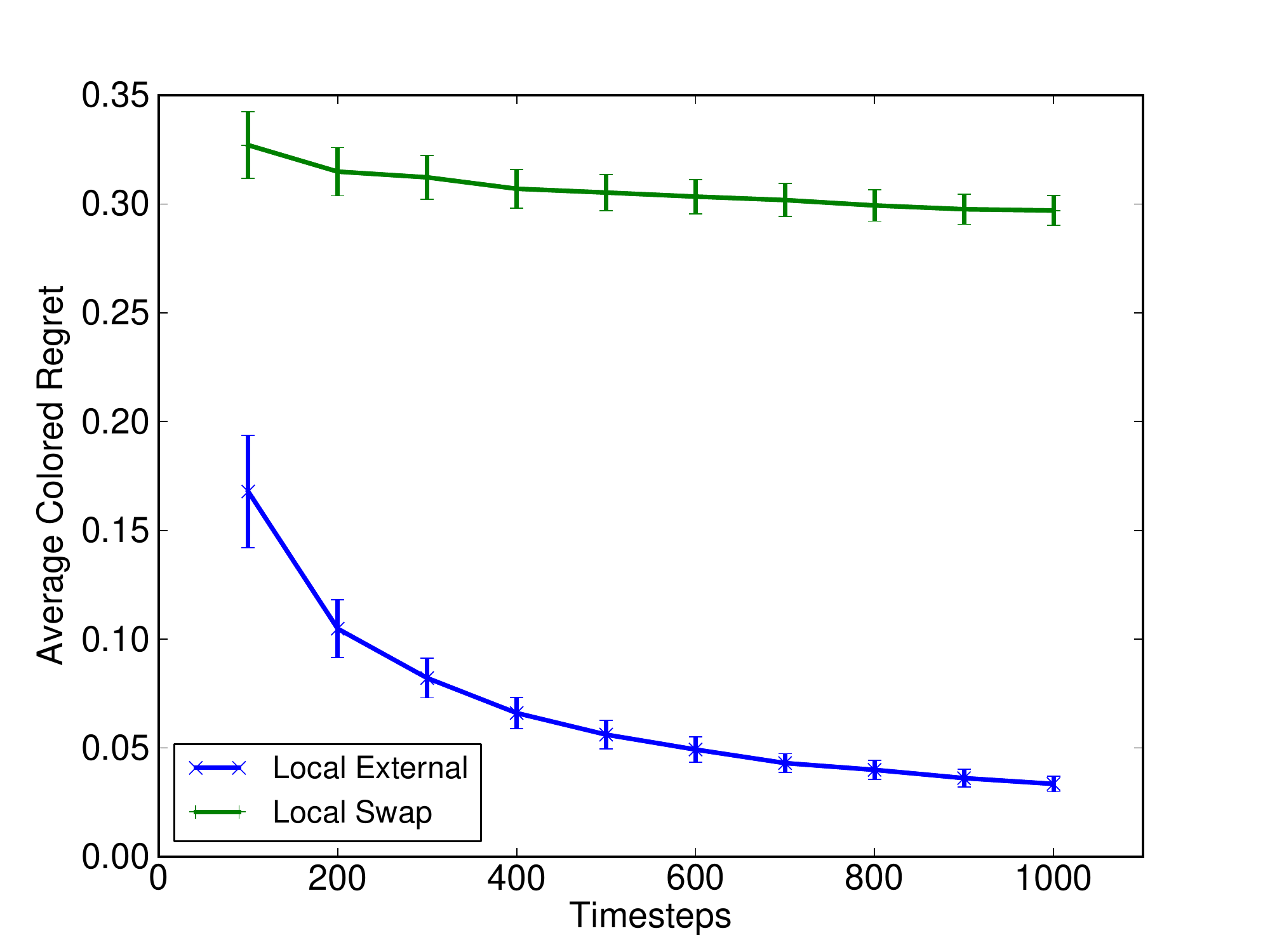}} \\
(b) & \basebox{\includegraphics[width=0.75\hsize]{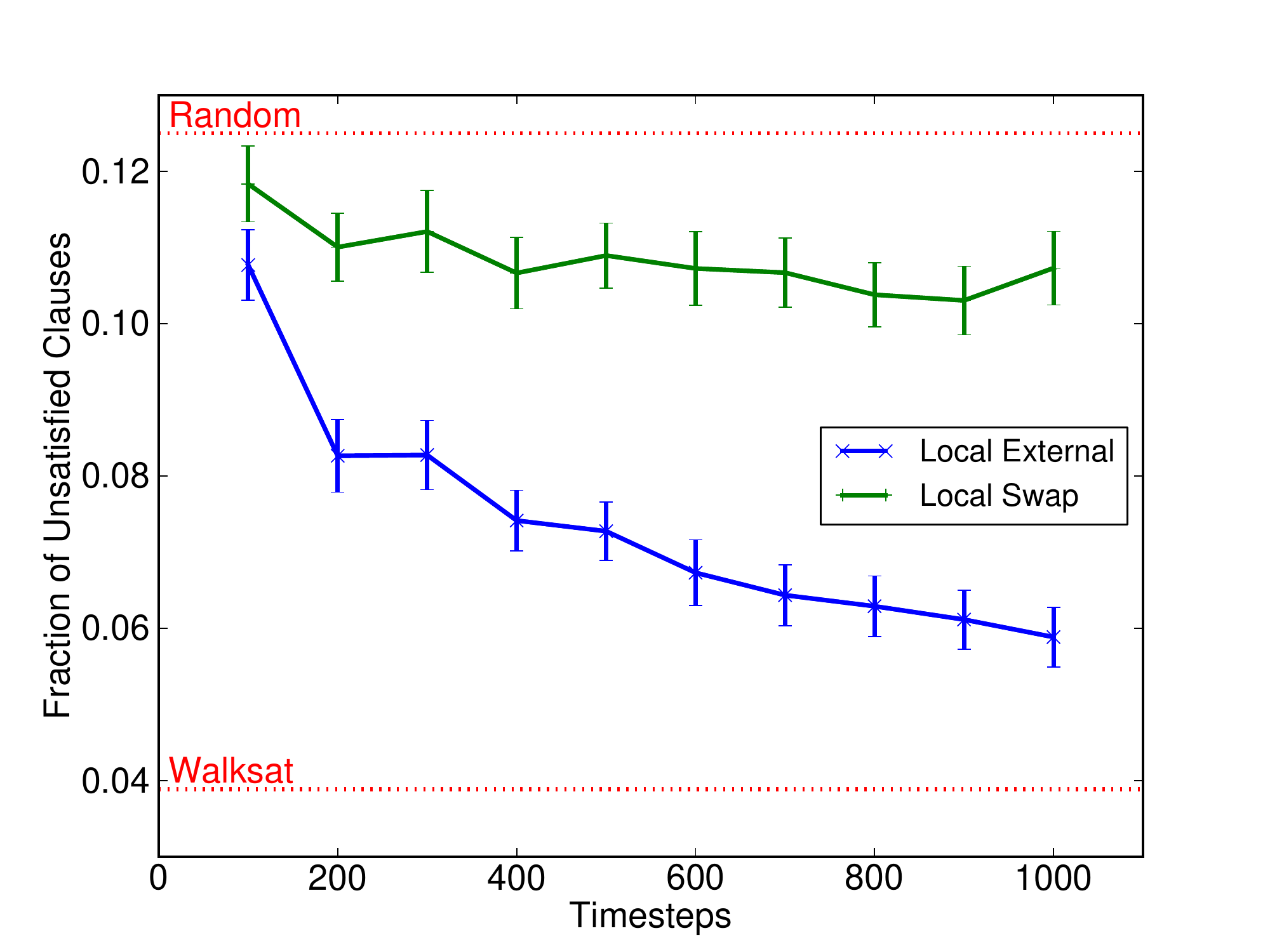}}
\end{tabular}
\caption{Results for Online Max-3SAT: (a) regret, (b) fraction of unsatisfied clauses.}
\label{fig:results-3cnf}
\end{figure}
\else
\begin{figure}
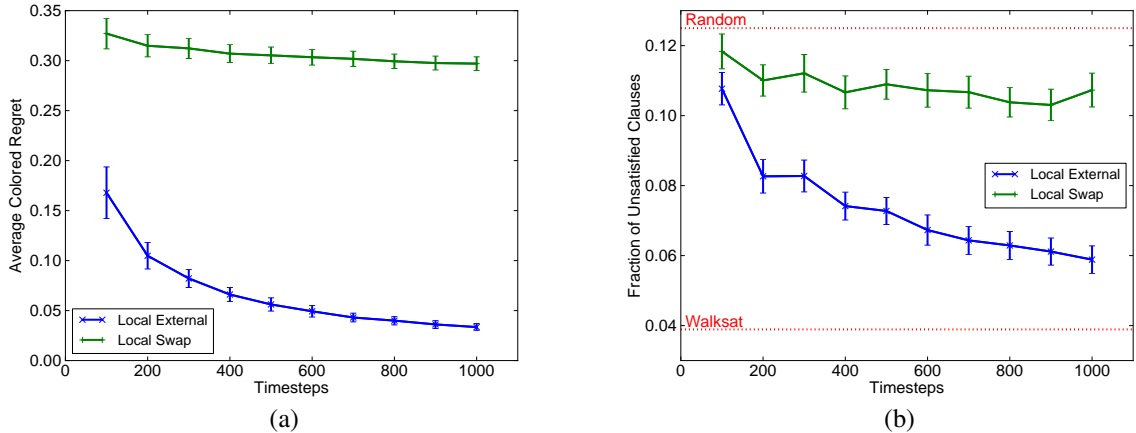

\centering\begin{tabular}{cc}
\basebox{\includegraphics[width=0.47\hsize]{figs/3cnf-regrets}} &
\basebox{\includegraphics[width=0.47\hsize]{figs/3cnf-results}} \\
(a) & (b) 
\end{tabular}
\caption{Results for Online Max-3SAT: (a) regret, (b) fraction of unsatisfied clauses.}
\label{fig:results-3cnf}
\end{figure}
\fi

Figure~\ref{fig:results-3cnf} (a) shows the time-averaged colored regret of the two algorithms, to demonstrate how well the algorithms are actually minimizing regret.  Both are decreasing over time, while external regret is decreasing much more rapidly.  As expected, swap regret may be a stronger concept, but it is more difficult to minimize.  The local external regret algorithm after only one time step can have regret for not having made a particular variable assignment, while local swap regret has to observe regret for this assignment from every possible assignment of the other variables to achieve the same result.  This is further demonstrated by the number of regret values each algorithm is tracking: local external regret on average had ~34 non-zero regret values, while local swap regret had ~4200 non-zero regret values. In summary, external regret provides a powerful form of generalization.  Figure~\ref{fig:results-3cnf} (b) shows the fraction of the previous 100 clauses that were satisfied.  Two baselines are also presented.  A random choice of variable assignments can satisfy $\frac{7}{8}$ of the clauses in expectation.  We also ran WalkSAT~\citep{SeKaCo93} offline on the set of 201 clauses, and on average it was able to satisfy all but ~4\% of the clauses, which gives an offline lower bound for what is possible.  Both substantially outperformed random, with the external regret algorithm nearing the performance of the offline WalkSat.

\subsection{Online Decision Tree Learning}
Second, we consider Example~\ref{ex:dtree}.  We took three datasets from the UCI Machine Learning Repository~(each with categorical inputs and a large number of instances): nursery, mushroom, and king-rook versus king-pawn~\citep{UCI}.  The categorical attributes were transformed into boolean attributes (which simplified the implementation of the locality graphs) by having a separate boolean feature for each attribute value.\footnote{As a result, there were $n=28$ features for nursery, $118$ features for mushroom, and $74$ features for king-rook versus king-pawn.}  We made the problems online classification tasks by sampling five instances at random (with replacement) for each timestep, with the utility being the number classified correctly by the algorithm's chosen decision tree.   This was repeated for 1000 timesteps, and so the algorithms classified 5,000 instances in total.  The locality graph used was the one described in Example~\ref{ex:dtree}.  The tight coloring used to minimize local external regret was the one described in Example~\ref{ex:dtree-coloring}.  $L$ was set to 3 for local swap regret, and 100 for local external regret, as this achieved the best performance.  Even with the far larger graph, the external regret algorithm was observing nearly one-eighth of the number of non-zero regret values observed by the local swap algorithm.
The average results over 50 trials are shown in Figure~\ref{fig:results-dtree}(a)-(c) with 95\% confidence bars.

\iftwocolumns
\begin{figure}
\centering
\begin{tabular}{cc}
(a) & \basebox{\includegraphics[width=0.75\hsize]{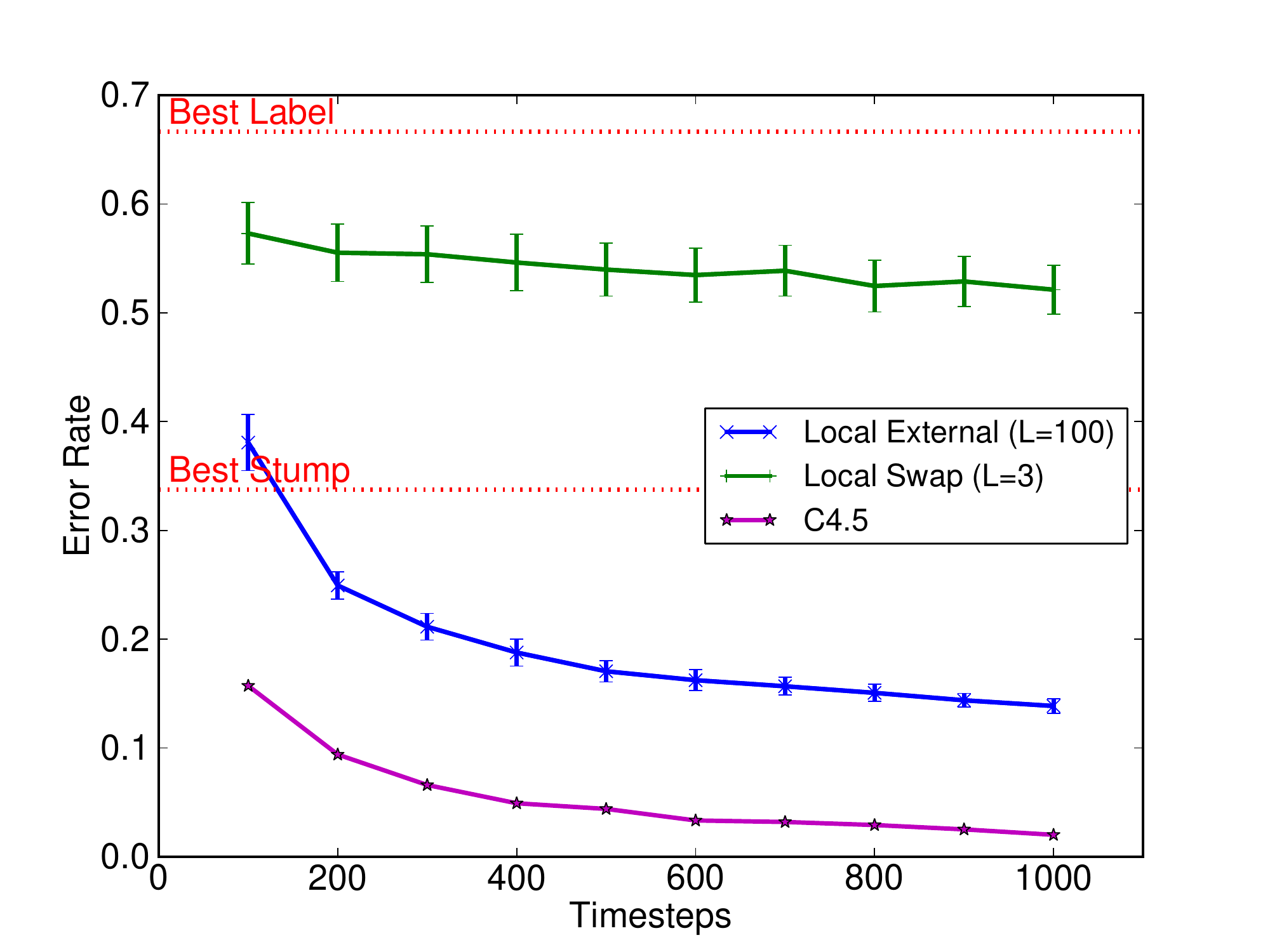}} \\
(b) & \basebox{\includegraphics[width=0.75\hsize]{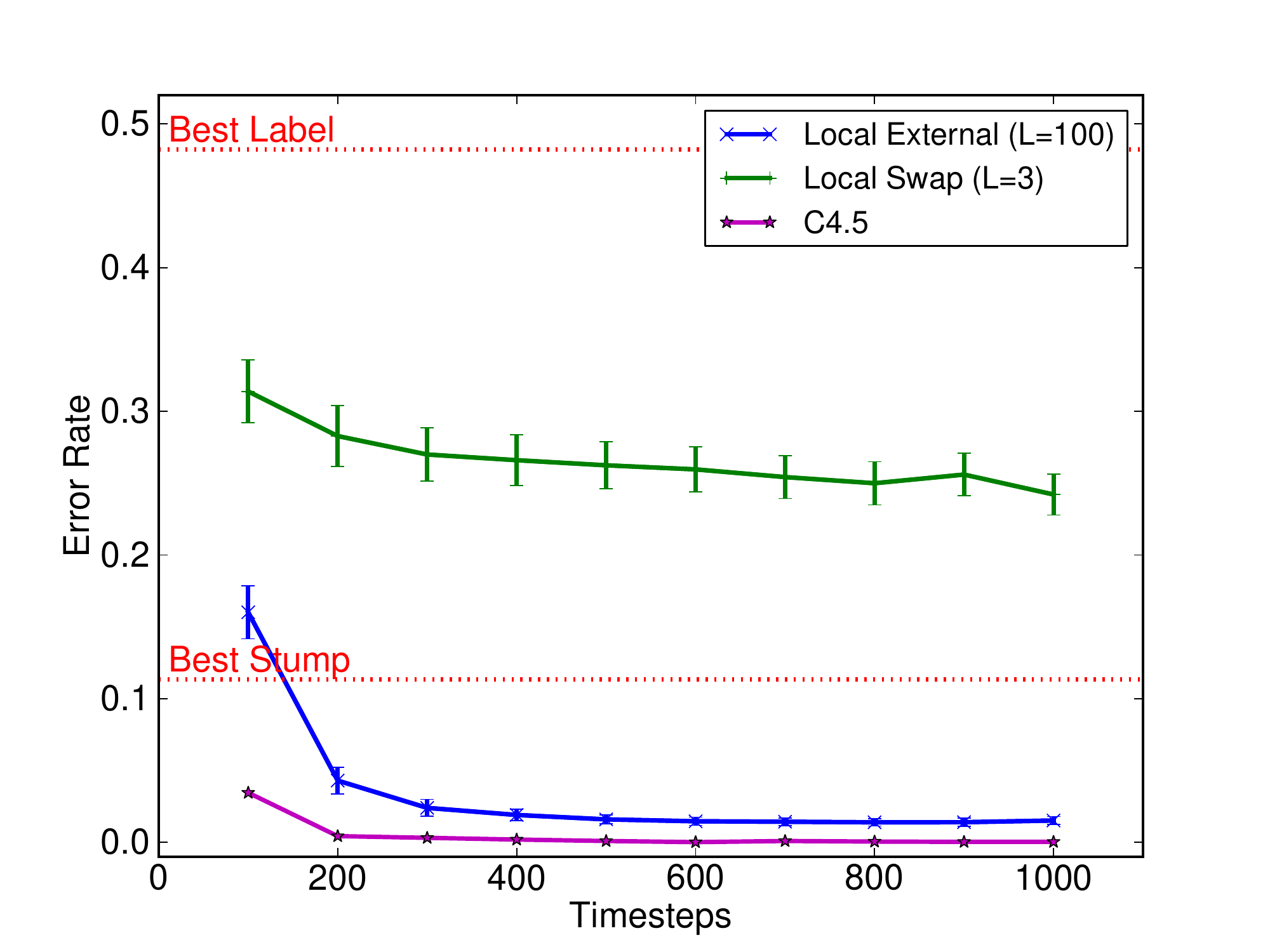}} \\
(c) & \basebox{\includegraphics[width=0.75\hsize]{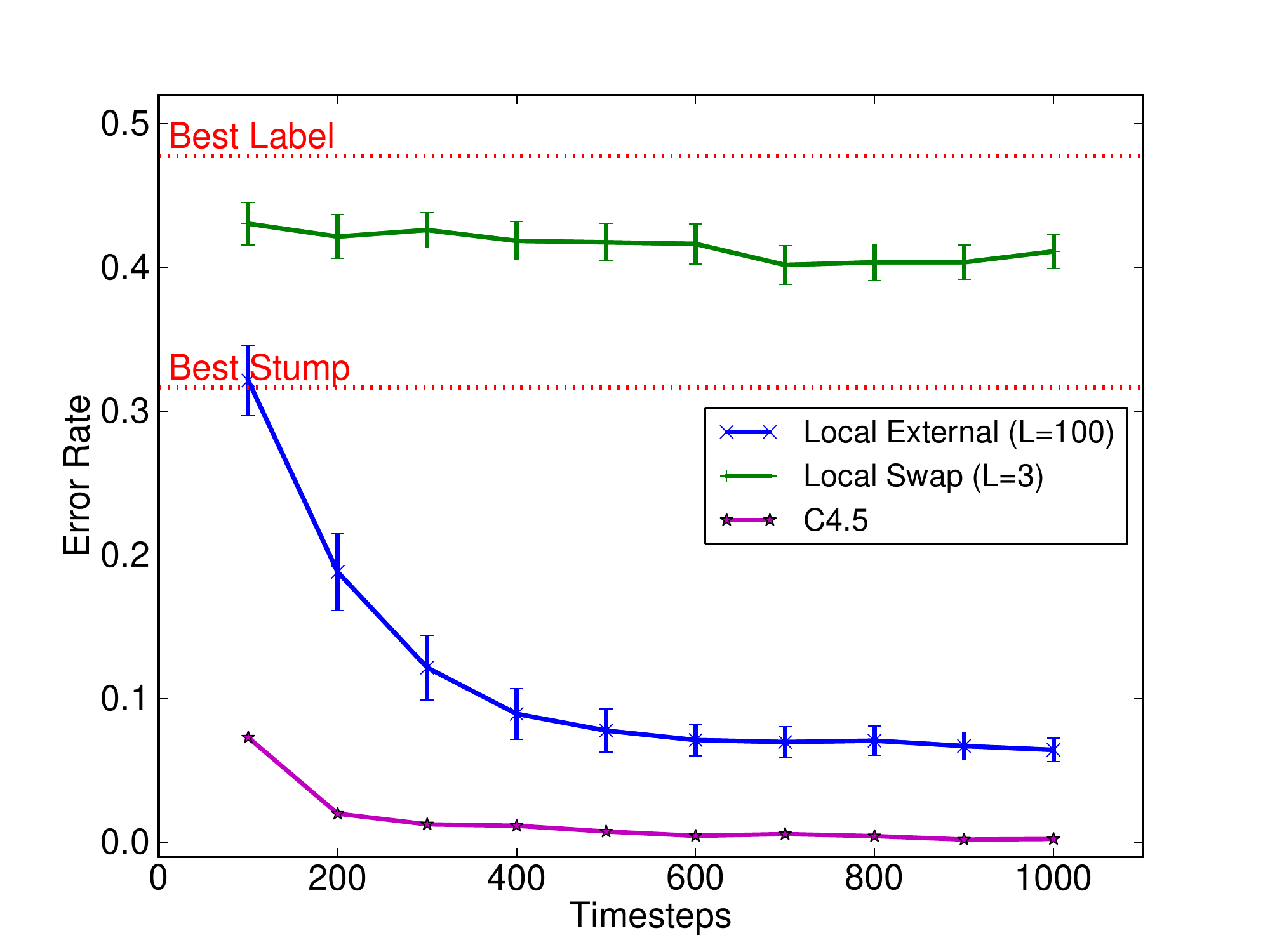}} \\
(d) & \basebox{\includegraphics[width=0.75\hsize]{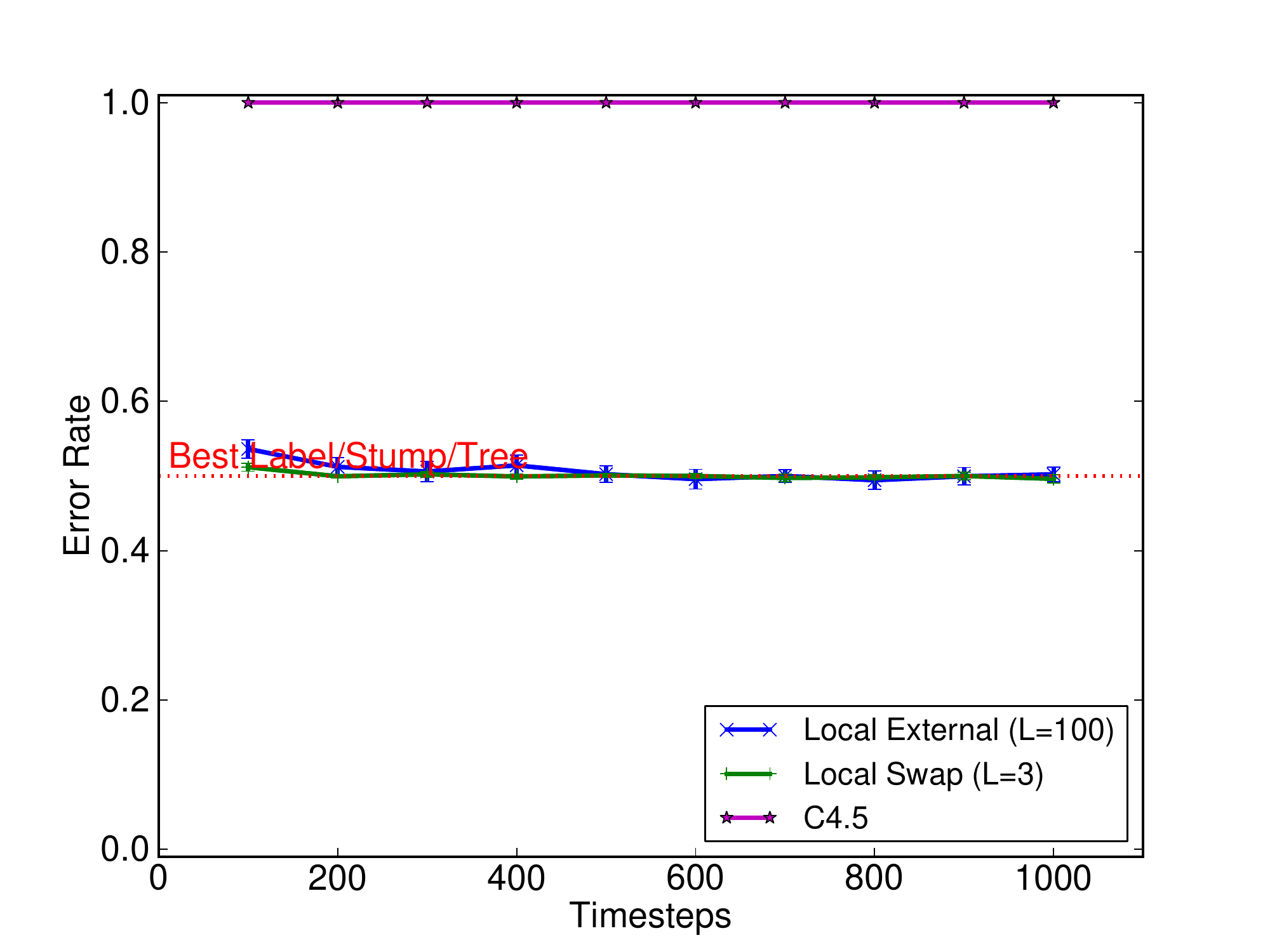}}
\end{tabular}
\caption{Results for online decision tree learning on three UCI datasets: (a) Nursery, (b) Mushroom, (c) King-Rook/King-Pawn; and (d) a simple sequence of alternating labels.}
\label{fig:results-dtree}
\end{figure}
\else
\begin{figure}
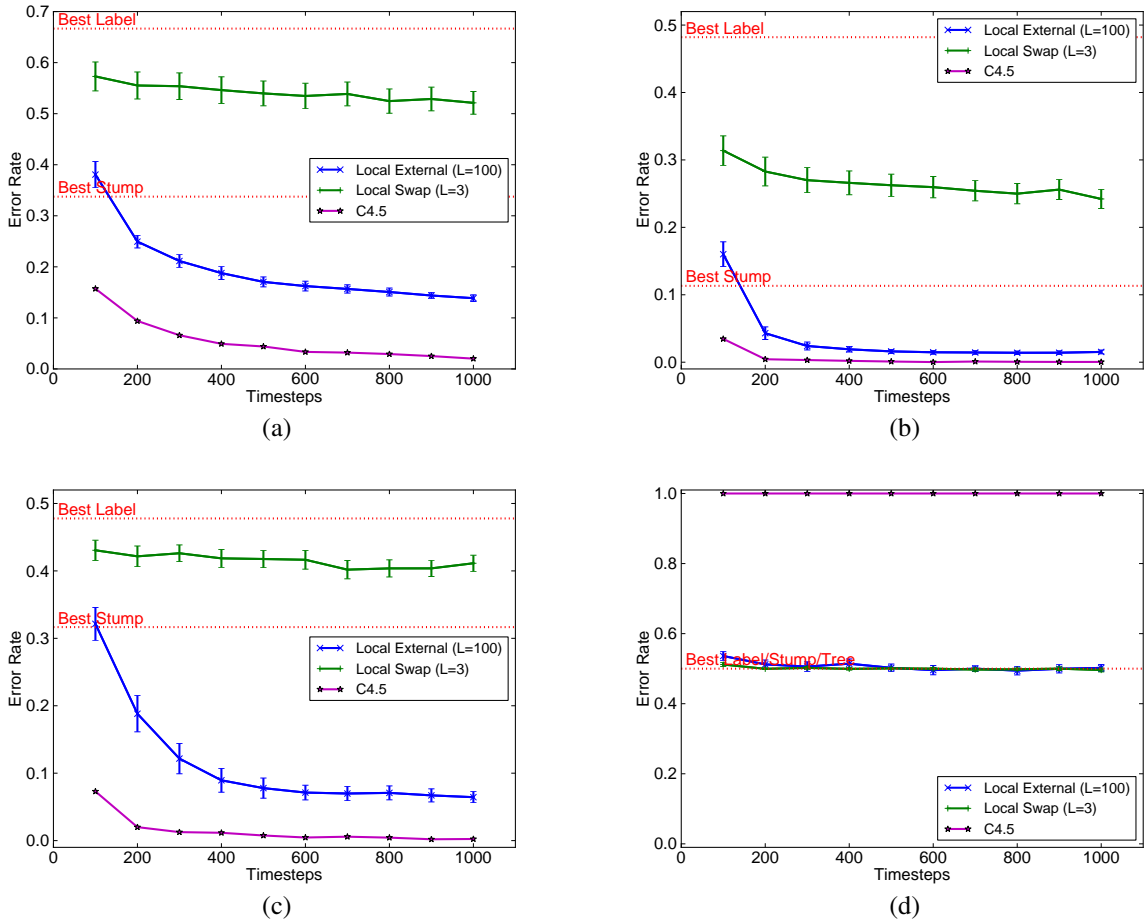

\centering
\begin{tabular}{cc}
\basebox{\includegraphics[width=0.48\hsize]{figs/nursery-results}} &
\basebox{\includegraphics[width=0.48\hsize]{figs/mushroom-results}} \\
(a) & (b) \\
\basebox{\includegraphics[width=0.48\hsize]{figs/kr-vs-kp-results}} &
\basebox{\includegraphics[width=0.48\hsize]{figs/tricky-results}} \\
(c) & (d) 
\end{tabular}
\caption{Results for online decision tree learning on three UCI datasets: (a) Nursery, (b) Mushroom, (c) King-Rook/King-Pawn; and (d) a simple sequence of alternating labels.}
\label{fig:results-dtree}
\end{figure}
\fi

The graphs show the average fraction of misclassified instances over the previous 100 timesteps.  Two baselines are also plotted: the best single label (i.e., the size of the majority class) and the best decision stump.  Both regret algorithms substantially improved on the best label, and local external regret was selecting trees substantially better than the best stump.  As a further baseline, we ran the batch algorithm C4.5 in an online fashion, by retraining a decision tree after each timestep using all previously observed examples.  C4.5's performance was impressive, learning highly accurate trees after observing only a small fraction of the data.  However, C4.5 has no regret guarantees.  As with any offline algorithm used in an online fashion, there is an implicit assumption that the past and future data instances are i.i.d..  In our experimental setup, the instances were i.i.d., and as a result C4.5 performed very well.  To further illustrate this point, we constructed a simple online classification task where instances with identical attributes were provided with alternating labels.  The best label (as well as the single best decision tree) has a 50\% accuracy.  C4.5 when trained on the previously observed instances, misclassifies every single instance.  This is shown along with local regret algorithms in Figure~\ref{fig:results-dtree} (d).

\subsection{Online Disjunct Learning}
%
%
Finally, we examine online disjunct learning as described in Example~\ref{ex:disjuncts}.  This task has received considerable attention, notably the celebrated Winnow algorithm~\citep{Little88}, which is guaranteed to make a finite number of mistakes if the instances can be perfectly classified by some disjunction.  Furthermore, the number of mistakes Winnow2 makes, when no disjunction captures the instances, can be bounded by the number of attribute errors (\ie, the number of input attributes that must be flipped to make the disjunction satisfy the instance) made by the best disjunction.  In these experiments we compare our algorithms' performance to that of Winnow2.  

We looked at two learning tasks.  In the first, we generated a random disjunction over $n=20$ boolean variables, where a variable was independently included in the disjunction with probability $4/n$.  Instances were created with uniform random assignments to all of the variables, with a label being true if and only if the chosen disjunct is true for the instance's assignment.  In the second case, we chose instances uniformly at random from a constructed set of 21 instances: one for each variable with that variable (only) set to true and the label being true, and one with all of the variables assigned the value of true and the label being false.  We call this task Winnow Killer.  For both tasks, the $n$-dimensional hypercube from Example~\ref{ex:max-cnf} was used as the locality graph with the $2n$ coloring as our admissible coloring, and $L=\infty$ and $b=0$.  The average results over 50 trials are shown in Figure~\ref{fig:results-disjunct}, with 95\% confience bars.

\iftwocolumns
\begin{figure}
\centering\begin{tabular}{cc}
(a) & \basebox{\includegraphics[width=0.75\hsize]{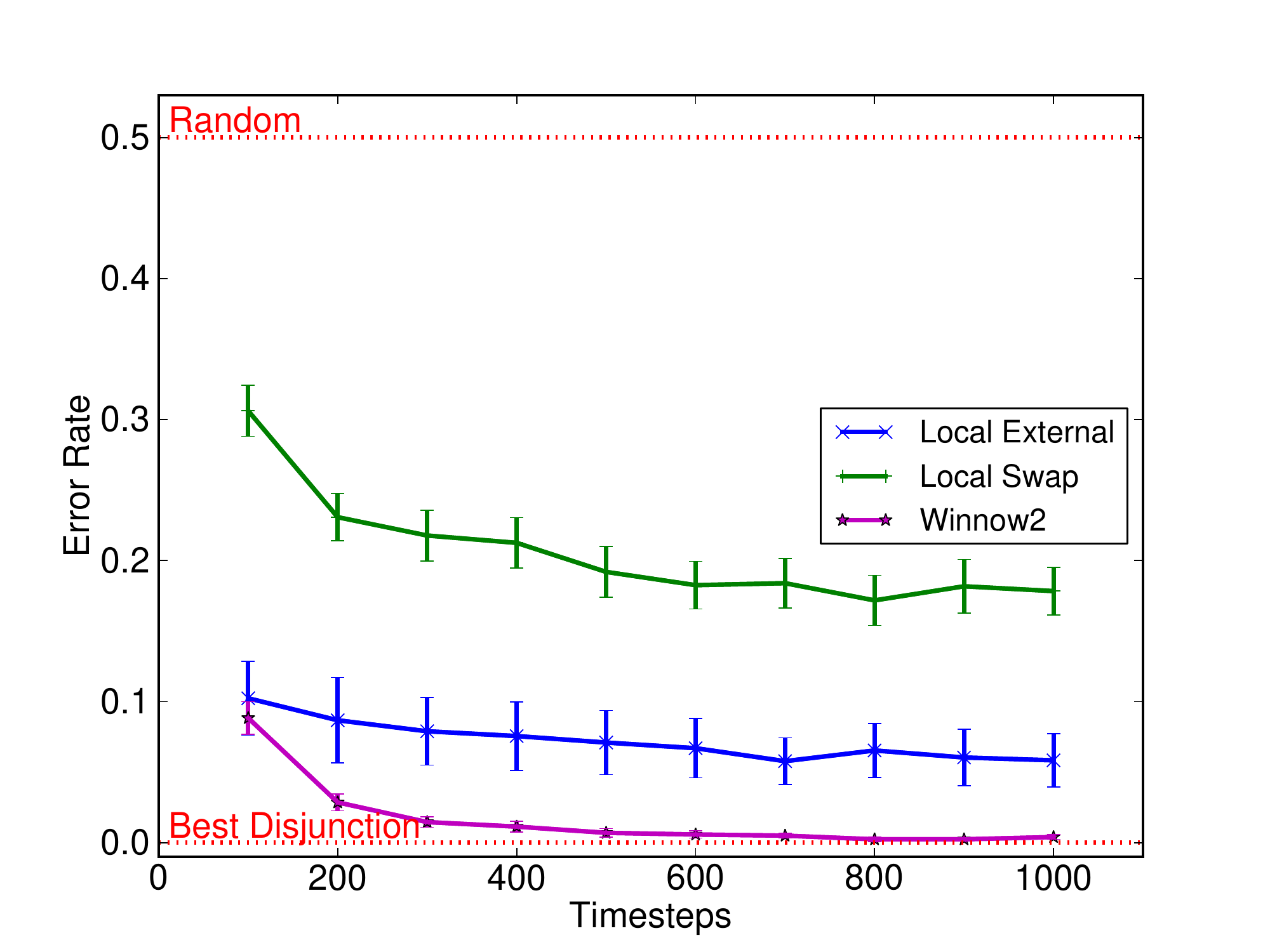}} \\
(b) & \basebox{\includegraphics[width=0.75\hsize]{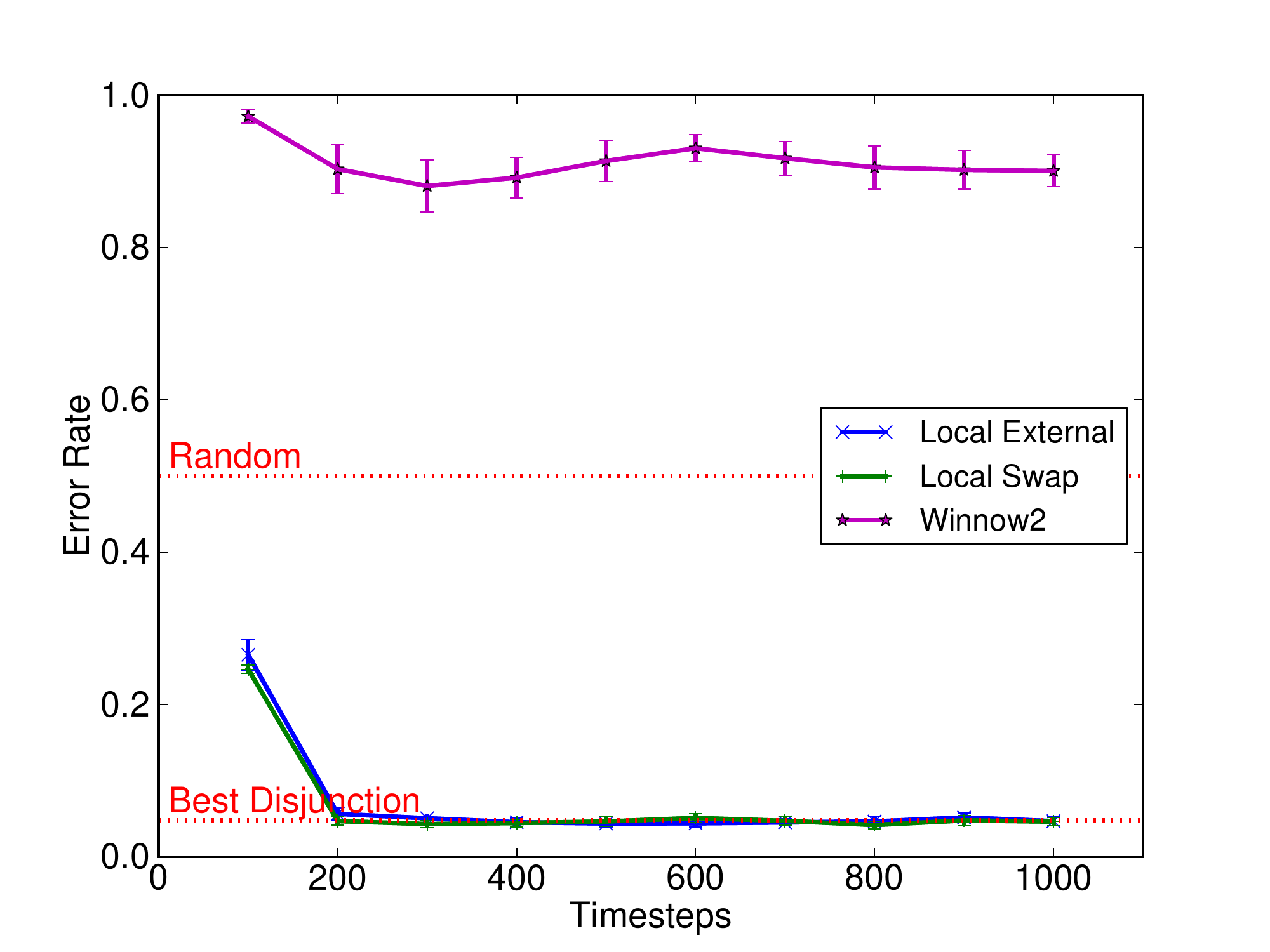}}
\end{tabular}
\caption{Results for online disjunct learning: (a) random disjunct, (b) Winnow Killer.}
\label{fig:results-disjunct}
\end{figure}
\else
\begin{figure}
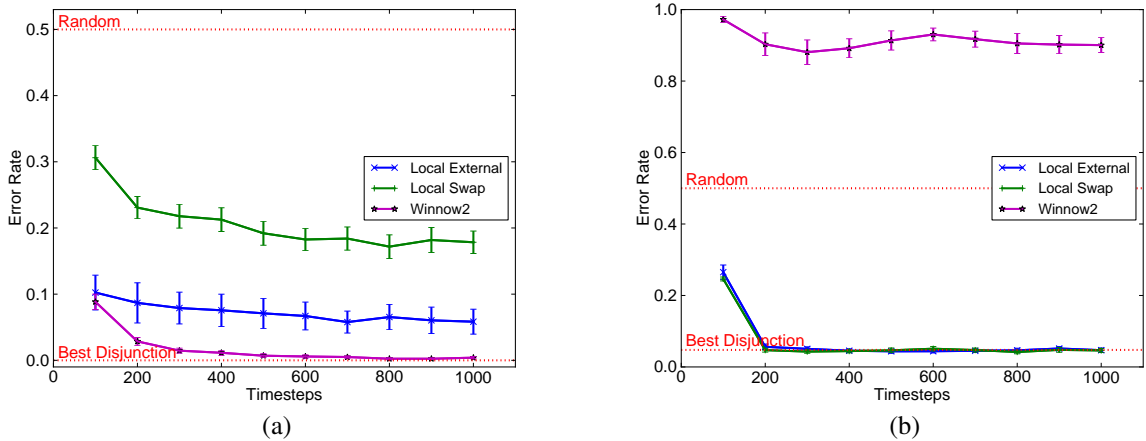

\centering\begin{tabular}{cc}
\basebox{\includegraphics[width=0.48\hsize]{figs/disjunct-random-results}} &
\basebox{\includegraphics[width=0.48\hsize]{figs/disjunct-hard-results}} \\
(a) & (b)
\end{tabular}
\caption{Results for online disjunct learning: (a) random disjunct, (b) Winnow Killer.}
\label{fig:results-disjunct}
\end{figure}
\fi

The graphs plot error rates over the previous 100 instances.  Three baselines are plotted: randomly assigning a label (guaranteed to get half of the instances correct on expectation), the best disjunct (which makes no mistakes for random disjunctions and makes $\frac{1}{21}$ mistakes on the Winnow Killer task), and Winnow2.  Figure~\ref{fig:results-disjunct} (a) shows the results on random disjunctions.  Winnow2 is guaranteed to make a finite number of mistakes and indeed its error rate drops to zero quickly.  The local regret concepts, though, have difficulties with random disjunctions.  The reason can be easily seen for the case of local external regret.  Suppose the first instance is labeled true; the algorithm now has regret for all of the variables that were true in that instance (some of these will be in the target disjunction, but many will not).  These variables will now be included in the chosen disjunction for a very long time, as the only regret that one can have for not removing them is if their assignment was the sole reason for misclassifying a false instance.  In other words, the problem is that there's no regret for not removing multiple variables simultaneously as this is not a local change.  Winnow2, though, also has issues.  It performs very poorly in the Winnow Killer task (in fact, if the instances were ordered it could be made to get every instance wrong), as shown in Figure~\ref{fig:results-disjunct} (b).  Since the mistake bound for Winnow2 is with respect to the number of attribute errors, a single mistake by the best disjunction can result in $n$ mistakes by Winnow2.  A further issue with Winnow is that while its peformance is tied to the performance of disjunctions, its own hypothesis class is not disjunctions but a thresholded linear function, whereas local regret is playing in the same class of hypotheses that it comparing against.

\section{Conclusion}

We introduced a new family of regret concepts based on restricting regret to only nearby hypotheses using a locality graph.  We then presented algorithms for minimizing these concepts, even when the number of hypotheses are infinite.  Further we showed that we can exploit structure in the graph to achieve tighter bounds and better performance.  These new regret concepts mimic local search methods, which are common approaches to offline optimization with intractably hard hypothesis spaces.  As such, our concepts and algorithms allows us to make online guarantees, with a similar flavor to their offline counterparts, with these hypothesis spaces.

\ifspacetight\else
There is a number of interesting directions for future work as well as open problems.  Admissible colorings can result in radically improved bounds as well as empirical performance.  How can such admissible colorings be constructed for general graphs?  What graph structures lead to exponentially small admissible colorings compared to the size of the graph?  We can easily construct the minimum admissible coloring for graphs that are recursively constructed as Cartesian product of graphs and complete graphs.  While such graphs can have exponentially small admissible colorings, they form a very narrow class of structures.  What other structures lead to exponentially small admissible colorings?  Furthermore, edge lengths can have a significant impact on the size of the minimum admissible coloring.  For example, the decision tree graph from Example~\ref{ex:dtree} was carefully constructed to result in a tight coloring, and, in fact, unit length edges over the same graph would result in an exponentially larger admissible coloring.  How can edge lengths be defined to allow for small minimum colorings?
\fi

\section*{Acknowledgements}

This work was supported by NSERC and Yahoo! Research, where the first author was a visiting scientist at the time the research was conducted. 

\ificmlformat
{\small
\bibliographystyle{icml2012}
\bibliography{paper}
}
\else
\bibliographystyle{named}
\bibliography{paper}
\fi
\fi

\ifincludeappendix
\ifincludebody\newpage\fi

\appendix
\section{Proof for Local Swap Regret}

\label{sec:appendix:local-swap-regret} 
At its heart, the Hart and Mas-Colell proof for minimizing internal
regret relies on the relationship between Markov chains and flows. The
Blackwell condition is (roughly speaking) that the probability flow into an action equals
the probability flow out of an action. In the variant here, there are
two ways to view this flow.  Define $f$ such that for all $(i,j)\in E$, $f_{i,j}=\pi^{t+1}_{i}
\tilde{R}^{t,+}_{i,j}$. Implicitly, $f$ depends on the time $t$, but
we supress this as we always refer to a time $t$.
This flow $f$ is similar to the flows in Hart and Mas-Colell as they apply to the
Blackwell condition. However, it
lacks the conservation of flow property. Thus, we consider a second
flow $f'$ which satisfies the conservation of flow. To do this, we consider the levels of the graph. To review, $\Root$ is a distinct vertex, and,
$\Level(v)=d_1(\Root,v)$. If we consider the flow $f$ as starting
from the root, it (roughly) goes from level to level outward from the root until
it reaches level $L$. Then, while $f$ flows to level $L+1$ and reaches
a dead end (violating the conservation property), $f'$ is switched,
and flows back to the $\Root$. In order to make
the proof work, we have to bound the difference between $f$ and
$f'$. Since this difference is mostly on the flow from level $L$ to
level $L+1$, we need to bound the fraction of the total flow that is
going out of the last level by showing that this flow is less than
the flow going from the root to the first level, and it is less than
the flow from the first level
to the second level, et cetera.

First we show that for nodes on most levels, the flow in
equals the flow out. 

\begin{lemma}
If Requirement~\ref{req:stationary-distribution} holds, then for all $j \in V$ such that $1 \le \Level(j) \le L$,
\[
\sum_{i:(i,j)\in E} f_{i,j}= \sum_{k:(j,k)\in E} f_{j,k}  
\]
\label{lem:incoming-outgoing}
\end{lemma}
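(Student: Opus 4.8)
The plan is to read off the claim directly from condition (c) of Requirement~\ref{req:stationary-distribution}, which is precisely the stationarity equation at an internal vertex $j$ with $1 \le \level(j) \le L$. First I would fix such a $j$ and recall that $M = \max_{(i,j)\in E}\tilde R^{t,+}_{i,j}$, so that the transition probability along an edge $(i,k)$ is $\tilde R^{t,+}_{i,k}/M$, all of which lie in $[0,1]$, and the self-loop probability at $j$ is $1 - \sum_{k:(j,k)\in E}\tilde R^{t,+}_{j,k}/M \ge 0$. Condition (c) states
\[
\pi^{t+1}_j = \sum_{i:(i,j)\in E}\frac{\tilde R^{t,+}_{i,j}}{M}\,\pi^{t+1}_i + \Bigl(1 - \sum_{k:(j,k)\in E}\frac{\tilde R^{t,+}_{j,k}}{M}\Bigr)\pi^{t+1}_j.
\]

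Next I would subtract the self-loop term from both sides, which cancels the bare $\pi^{t+1}_j$ and leaves
\[
\sum_{k:(j,k)\in E}\frac{\tilde R^{t,+}_{j,k}}{M}\,\pi^{t+1}_j = \sum_{i:(i,j)\in E}\frac{\tilde R^{t,+}_{i,j}}{M}\,\pi^{t+1}_i.
\]
Multiplying through by $M$ (which is legitimate provided $M > 0$; if $M = 0$ every $\tilde R^{t,+}$ vanishes and both sides of the claimed identity are $0$) and using the definition $f_{i,j} = \pi^{t+1}_i\,\tilde R^{t,+}_{i,j}$ on each term turns the left side into $\sum_{k:(j,k)\in E} f_{j,k}$ and the right side into $\sum_{i:(i,j)\in E} f_{i,j}$, which is exactly the asserted equality.

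There is essentially no obstacle here: the lemma is a bookkeeping restatement of the defining stationarity condition, and the only point requiring a word of care is the degenerate case $M = 0$ (equivalently, the degenerate $\pi^{t+1}$ of condition (e)), which I would dispatch in one line as above. The substance of the argument — relating $f$ to the non-conservative dead-end flow into level $L+1$, and bounding the discrepancy — lies in the subsequent lemmas, not in this one.
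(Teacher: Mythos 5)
Your proof is correct and follows essentially the same route as the paper's: rearrange condition (c) of Requirement~\ref{req:stationary-distribution} to cancel the self-loop term, multiply by $M$, and identify the resulting products with $f$. Your extra remark handling $M=0$ (the degenerate case) is a small point of care the paper's proof glosses over by simply asserting $M>0$, but it does not change the argument.
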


\begin{corollary}\label{cor:incoming-outgoing}
By summing over the nodes in level $\ell$, for any level $1\leq \ell \leq L$,\[
 \sum_{(i,j)\in E:\Level(j)=\ell} f_{i,j} = \sum_{(i,j)\in E:\Level(i)=\ell} f_{i,j}.
\]
\end{corollary}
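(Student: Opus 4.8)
The plan is to apply Lemma~\ref{lem:incoming-outgoing} vertex by vertex and then add the resulting equalities across a whole level. Fix a level $\ell$ with $1 \le \ell \le L$. For every vertex $j$ with $\Level(j) = \ell$ the hypothesis of Lemma~\ref{lem:incoming-outgoing} is met, so $\sum_{i:(i,j)\in E} f_{i,j} = \sum_{k:(j,k)\in E} f_{j,k}$. Summing these identities over all $j$ in level $\ell$ gives
\[
\sum_{j:\Level(j)=\ell}\ \sum_{i:(i,j)\in E} f_{i,j} \;=\; \sum_{j:\Level(j)=\ell}\ \sum_{k:(j,k)\in E} f_{j,k}.
\]
The left-hand side is precisely $\sum_{(i,j)\in E:\Level(j)=\ell} f_{i,j}$ (we are enumerating all edges whose head lies in level $\ell$), and on the right-hand side renaming the edge $(j,k)$ as $(i,j)$ shows it equals $\sum_{(i,j)\in E:\Level(i)=\ell} f_{i,j}$. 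This is exactly the claimed equality.

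The one point that deserves a sentence of justification is that the double sums being rearranged are legitimate, i.e.\ that grouping the per-vertex equalities into a single sum over edges incident to level $\ell$ is unambiguous even though $G$ may be infinite. This follows from the standing convention: after $T$ rounds only finitely many actions have been played, so only finitely many biased regrets $\tilde{R}^{t,+}_{i,j}$ are nonzero, and hence only finitely many flow values $f_{i,j} = \pi^{t+1}_i\,\tilde{R}^{t,+}_{i,j}$ are nonzero. With only finitely many nonzero terms on each side, the reindexing and regrouping are purely formal.

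I do not anticipate any real obstacle here: all the work is in Lemma~\ref{lem:incoming-outgoing}, and the corollary is just the standard ``conservation of flow across a cut'' consequence of node-wise conservation, together with the observation that the edges out of level $\ell$ are exactly the edges into level $\ell{+}1$ is \emph{not} needed for this statement — only the bijective relabeling of summation indices is. (That stronger cut observation, which chains these level-to-level equalities together, is what the subsequent argument will use to bound the flow leaving level $L$.)
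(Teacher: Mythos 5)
Your proof is correct and is exactly the argument the paper intends: the corollary's own statement ("By summing over the nodes in level $\ell$") is the entire proof, namely applying Lemma~\ref{lem:incoming-outgoing} to each vertex of level $\ell$ and regrouping the two double sums as sums over edges with head (resp.\ tail) in level $\ell$. Your added remark about only finitely many $f_{i,j}$ being nonzero is a reasonable bit of extra care consistent with the paper's standing conventions.
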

\begin{proof}
From Requirement~\ref{req:stationary-distribution}(c) we know there
exists an $M>0$ such that:
\begin{align} 
\pi^{t+1}_j &= \sum_{i:(i,j) \in E} (R^{t,+}_{i,j} / M) \pi^{t+1}_i + \left(1 - \sum_{k:(j,k)\in E} R^{t,+}_{j,k}/M\right) \pi^{t+1}_j \\
\pi^{t+1}_j \left(\sum_{k:(j,k)\in E} R^{t,+}_{j,k} / M\right) &= \sum_{i:(i,j)\in E} R^{t,+}_{i,j} \pi^{t+1}_i / M  \\
\sum_{k:(j,k)\in E} \pi^{t+1}_j R^{t,+}_{j,k} &= \sum_{i:(i,j)\in E}
R^{t,+}_{i,j} \pi^{t+1}_i 
\end{align}
The lemma follows by the definition of $f_{i,j}$.
\end{proof}

If we want the conservation of flow to hold for \textit{all} nodes,
then we need to define a slightly different flow. We want to say that
the flow which is currently exiting the first $L$ levels (specifically
between level $L$ and level $L+1$) is actually
flowing back into the root. So, we want to subtract the edges
$E''=\{(i,j)\in E:\Level(j)\geq L+1\vee \Level(i)\geq L+1\}$, and add the 
edges $E'=\{i\in V:\Level(i)=L\} \times \{\Root\}$. For any edge $e\in
E'$, define $f'_{i,j}=f_{i,j}+\sum_{k:\Level(k)=L+1,(i,k)\in E}
f_{i,k}$., where $f_{i,j}=0$ if $(i,j)\notin E$. For any edge $(i,j)\in
E\backslash (E'\cup E'')$ where $\Level(i),\Level(j)\leq L$,
$f'_{i,j}=f_{i,j}$. Define $\tilde{E}=(E\cup E'\backslash E'')$. 

Thus, we now have a flow over a graph
$(V,\tilde{E})$, but we must prove conservation of flow.

\begin{lemma}\label{lem:modifiedflowconservation}If Requirement~\ref{req:stationary-distribution} holds, for any $i\in V$, $\sum_{j:(i,j)\in \tilde{E}} f'_{i,j}=\sum_{j:(j,i)\in \tilde{E}} f'_{j,i}$
\end{lemma}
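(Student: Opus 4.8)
The plan is to check the conservation equation $\sum_{j:(i,j)\in\tilde E} f'_{i,j} = \sum_{j:(j,i)\in\tilde E} f'_{j,i}$ one vertex at a time, grouped according to $\level(i)$. The single structural fact that makes every case routine is that, by Requirement~\ref{req:stationary-distribution}(b), $\pi^{t+1}_v = 0$ whenever $\level(v) > L$, so $f_{v,\cdot}=0$ for every such $v$; together with the level property $\level(j)\le\level(i)+1$ for $(i,j)\in E$, this forces the only flow-carrying edges of $E''$ to be the edges from a level-$L$ vertex to a level-$(L+1)$ vertex. These are precisely the edges whose flow is reassigned, by the definition of $f'$, onto the new edges $E'=\{v:\level(v)=L\}\times\{\root\}$. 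Write $F_L := \sum_{(i,k)\in E:\ \level(i)=L,\ \level(k)=L+1} f_{i,k}$ for this total rerouted flow.

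First the two easy families. If $\level(i)\ge L+1$, no edge of $\tilde E$ is incident to $i$: the $E'$ edges only touch level-$L$ vertices and $\root$, while every original edge at $i$ lies in $E''$; so both sides are empty sums. If $1\le\level(i)\le L$, I claim the modification changes neither the total in-flow nor the total out-flow at $i$. For the out-flow: when $\level(i)<L$ all of $i$'s out-edges survive unchanged, and when $\level(i)=L$ the out-edges to level $L+1$ are deleted but their flow is exactly the amount added to $f'_{i,\root}$, so in either situation $\sum_{j:(i,j)\in\tilde E}f'_{i,j}=\sum_{j:(i,j)\in E}f_{i,j}$. For the in-flow: since $\level(i)\le L$, an in-edge $(j,i)$ sits in $E''$ only if $\level(j)\ge L+1$, and every such edge carries zero flow, so $\sum_{j:(j,i)\in\tilde E}f'_{j,i}=\sum_{j:(j,i)\in E}f_{j,i}$. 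Hence for these vertices the claim reduces to $\sum_{j:(j,i)\in E}f_{j,i}=\sum_{j:(i,j)\in E}f_{i,j}$, which is exactly Lemma~\ref{lem:incoming-outgoing}.

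The one genuinely new case is $i=\root$. Its $\tilde E$-out-flow equals its original out-flow $\sum_{j:(\root,j)\in E}f_{\root,j}$, because $\root$'s out-neighbours lie at level $\le 1$ so none of those edges were deleted (if $L=0$ then $\tilde E$ is just the self-loop at $\root$ and conservation is immediate). Its $\tilde E$-in-flow is the original in-flow $\sum_{j:(j,\root)\in E}f_{j,\root}$ (the deleted in-edges came from level $\ge L+1$ and carried zero flow) plus the flow newly routed in along $E'$, which totals $F_L$; here one must be careful that an edge $(j,\root)\in E$ with $\level(j)=L$ is not counted twice, since it appears once in $\tilde E$ carrying $f'_{j,\root}=f_{j,\root}+\sum_{k:\level(k)=L+1,(j,k)\in E}f_{j,k}$. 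So the root's equation reduces to $\sum_{j:(\root,j)\in E}f_{\root,j}=\sum_{j:(j,\root)\in E}f_{j,\root}+F_L$. This is precisely Requirement~\ref{req:stationary-distribution}(d): multiplying that identity by $M$, cancelling the $\pi^{t+1}_\root$ term, and rewriting $\pi^{t+1}_v\tilde R^{t,+}_{v,w}=f_{v,w}$, its three summations become $\sum_{j:(\root,j)\in E}f_{\root,j}$, $\sum_{j:(j,\root)\in E}f_{j,\root}$, and $\sum_{j:\level(j)=L+1}\sum_{i:(i,j)\in E}f_{i,j}$; and this last sum equals $F_L$, since any flow-carrying edge into a level-$(L+1)$ vertex must originate at level $L$.

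The obstacle I expect to spend the most care on is purely the bookkeeping of the root case: making the rerouted-flow quantity $F_L$ match up exactly with the extra summation in Requirement~\ref{req:stationary-distribution}(d), and correctly handling the overlap between the brand-new $E'$ edges and any pre-existing edges $(j,\root)$ with $\level(j)=L$ so that nothing is double-counted. Everything else follows once the observation that vertices above level $L$ carry no flow is in place; and, as in Lemma~\ref{lem:incoming-outgoing}, the degenerate case $M=0$ is trivial because then every $f_{i,j}$ vanishes.
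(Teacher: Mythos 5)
Your proof is correct and follows essentially the same route as the paper's: a case analysis by $\level(i)$, with levels $1$ through $L$ reduced to Lemma~\ref{lem:incoming-outgoing} via Requirement~\ref{req:stationary-distribution}(c), the root handled by Requirement~\ref{req:stationary-distribution}(d), and vertices above level $L$ trivial because they carry no flow. If anything, you are more explicit than the paper about the bookkeeping at level $L$ and at the root (the identification of the rerouted flow $F_L$ with the extra summation in condition (d), and the non-double-counting of pre-existing edges $(j,\root)$ with $\level(j)=L$), which the paper compresses into a single chain of equalities.
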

\begin{proof}
For $\Level(i)\in \{1\ldots L-1\}$, this is a direct result of Requirement~\ref{req:stationary-distribution}(c).
For when $\Level(i)>L$, there is no flow out or in, making the
result trivial. For $\Level(i)=0$ (when $i=\root$), this is a direct result of
Requirement~\ref{req:stationary-distribution}(d). For when  $\Level(i)=L$, note
that $\sum_{j:(j,i)\in E}f_{j,i}=\sum_{j:(i,j)\in E}f_{i,j}$, for all
$j$ where $(j,i)\in E$, $f_{j,i}=f'_{j,i}$, and for all $(i,j)\in E$
where $\Level(j)\in \{1\ldots L\}$, $f_{i,j}=f'_{i,j}$ and that
the flow $\sum_{j:(i,j)\in E,\Level(j)\in \{0,L+1\}}f_{i,j}=f'_{i,\Root}$, so 
\begin{align}
\sum_{j:(j,i)\in E}f'_{j,i}&=\sum_{j:(j,i)\in E}f_{j,i}\\
&=\sum_{j:(i,j)\in E}f_{i,j}\\
&=\sum_{j:(i,j)\in E,\Level(j)\in \{0,L+1\}}f_{i,j}+\sum_{j:(i,j)\in
  E,\Level(j)\in \{1\ldots L\}}f_{i,j}\\
&=f'_{i,\Root}+\sum_{j:(i,j)\in
  E,\Level(j)\in \{1\ldots L\}}f'_{i,j}\\
&=\sum_{j:(i,j)\in E}f'_{i,j}.
\end{align}
\end{proof}

\begin{lemma}
If Requirement~\ref{req:stationary-distribution} holds, then:
\begin{gather}
\sum_{(i,j)\in E} f_{i,j} \geq (L+1)\sum_{(i,j) \in E:\Level(j)=L+1} f_{i,j}\label{eqn:levelregrettotal}\\
\sum_{(i,j)\in E:L(i)=0} f_{i,j} \geq \sum_{(i,j) \in E:\Level(j)=L+1} f_{i,j}\label{eqn:levelregrettopbottom}
\end{gather}
\label{lem:levelregret-dag}
\end{lemma}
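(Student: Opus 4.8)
The plan is to argue entirely at the level of the flow $f_{i,j}=\pi^{t+1}_i\tilde{R}^{t,+}_{i,j}$, using two structural facts: $f$ vanishes on every edge whose tail is above level $L$ (because $\pi^{t+1}_i=0$ there, by Requirement~\ref{req:stationary-distribution}(b)), and every edge $(i,j)$ satisfies $\Level(j)\le\Level(i)+1$, so the only edges carrying flow \emph{upward} across the cut between levels $\le k$ and levels $>k$ are the edges from a level-$k$ vertex to a level-$(k{+}1)$ vertex. Write $U_k=\sum_{(i,j)\in E:\ \Level(i)=k,\ \Level(j)=k+1}f_{i,j}$ for this upward crossing. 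Since a level-$(L{+}1)$ vertex can only be reached from level $L$, we have $\sum_{(i,j)\in E:\ \Level(j)=L+1}f_{i,j}=U_L$, so both inequalities reduce to statements about the $U_k$.

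The key step is to show $U_k\ge U_L$ for every $0\le k\le L$. For $k=L$ this is an identity and needs nothing; for $k<L$ I would sum the node conservation identity of Lemma~\ref{lem:incoming-outgoing} over all vertices $v$ with $k+1\le\Level(v)\le L$. Edges with both endpoints in this block of levels contribute identically to total inflow and total outflow and cancel. By the two structural facts, the remaining inflow is precisely $U_k$ (the only edges entering the block from outside have tails at level $\le k$, and such an edge must land on level $k{+}1$, forcing the tail to level $k$), while the remaining outflow is $U_L$ (the only edges leaving to a level $>L$ go from level $L$ to level $L+1$) plus the flow that drops from the block down to levels $\le k$. Equating the two sides and using $f\ge 0$ gives $U_k=U_L+(\text{downward-escaping flow})\ge U_L$.

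Given the claim, the first inequality follows by decomposing the total flow according to the level of the tail, $\sum_{(i,j)\in E}f_{i,j}=\sum_{\ell=0}^L\sum_{(i,j)\in E:\ \Level(i)=\ell}f_{i,j}$ (the sum stops at $\ell=L$ because $f$ vanishes above level $L$), observing that the $\ell$-th inner sum is at least $U_\ell\ge U_L$ since it contains the level-$\ell$-to-$(\ell{+}1)$ edges and all terms are nonnegative, and summing the resulting $L+1$ copies of $U_L$. The second inequality is immediate: $\sum_{(i,j)\in E:\ \Level(i)=0}f_{i,j}\ge U_0\ge U_L=\sum_{(i,j)\in E:\ \Level(j)=L+1}f_{i,j}$. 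I expect the only delicate point to be the bookkeeping in the summed conservation step — carefully identifying which edges straddle the boundary of the block $\{k+1,\dots,L\}$ and checking that, thanks to the vanishing of $f$ above level $L$ and the $+1$ bound on level increase along edges, the only boundary edges are the $U_k$ edges entering from below, the $U_L$ edges leaving to level $L+1$, and harmless downward edges; everything else is internal and cancels.
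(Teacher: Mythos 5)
Your proof is correct, but it takes a genuinely different route from the paper's. The paper first passes to a modified flow $f'$ that reroutes the level-$L$-to-level-$(L{+}1)$ flow back to the root (Lemma~\ref{lem:modifiedflowconservation}), aggregates it into a level graph $g_{i,j}$, and then runs a recursion on the backward-flow quantities $\delta_i$ to establish $g_{L,0}=n_i-\delta_i$, hence $g_{L,0}\le n_i$ for every level. You instead stay with the original flow $f$ and make a direct cut argument: summing the node-conservation identity of Lemma~\ref{lem:incoming-outgoing} over the block of vertices at levels $k{+}1$ through $L$, cancelling internal edges, and using the two structural facts (edges raise the level by at most one, and $f$ vanishes on edges whose tail is above level $L$ by Requirement~\ref{req:stationary-distribution}(b)) to identify the boundary terms as exactly $U_k$ entering and $U_L$ plus a nonnegative downward flow leaving. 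This yields $U_k=U_L+(\text{downward flow})\ge U_L$ in one step, and the two claimed inequalities follow by decomposing the total flow by the level of the tail. Your version is more elementary -- it needs neither $f'$ nor the $\delta_i$ recursion for this lemma -- and it makes the ``an eddy cannot create water at the mouth of the river'' intuition that the paper only gestures at into the actual mechanism of the proof; the paper's recursion, on the other hand, delivers the slightly stronger statement $g_{L,0}\le n_i$ (where $g_{L,0}$ also includes any direct level-$L$-to-root flow) and reuses the $f'$ machinery it has already built for Lemma~\ref{lem:blackwellcondition}. One small bookkeeping point worth stating explicitly if you write this up: in the decomposition $\sum_{(i,j)\in E}f_{i,j}=\sum_{\ell=0}^{L}\sum_{(i,j):\Level(i)=\ell}f_{i,j}$ you are implicitly discarding edges whose tail is unreachable from the root or at level above $L$, which is justified precisely because $\pi^{t+1}_i=0$ there.
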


\begin{proof}
To obtain an intuition, consider the case where all outgoing edges
from level $j$ go to level $j+1$ (modulo the last level). In this
case, the flow from level 0 all goes to level 1, from there goes to
level 2, and so forth until it reaches level $L$ and then returns to
level $0$. Thus, the inflows and outflows of all the levels would be
equal.
The problem with this is that outgoing edges from level $j$ can go to
other nodes in $j$, or nodes in level $j-1$, et cetera. At an
intuitive level, a backwards flow would not make more flow through the
final level, any more than an eddy would somehow create water at the
mouth of a river, and we must simply formally prove this.

First, we define $g_{i,j}=\sum_{(k,l)\in
  E:\Level(k)=i,\Level(l)=j}f'_{i,j}$, the total flow between
levels. By Lemma~\ref{lem:modifiedflowconservation} for all $i\in V$,
$\sum_{j}f'_{i,j}=\sum_{j}f'_{j,i}$, so the aggregate flow satisfies the conservation of
flow, namely that for all $i$, $\sum^L_{j=0}g_{i,j}=\sum^L_{j=0} g_{j,i}$. Also,
if $j>i+1$, then $g_{i,j}=0$.
Define $n_i=g_{i,i+1}$, the flow between one level and the
next. Since $f$, $f'$, and $g$ are just different groupings of the
total flow throughout the graph, $\sum_{(i,j)\in E} f_{i,j}=\sum_{(i,j)\in \tilde{E}} f'_{i,j}=
\sum_{i=0}^L\sum_{j=0}^L g_{i,j}$. Since for all $i,j\in V$, $f'_{i,j}\geq
0$, then for all $i,j$, $g_{i,j}\geq 0$.
 $g_{L,0}+\sum_{i=0}^{L-1} n_i\leq \sum_{(i,j)\in E} f_{i,j}$.

Moreover, $n_0=g_{0,1}=\sum_{j:(\Root,j)\in
  E}f'_{\Root,j}=\sum_{j:(\Root,j)\in E}f_{\Root,j}$, and
$g_{L,0}\geq \sum_{(i,j)\in E:\Level(j)=L+1}f_{i,j}$. So if we prove that
for all $i$, $g_{L,0}\leq n_i$, then $g_{L,0}\leq n_0$ and
that $g_{L,0}(L+1)\leq g_{L,0}+\sum_{i=0}^{L-1} n_i$, we have proven
the lemma.

First, we identify this backwards flow. Define $\delta_i$ to be the
flow that originates at level $i$ or above and flows back to a lower
level. Formally, define $\delta_0=0$, and $\delta_i=\sum_{i'<i,j'\geq i} g_{j',i'}-g_{L,0}$. Note that $\delta_i\geq 0$.

Thus, for all $i$ where $0<i<L$:

\begin{align}
\delta_i-\delta_{i+1}&=\left (\sum_{i'<i,j'\geq i} g_{j',i'}\right )-\left (\sum_{i'<i+1,j'\geq i+1} g_{j',i'}\right )\\
\delta_i-\delta_{i+1}&=\left (\sum_{i'<i,j'= i} g_{j',i'}\right )+\left (\sum_{i'<i,j'\geq i+1} g_{j',i'}\right )-\left (\sum_{i'=i,j'\geq i+1} g_{j',i'}\right ) -\left (\sum_{i'< i,j'\geq i+1} g_{j',i'}\right )\\
\delta_i-\delta_{i+1}&=\left (\sum_{i'<i,j'= i} g_{j',i'}\right )-\left (\sum_{i'=i,j'\geq i+1} g_{j',i'}\right )\\
\delta_i-\delta_{i+1}&=\left (\sum_{i'<i} g_{i,i'}\right )-\left (\sum_{j'\geq i+1} g_{j',i}\right )\\
\delta_i-\delta_{i+1}&=\left (g_{i,i}+\sum_{i'<i} g_{i,i'}\right )-\left (g_{i,i}+\sum_{j'\geq i+1} g_{j',i}\right )\\
\delta_i-\delta_{i+1}&=\left (\sum_{i'\leq i} g_{i,i'}\right )-\left (\sum_{j'\geq i} g_{j',i}\right )\\
\intertext{Since $g_{i,i+1}=n_i$, and $g_{i-1,i}=n_{i-1}$,}
\delta_i-\delta_{i+1}&=\left ((g_{i,i+1}-n_i)+\sum_{i'\leq i} g_{i,i'}\right )-\left ((g_{i-1,i}-n_{i-1})+\sum_{j'\geq i} g_{j',i}\right )\\
\delta_i-\delta_{i+1}&=\left (-n_i+\sum_{i'\leq i+1} g_{i,i'}\right )-\left (-n_{i-1}+\sum_{j'\geq i-1} g_{j',i}\right )\\
\intertext{Since $g$ represents the level graph, $g_{i,i'}=0$ if
  $i'>i+1$, or put another way, $g_{j',i}=0$ if $j'<i-1$, so}
\delta_i-\delta_{i+1}&=\left (-n_i+\sum_{i'} g_{i,i'}\right )-\left (-n_{i-1}+\sum_{j'} g_{j',i}\right )\\
\delta_i-\delta_{i+1}&=n_{i-1}-n_i
\end{align}

So, for all $0\leq i<L-1$:
\begin{align}
\delta_{i+1}-\delta_{i+2}&=n_{i}-n_{i+1}\\
n_i&=\delta_{i+1}-\delta_{i+2}+n_{i+1}\label{eqn:recursivetool}
\end{align}


For $n_0$, note that $\sum_i g_{i,0}=g_{0,0}+\delta_1+g_{L,0}$,  and
$\sum_i g_{0,i}=g_{0,0}+n_0$, so
$g_{0,0}+\delta_1+g_{L,0}=g_{0,0}+n_0$, and
$g_{L,0}=n_0-\delta_1$. This is the base case in a recursive proof
that for all $i<L$, $g_{L,0}=n_i-\delta_i$. If we wish to prove it holds
for $i+1$, then we assume it holds for $i$, or
$g_{L,0}=n_{i}-\delta_i$. By Equation~\eqref{eqn:recursivetool}, for $i<L-1$:
\begin{align}
g_{L,0}&=(\delta_{i+1}-\delta_{i+2}+n_{i+1})-\delta_i\\
&=n_{i+1}-\delta_{i+1}
\end{align}
Since $\delta_i\geq 0$, this implies that for $i<T$, $g_{L,0}\leq
n_i$, which completes the proof.
\end{proof}

\begin{lemma}\label{lem:blackwellcondition}
If Requirements~\ref{req:bounded-u} and \ref{req:stationary-distribution} hold, and $b = \Delta / (L + 1)$, then $\sum_{(i,j) \in E} \tilde{R}^{t,+}_{i,j} \pi^{t+1}_i (u^{t+1}(j) - u^{t+1}(i) - b) \le 0$.
\end{lemma}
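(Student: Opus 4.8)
The plan is to unpack the claim as an inequality about the probability flow $f$ introduced just before the lemma. With $f_{i,j} = \pi^{t+1}_i\tilde R^{t,+}_{i,j}$, the left-hand side equals
\[
S - bF,\qquad
S := \sum_{(i,j)\in E} f_{i,j}\bigl(u^{t+1}(j)-u^{t+1}(i)\bigr),\quad
F := \sum_{(i,j)\in E} f_{i,j},
\]
so it suffices to show $S \le bF$. First I would record a structural observation: the only edges carrying positive flow that touch level $L+1$ run from level $L$ to level $L+1$. Indeed, an edge into a level-$(L+1)$ vertex must leave a vertex of level $\ge L$ since $\level(j)\le\level(i)+1$, and Requirement~\ref{req:stationary-distribution}(b) forces $\pi^{t+1}_i=0$, hence $f_{i,j}=0$, whenever $\level(i)\ge L+1$.

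Next I would bring in the rerouted flow $f'$ on $\tilde E = E\cup E'\setminus E''$. Because $f'$ conserves flow at every vertex (Lemma~\ref{lem:modifiedflowconservation}), regrouping $\sum_{(i,j)\in\tilde E} f'_{i,j}\bigl(u^{t+1}(j)-u^{t+1}(i)\bigr)$ as $\sum_v u^{t+1}(v)\bigl(\text{inflow}_{f'}(v)-\text{outflow}_{f'}(v)\bigr)$ shows this sum is $0$. I then compare it with $S$ term by term: $f'$ agrees with $f$ on every edge whose two endpoints both lie at level $\le L$, and it merely diverts the flow $f_{i,k}$ of each edge $(i,k)$ with $\level(i)=L$, $\level(k)=L+1$ onto the edge $(i,\root)$. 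Hence $S$ and the $f'$-sum differ only on those diverted edges, where the carried utility difference switches from $u^{t+1}(k)-u^{t+1}(i)$ to $u^{t+1}(\root)-u^{t+1}(i)$; subtracting and using the structural observation above gives
\[
S \;=\; \sum_{(i,j)\in E:\ \level(j)=L+1} f_{i,j}\bigl(u^{t+1}(j)-u^{t+1}(\root)\bigr) \;\le\; \Delta \sum_{(i,j)\in E:\ \level(j)=L+1} f_{i,j},
\]
where the last inequality uses $f_{i,j}\ge 0$ and Requirement~\ref{req:bounded-u}.

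Finally I would invoke Lemma~\ref{lem:levelregret-dag}, specifically \eqref{eqn:levelregrettotal}, which gives $F\ge (L+1)\sum_{(i,j)\in E:\,\level(j)=L+1} f_{i,j}$, i.e.\ the flow escaping the last level is at most $F/(L+1)$. Substituting this into the display and using $b=\Delta/(L+1)$ yields $S\le \tfrac{\Delta}{L+1}F = bF$, i.e.\ $S-bF\le 0$, which is exactly the lemma. The step I expect to be the real work is the $f\leftrightarrow f'$ comparison: one has to treat the degenerate case, keep straight which level-$L$ vertices already had an edge to $\root$ in $E$ versus which acquire one only in $E'$, and make sure no flow on edges within or below level $L+1$ is double counted. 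Everything else is bookkeeping engineered so that the ``$1/(L+1)$ fraction of the flow leaks out of the bottom level'' effect cancels exactly against the per-round subtraction of $b$, as previewed in the discussion following Theorem~\ref{thm:localswap}.
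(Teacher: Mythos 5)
Your proposal is correct and follows essentially the same route as the paper's proof: the same decomposition into $S-bF$, the same flow quantities $f$ and $f'$, and the same appeal to Equation~\eqref{eqn:levelregrettotal} of Lemma~\ref{lem:levelregret-dag} to convert the $1/(L+1)$ leakage into the $b$ term. The one organizational difference is that you obtain the middle-level cancellation from the vertex-wise conservation of $f'$ (Lemma~\ref{lem:modifiedflowconservation}), which yields the exact identity $S=\sum_{(i,j)\in E:\,\level(j)=L+1} f_{i,j}\bigl(u^{t+1}(j)-u^{t+1}(\root)\bigr)$, whereas the paper telescopes level by level via Lemma~\ref{lem:incoming-outgoing} and then needs Equation~\eqref{eqn:levelregrettopbottom} to dispose of a residual term multiplied by $u^{t+1}(\root)$; your variant is marginally cleaner since it never uses \eqref{eqn:levelregrettopbottom} and avoids any sign consideration for $u^{t+1}(\root)$, and the degenerate case you defer is indeed trivial (all terms vanish).
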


\begin{proof}
First, consider the case where $\pi^t$ is degenerate.  Then, whenever
$\pi^{t+1}_i > 0$, we know $R^{t,+}_{i,j} = 0$ for all $(i,j) \in E$, and so our sum of interest is exactly 0.
Note that, since $f_{i,j}=\tilde{R}^+_{i,j}\pi^{t+1}_i$, what we
need to prove is:
\begin{align}
\sum_{(i,j) \in E} f_{i,j} (u^{t+1}(j) - u^{t+1}(i) - b) &\le 0\\
\left (\sum_{(i,j) \in E} f_{i,j} (u^{t+1}(j) - u^{t+1}(i) )\right )-b\sum_{(i,j) \in E} f_{i,j}  &\le 0.\label{eqn:blackwellcondition}
\end{align}
Suppose $\pi^t$ is not degenerate.  We examine
Equation~\eqref{eqn:blackwellcondition}'s two
summations.  Notice that only edges $(i,j)$ where $\pi^{t+1}_i > 0$
have $f_{i,j}\ne 0$, and by
Requirement~\ref{req:stationary-distribution}(e) this is only true if
$\Level(i) \le L$. Also, $f_{i,j}>0$ if and only if $0\leq \Level(i)\leq L$ and
$1\leq \Level(j)\leq L+1$ (because level zero has no incoming edges), so:
\begin{align}
\sum_{(i,j)\in E} f_{i,j} (u^{t+1}(j) - u^{t+1}(i))& = \sum_{(i,j)\in E} f_{i,j} u^{t+1}(j) - \sum_{(i,j)\in E} f_{i,j} u^{t+1}(i) \\
&= \sum_{\ell=1}^{L+1}\sum_{(i,j)\in E:\Level(j)=\ell} f_{i,j} u^{t+1}(j) - \sum_{\ell=0}^{L}\sum_{(i,j)\in E:\Level(i)=\ell} f_{i,j} u^{t+1}(i).
\end{align}
Renaming the dummy variables in the second term and then combining:
\begin{align}
\sum_{(i,j)\in E} f_{i,j} (u^{t+1}(j) - u^{t+1}(i))& = \sum_{\ell=1}^{L+1}\sum_{(i,j)\in E:\Level(j)=\ell} f_{i,j} u^{t+1}(j) - \sum_{\ell=0}^{L}\sum_{(j,k)\in E:\Level(j)=\ell} f_{j,k} u^{t+1}(j)\\
& = \sum_{\ell=1}^{L}\left (\sum_{(i,j)\in E:\Level(j)=\ell} f_{i,j}
  u^{t+1}(j) - \sum_{(j,k)\in E:\Level(j)=\ell} f_{j,k} u^{t+1}(j) \right
)\nonumber \\
&+\sum_{(i,j)\in E:\Level(j)=L+1} f_{i,j} u^{t+1}(j)-\sum_{(j,k)\in E:\Level(j)=0} f_{j,k} u^{t+1}(j).
\end{align}
First, we show that any term between 1 and $L$ is zero. For any
$1\leq \ell \leq L$, by summing over nodes in level $\ell$:
\begin{align}
\sum_{(i,j)\in E:\Level(j)=\ell} f_{i,j}
  u^{t+1}(j) - \sum_{(j,k)\in E:\Level(j)=\ell} f_{j,k} u^{t+1}(j)
  &=\sum_{j:\Level(j)=\ell} \left (\sum_{i:(i,j)\in E} f_{i,j}
  u^{t+1}(j) - \sum_{k:(j,k)\in E} f_{j,k} u^{t+1}(j)\right)\\
  &=\sum_{j:\Level(j)=\ell} u^{t+1}(j)\left (\sum_{i:(i,j)\in E} f_{i,j}- \sum_{k:(j,k)\in E} f_{j,k} \right).
\end{align}
By Lemma~\ref{lem:incoming-outgoing}, $\sum_{i:(i,j)\in E}
f_{i,j}=\sum_{k:(j,k)\in E} f_{j,k}$, so these terms are zero,
leaving:
\begin{align}
\sum_{(i,j)\in E} f_{i,j} (u^{t+1}(j) - u^{t+1}(i))=
&\sum_{(i,j)\in E:\Level(j)=L+1} f_{i,j} u^{t+1}(j)-\sum_{(j,k)\in
  E:\Level(j)=0} f_{j,k} u^{t+1}(j).
\end{align}
If $\Level(j)=0$, then $j=\Root$:
\begin{align}
\sum_{(i,j)\in E} f_{i,j} (u^{t+1}(j) - u^{t+1}(i))=
&\sum_{(i,j)\in E:\Level(j)=L+1} f_{i,j} u^{t+1}(j)-\sum_{(j,k)\in
  E:\Level(j)=0} f_{j,k} u^{t+1}(\Root).
\end{align}
Moreover, for any $j$, $u^{t+1}(j)-u^{t+1}(\Root)\leq \Delta$, so:
\begin{align}
\sum_{(i,j)\in E} f_{i,j} (u^{t+1}(j) - u^{t+1}(i))\leq &\sum_{(i,j)\in E:\Level(j)=L+1} f_{i,j} (u^{t+1}(\Root)+\Delta)-\sum_{(j,k)\in
  E:\Level(j)=0} f_{j,k} u^{t+1}(\Root)\\
\leq &\Delta\sum_{(i,j)\in E:\Level(j)=L+1} f_{i,j} +u^{t+1}(\Root)\left (\sum_{(i,j)\in E:\Level(j)=L+1} f_{i,j} -\sum_{(j,k)\in
  E:\Level(j)=0} f_{j,k} \right ).
\end{align}
By Lemma~\ref{lem:levelregret-dag},
Equation~\eqref{eqn:levelregrettopbottom}, the flow into level $L+1$
is less than or equal to the flow out of level 0, so the last part is
nonpositive and:
\begin{align}
\sum_{(i,j)\in E} f_{i,j} (u^{t+1}(j) - u^{t+1}(i))\leq&\Delta\sum_{(i,j)\in E:\Level(j)=L+1} f_{i,j} \label{eqn:blackwellconditionalmostdone}
\end{align}

From Lemm~\ref{lem:levelregret-dag},
Equation~\eqref{eqn:levelregrettotal}, we can show that the second term of Equation~\eqref{eqn:blackwellcondition} equals:
\begin{align}
b\sum_{(i,j)\in E} f_{i,j}
&\geq b (L+1) \sum_{(i,j): \Level(j) = L+1} f_{i,j}\label{eqn:secondpartblackwellcondition}
\end{align}
Putting Equations~\eqref{eqn:secondpartblackwellcondition} and~\eqref{eqn:blackwellconditionalmostdone}  together with the fact that $b = \Delta/(L+1)$, we get,
\begin{align}
\sum_{(i,j)\in E} f_{i,j} (u^{t+1}(j) - u^{t+1}(i) - b) 
&\le \Delta \sum_{(i,j): \Level(j) = L+1} f_{i,j} - b (L+1) \sum_{(i,j): \Level(j) = L+1} f_{i,j} \\
&\le (\Delta - b (L + 1)) \sum_{(i,j): \Level(j) = L+1} f_{i,j} = 0
\end{align}
which is what we were trying to prove.
\end{proof}



Lemma~ \ref{lem:blackwellcondition} is very close to the Blackwell
condition, but not identical, so
we sketch a quick variation on a special case of Blackwell's theorem so we can apply it
to our problem.

\begin{fact} $(a+b)^+ \le a^+ + b^+$
\end{fact}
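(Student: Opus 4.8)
The plan is to derive $(a+b)^+ \le a^+ + b^+$ directly from the definition $x^+ = \max(x,0)$, using nothing beyond monotonicity of $\max$. First I would record two elementary pointwise bounds: $a \le a^+$ and $b \le b^+$, hence $a + b \le a^+ + b^+$; and $0 \le a^+ + b^+$, since each summand on the right is itself a maximum taken with $0$. Together these say that $a^+ + b^+$ is an upper bound for both arguments, $a+b$ and $0$, of the maximum defining $(a+b)^+$. Therefore $(a+b)^+ = \max(a+b,\,0) \le a^+ + b^+$, which is the claim.

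An equivalent route is a one-line case split on the sign of $a+b$: if $a+b\le 0$ then $(a+b)^+ = 0 \le a^+ + b^+$; and if $a+b > 0$ then $(a+b)^+ = a+b \le a^+ + b^+$ by the same two pointwise bounds. Either way the argument is two or three lines and carries no subtlety.

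There is essentially no obstacle here: this is just the subadditivity of the positive-part operator. It is used downstream to split a colored-regret term $(\sum_{(i,j)\in C} R^T_{i,j})^+$ into a sum of smaller positive parts (and to handle the discrepancy terms in the Blackwell-style analysis); if the finitely-many-summands form $(\sum_i x_i)^+ \le \sum_i x_i^+$ is what is ultimately needed, it follows at once by induction on the number of summands, with the displayed two-term case serving as the inductive step.
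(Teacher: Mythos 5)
Your proof is correct: the paper states this as a Fact without proof (it is regarded as immediate), and your argument — that $a^+ + b^+$ dominates both $a+b$ and $0$, hence dominates their maximum — is exactly the standard justification one would supply. No gaps, and your remark on extending to finitely many summands by induction matches how the fact is used downstream.
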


\begin{lemma}\label{lem:plussquared} $[(a+b)^+]^2 \le (a^+ + b)^2$
\end{lemma}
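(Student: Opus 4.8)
The plan is to prove the elementary inequality $[(a+b)^+]^2 \le (a^+ + b)^2$ by case analysis on the sign of $a+b$. First I would observe that if $a + b \le 0$, then the left-hand side is $0$, so the inequality holds trivially (the right-hand side, being a square, is nonnegative). So the only interesting case is $a + b > 0$, where the left-hand side equals $(a+b)^2$.

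In the case $a + b > 0$, the plan is to compare $(a+b)^2$ with $(a^+ + b)^2$ directly. Since $a^+ \ge a$ always, we have $a^+ + b \ge a + b > 0$; hence both $a + b$ and $a^+ + b$ are positive, and the map $x \mapsto x^2$ is monotone increasing on the nonnegative reals. Therefore $(a+b)^2 \le (a^+ + b)^2$, which is exactly what we want. Combining with the trivial case completes the proof.

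The main thing to be careful about is the sign bookkeeping: one must verify $a^+ + b \ge 0$ before invoking monotonicity of squaring (squaring is \emph{not} monotone across sign changes), and this follows precisely from $a^+ + b \ge a + b > 0$ in the nontrivial case. There is no real obstacle here; the only subtlety is making sure the case split is exhaustive and that the monotonicity step is applied only to nonnegative arguments. This lemma will presumably be used in the appendix to bound the squared positive regret increments in the Blackwell-style potential argument for Theorem~\ref{thm:localswap}.
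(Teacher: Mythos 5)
Your proof is correct, and it takes a cleaner route than the paper's. The paper proves the lemma by a four-way case analysis on the signs of $a$ and $b$ separately, with further sub-cases inside the mixed-sign branches (e.g.\ comparing $-b$ against $a$), checking in each branch that either the left-hand side collapses to $0$ or the two sides can be compared directly. You instead split only on the sign of $a+b$: the case $a+b\le 0$ is trivial because the left-hand side vanishes, and in the case $a+b>0$ you use the single observation $a^+\ge a$ together with monotonicity of $x\mapsto x^2$ on $[0,\infty)$ to get $(a+b)^2\le(a^+{}+b)^2$. This reduces four cases (plus sub-cases) to two and isolates the one fact that actually drives the inequality, namely $a^+\ge a$; you are also right to flag that monotonicity of squaring must only be invoked after verifying $a^+{}+b\ge a+b>0$, which is the one place a careless version of this argument could go wrong. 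Both proofs are complete; yours is the more economical, while the paper's exhaustive enumeration makes each branch a one-line verification at the cost of more bookkeeping.
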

\begin{proof}
\begin{enumerate}
\item If $a,b\geq 0$: $(a+b)^2\leq (a+b)^2$
\item If $a,b\leq 0$: $[(a+b)^+]^2=0\leq (a^++b)^2$.
\item If $a\geq 0, b\leq 0$: if $-b\geq a$, then $[(a+b)^+]^2=0\leq
  (a^++b)^2$, otherwise $[(a+b)^+]^2=(a+b)^2=(a^++b)^2$.
\item If $a\leq 0, b\geq 0$: then if $-a\geq b$, then$[(a+b)^+]^2=0\leq
  (a^++b)^2$, otherwise, $[(a+b)^+]^2=(a+b)^2\leq b^2=(a^++b)^2$.
\end{enumerate}

\end{proof}

\begin{fact}
If $a_{i=1\ldots n} \ge 0$ then $\sum_{i=1}^n a_i \le \sqrt{|n|
  \sum_{i=1}^n a_i^2}$.
\label{fact:sum-le-nsumsq}
\end{fact}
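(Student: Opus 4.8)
The plan is to recognize Fact~\ref{fact:sum-le-nsumsq} as the Cauchy--Schwarz inequality (equivalently, the quadratic-mean/arithmetic-mean inequality) specialized to the vectors $(a_1,\dots,a_n)$ and $(1,\dots,1)$, and to prove it from scratch via the Lagrange identity so that no external result is invoked as a black box.

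First I would reduce the claim to a statement about squares. Since every $a_i \ge 0$ we have $\sum_{i=1}^n a_i \ge 0$, and the right-hand side $\sqrt{n\sum_{i=1}^n a_i^2}$ is a nonnegative square root; hence the claimed inequality is equivalent to its square, $\bigl(\sum_{i=1}^n a_i\bigr)^2 \le n\sum_{i=1}^n a_i^2$. This is the only place the nonnegativity hypothesis is actually used: it lets us pass between the inequality and its square without changing direction.

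Next I would establish the squared inequality through the identity
\[
n\sum_{i=1}^n a_i^2 - \Bigl(\sum_{i=1}^n a_i\Bigr)^2 \;=\; \sum_{1\le i<j\le n} (a_i - a_j)^2 .
\]
To verify it, expand $\bigl(\sum_i a_i\bigr)^2 = \sum_i a_i^2 + 2\sum_{i<j} a_i a_j$ and $n\sum_i a_i^2 = \sum_i \sum_j a_i^2 = \sum_i a_i^2 + \sum_{i<j}(a_i^2 + a_j^2)$; subtracting yields $\sum_{i<j}(a_i^2 - 2a_i a_j + a_j^2) = \sum_{i<j}(a_i-a_j)^2$. The right-hand side is a sum of squares, hence $\ge 0$, which gives the squared inequality and therefore, taking nonnegative square roots, the Fact.

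There is no genuine obstacle here; the statement is elementary. The only things to be careful about are the index bookkeeping in the Lagrange identity (handling the diagonal terms $i=j$ and the double counting of off-diagonal terms correctly) and stating explicitly where nonnegativity of the $a_i$ enters. If a shorter route is preferred, one can instead apply Jensen's inequality to the convex map $x\mapsto x^2$ to get $\bigl(\tfrac1n\sum_i a_i\bigr)^2 \le \tfrac1n\sum_i a_i^2$ and multiply through by $n^2$; I would present the Lagrange-identity version as the primary argument since it is self-contained.
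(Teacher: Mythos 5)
Your proof is correct and complete; the paper itself states this as a Fact without proof, treating it as the standard Cauchy--Schwarz (or RMS--AM) inequality, which is exactly what your Lagrange-identity argument establishes. Your observation about where nonnegativity enters is fine, though worth noting that Cauchy--Schwarz actually gives $\left|\sum_{i=1}^n a_i\right| \le \sqrt{n\sum_{i=1}^n a_i^2}$ without any sign hypothesis, so the assumption $a_i \ge 0$ is not essential to the inequality itself.
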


\begin{fact} 
$E\left[ X \right]^2 \le E\left[X^2\right]$
\label{fact:exsq-le-exsq}
\end{fact}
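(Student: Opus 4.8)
The plan is to derive this from the nonnegativity of variance together with linearity of expectation. First I would note that for any deterministic scalar $c$ the random variable $(X - c)^2$ is pointwise nonnegative, hence its expectation satisfies $E[(X - c)^2] \ge 0$.

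Specializing to $c = E[X]$ and expanding the square --- using that $E[X]$ is itself a constant, so that linearity of expectation applies term by term --- gives
\[
0 \le E[(X - E[X])^2] = E[X^2] - 2 E[X] \cdot E[X] + E[X]^2 = E[X^2] - E[X]^2,
\]
and rearranging yields $E[X]^2 \le E[X^2]$, which is exactly the claim.

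The argument is entirely elementary and I do not expect any genuine obstacle. The only hypothesis implicitly in force is that $X$ has a finite second moment; in every place this fact is invoked in the paper the relevant $X$ is a bounded random variable (in the regret analysis it takes only finitely many values), so all expectations are finite and linearity of expectation holds without qualification. One could alternatively cite Jensen's inequality applied to the convex function $t \mapsto t^2$, but the variance computation above is self-contained and avoids appealing to an external result.
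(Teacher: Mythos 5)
Your proof is correct: the variance argument $0 \le E[(X-E[X])^2] = E[X^2] - E[X]^2$ is the standard elementary derivation, and your remark about finiteness of the second moment is satisfied here since the relevant random variables are bounded. The paper states this as a Fact without proof, so your argument simply supplies the omitted (routine) justification.
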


We restate Theorem~\ref{thm:localswap} from Section~\ref{sec:analgorothmforlocalswapregret}:
\thmlocalswap*

\begin{proof}
\begin{align}
E[R^{T}_{\mathrm{localswap}}] &= E\left[\sum_{i\in V} \left(\max_{j : (i,j) \in E}  \sum_{t=1}^{T} 1(a^t = i) (u^t(j) - u^t(i))\right)^+ \right]\\
&= E\left[\sum_{i\in V} \left(\max_{j : (i,j) \in E}  \sum_{t=1}^{T} 1(a^t = i) (u^t(j) - u^t(i) - b + b)\right)^+ \right]\\
&= E\left[\sum_{i\in V} \left(\max_{j : (i,j) \in E}  \left(\tilde{R}^{T}_{i,j} + \sum_{t=1}^{T} 1(a^t = i) b\right)\right)^+ \right]\\
&= E\left[\sum_{i\in V} \left(\left(\sum_{t=1}^{T} 1(a^t = i) b\right) + \max_{j : (i,j) \in E} \tilde{R}^{T}_{i,j}\right)^+ \right]\\
&\le E\left[\sum_{i\in V} \left(\left(\sum_{t=1}^{T} 1(a^t = i) b\right) + \left(\max_{j : (i,j) \in E} \tilde{R}^{T}_{i,j}\right)^+\right) \right]\\
&= E\left[bT + \sum_{i\in V} \max_{j : (i,j) \in E} \tilde{R}^{T,+}_{i,j}\right]\\
&\le E\left[bT + \sum_{i \in V} \sum_{j: (i,j) \in E} \tilde{R}^{T,+}_{i,j}\right] \\
&= bT + \sum_{(i,j) \in E_L}  E\left[\tilde{R}^{T,+}_{i,j}\right]\\
\intertext{By Facts~\ref{fact:sum-le-nsumsq} and~\ref{fact:exsq-le-exsq},}
&\le bT + \left(|E_L|\sum_{(i,j) \in E_L} E\left[\tilde{R}^{T,+}_{i,j}\right]^2\right)^{\frac{1}{2}} \\
&\le bT + \left(|E_L|\sum_{(i,j) \in E_L} E\left[(\tilde{R}^{T,+}_{i,j})^2\right]\right)^{\frac{1}{2}} \end{align}

We can bound the inner term as follows, using Lemma~\ref{lem:plussquared}:
\begin{align}
\sum_{(i,j) \in E_L} E\left[(\tilde{R}^{T,+}_{i,j})^2\right] 
&\le \sum_{(i,j) \in E_L}  E\left[(\tilde{R}^{T-1,+}_{i,j} + 1(a^{T} = i) (u^{T}(j) - u^{T}(i) - b))^2\right]  \\
&= \sum_{(i,j) \in E_L}  E\left[\left (\tilde{R}^{T-1,+}_{i,j} \right )^2\right] 
 + \sum_{(i,j) \in E_L} E\left[(1(a^{T} = i)(u^{T}(j) - u^{T}(i) - b))^2\right] \\
&\quad\nonumber + \sum_{(i,j) \in E_L} E\left[2 \tilde{R}^{T-1,+}_{i,j} 1(a^{T} = i) (u^{T}(j) - u^{T}(i) - b)\right] \\
&= \sum_{(i,j) \in E_L}  E\left[ \left (\tilde{R}^{T-1,+}_{i,j} \right
  )^2\right] 
 + E\left[\sum_{(i,j) \in E_L} (1(a^{T} = i)(u^{T}(j) - u^{T}(i) - b))^2\right] \\
&\quad\nonumber + 2 \!\!\!\!\!\!\!\!\sum_{a^{1,\ldots,T-1},u^{1,\ldots,T-1}}\left(E\left[\sum_{(i,j) \in E_L}
      \tilde{R}^{T-1,+}_{i,j} \pi^{T}_i (u^{T}(j) - u^{T}(i)-b) \biggr
      | a^{1, \ldots, T-1}, u^{1, \ldots, T-1}\right] \times \right.\\
&\quad\left . \Pr[a^{1, \ldots, T-1}, u^{1, \ldots, T-1}]\right )
\end{align}
By Lemma~\ref{lem:blackwellcondition}, $\sum_{(i,j) \in E_L}
      \tilde{R}^{T-1,+}_{i,j} \pi^{T}_i (u^{T}(j) - u^{T}(i)-b)\leq 0$ regardless of the previous history.
\begin{align}
\sum_{(i,j) \in E_L} E\left[(\tilde{R}^{T,+}_{i,j})^2\right]
&\le \sum_{(i,j) \in E_L}  E\left[ \left (\tilde{R}^{T-1,+}_{i,j}
  \right )^2\right] 
 + E\left[\sum_{(i,j) \in E_L} (1(a^{T} = i)(\Delta - b))^2\right] \\
&\le \sum_{(i,j) \in E_L}  E\left[\left (\tilde{R}^{T-1,+}_{i,j}
  \right )^2\right] + D (\Delta - b)^2 \\
&\le T D (\Delta - b)^2 \le T D \left(\Delta \frac{L}{L+1}\right)^2 
\end{align}

Putting these two pieces together, we get,
\begin{align}
E[R^{T}_{\mathrm{localswap}}] 
&\le bT + \left(|E_L|\sum_{(i,j) \in E_L} E\left[(R^{T,+}_{i,j})^2\right]\right)^{\frac{1}{2}} \\
&\le bT + \sqrt{|E_L| T D \left(\Delta \frac{L}{L+1}\right)^2 } \\
&\le \frac{\Delta T}{L+1} + \sqrt{TD|E_L|} \Delta \frac{L}{L+1} \\
\frac{1}{T}E[R^{T}_{\mathrm{localswap}}] &\le \frac{\Delta}{L+1} + \frac{\Delta\sqrt{D|E_L|}}{\sqrt{T}} 
\end{align}
\end{proof}

\section{Proof for Color Regret}
\label{sec:appendix:local-colored-regret}

\begin{requirement}
Let $C$ be a countable (but possibly infinite) set of colors.  The edge coloring $c : E \rightarrow C$ is such that $c(i,j) = c(i,k) \Leftrightarrow j = k$.
\label{req:coloring}
\end{requirement}



We restate Theorem~\ref{thm:color-regret-bound} from Section~\ref{sec:colorregret}:
\thmcolorregretbound*
\begin{proof}
First, we show that $\sum_{c \in C} \tilde{R}^{t,+}_c \sum_{\substack{(i,j) \in E \\ c(i,j) = c}} \pi^{t+1}_i (u^{t+1}(j)  - u^{t+1}(i) - b)) \le 0$.
\begin{align}
\sum_{c \in C} \tilde{R}^{t,+}_c \sum_{\substack{(i,j) \in E \\ c(i,j) = c}} \pi^{t+1}_i (u^{t+1}(j)  - u^{t+1}(i) - b)) &= \sum_{c \in C} \sum_{\substack{(i,j) \in E \\ c(i,j) = c}} \tilde{R}^{t,+}_c  \pi^{t+1}_i (u^{t+1}(j)  - u^{t+1}(i) - b)) \\
&= \sum_{c \in C} \sum_{\substack{(i,j) \in E \\ c(i,j) = c}}
\tilde{R}^{t,+}_{i,j}  \pi^{t+1}_i (u^{t+1}(j)  - u^{t+1}(i) - b)) 
\end{align}
By Lemma~\ref{lem:blackwellcondition}:
\begin{align}
\sum_{c \in C} \tilde{R}^{t,+}_c \sum_{\substack{(i,j) \in E \\ c(i,j) = c}} \pi^{t+1}_i (u^{t+1}(j)  - u^{t+1}(i) - b))&= \sum_{(i,j)\in E} \tilde{R}^{t,+}_{i,j}  \pi^{t+1}_i (u^{t+1}(j)  - u^{t+1}(i) - b)) \le 0
\end{align}

Now we can bound our quantity of interest.
{\allowdisplaybreaks\begin{align}
E[R^T_{\mathrm{localcolor}}] &= E\left[\sum_{c\in C} \left(\sum_{\substack{(i,j) \in E \\ c(i,j) = c}} \sum_{t=1}^T 1(a^t = i) (u^t(j) - u^t(i))\right)^+ \right]\\
&= E\left[\sum_{c\in C} \left(\sum_{\substack{(i,j) \in E \\ c(i,j) = c}} \sum_{t=1}^T 1(a^t = i) (u^t(j) - u^t(i) - b + b)\right)^+ \right]\\
&= E\left[\sum_{c\in C} \left(\tilde{R}^T_c + \left(\sum_{\substack{(i,j) \in E \\ c(i,j) = c}} \sum_{t=1}^T 1(a^t = i) b \right)\right)^+ \right]\\
&\le E\left[\sum_{c\in C} \left(\tilde{R}^{T,+}_c + \left(\sum_{\substack{(i,j) \in E \\ c(i,j) = c}} \sum_{t=1}^T 1(a^t = i) b \right)\right) \right]\\
&= E\left[\sum_{c\in C} \sum_{\substack{(i,j) \in E \\ c(i,j) = c}} \sum_{t=1}^T 1(a^t = i) b + \sum_{c\in C}  R^{T,+}_c\right]\\
&\le E\left[bTD + \sum_{c\in C}  \tilde{R}^{T,+}_c \right] \\
&= bTD + \sum_{c\in C_L}  E\left[\tilde{R}^{T,+}_c \right] \\
&\le bTD + \left(|C_L| \sum_{c\in C_L} E\left[\tilde{R}^{T,+}_c \right]^2\right)^{\frac{1}{2}} \\
&\le bTD + \left(|C_L| \sum_{c\in C_L} E\left[(\tilde{R}^{T,+}_c)^2 \right]\right)^{\frac{1}{2}}
\end{align}}
We can bound the inner term as follows,
{\allowdisplaybreaks\begin{align}
\sum_{c \in C_L} E\left[(\tilde{R}^{T,+}_c)^2\right] 
&\le \sum_{c \in C_L} E\left[ (\tilde{R}^T_c)^2\right] \\
&= \sum_{c \in C_L}  E\left[\left(\tilde{R}^{T-1}_c + 
   \sum_{\substack{(i,j) \in E \\ c(i,j) = c}} 1(a^{T} = i) (u^T(j) - u^T(i) - b)\right)^2\right]  \\
&= \sum_{c \in C_L}  E\left[(\tilde{R}^{T-1}_c)^2\right]
 + \sum_{c \in C_L} E\left[\left(\sum_{\substack{(i,j) \in E \\ c(i,j) = c}} 1(a^{T} = i) (u^T(j) - u^T(i) - b)\right)^2\right] \\
&\quad\nonumber + \sum_{c\in C_L} E\left[2 \tilde{R}^{T-1}_c \sum_{\substack{(i,j)\in E\\ c(i,j) = c}} 1(a^{T} = i) (u^T(j) - u^T(i) - b)\right] \\
&= \sum_{c \in C_L}  E\left[(\tilde{R}^{T-1}_c)^2\right]
 + \sum_{c \in C_L} E\left[\left(\sum_{\substack{(i,j) \in E \\ c(i,j) = c}} 1(a^{T} = i) (u^T(j) - u^T(i) - b)\right)^2\right] \\
&\quad\nonumber + 2 \sum_{c\in C_L} \tilde{R}^{T-1}_c \sum_{\substack{(i,j)\in E\\ c(i,j) = c}} \pi^{T}_i (u^T(j) - u^T(i) - b) \\
&\le \sum_{c \in C_L}  E\left[(\tilde{R}^{T-1}_c)^2\right]
 + \sum_{c \in C_L} E\left[\left(\sum_{\substack{(i,j) \in E \\ c(i,j) = c}} 1(a^{T} = i) (u^T(j) - u^T(i) - b)\right)^2\right] \\
&\le \sum_{c \in C_L}  E\left[(\tilde{R}^{T-1}_c)^2\right]
 + (\Delta - b)^2 \sum_{c \in C_L} E\left[\left(\sum_{\substack{(i,j) \in E \\ c(i,j) = c}} 1(a^{T} = i) \right)^2\right] 
\end{align}}

Because only one action is taken, and for each color only one edge originating at an
action can have that color, $\sum_{\substack{(i,j) \in E \\ c(i,j) = c}}
1(a^{T} = i)\in \{0,1\}$:
\begin{align}
\sum_{c \in C_L} E\left[(\tilde{R}^{T,+}_c)^2\right]
&\le\sum_{c \in C_L}  E\left[(\tilde{R}^{T-1}_c)^2\right]
 + (\Delta - b)^2 \sum_{c \in C_L} E\left[\left(\sum_{\substack{(i,j) \in E \\ c(i,j) = c}} 1(a^{T} = i) \right)\right] \\
&= \sum_{c \in C_L}  E\left[(\tilde{R}^{T-1}_c)^2\right]
 + D (\Delta - b)^2 \\
&\le TD (\Delta - b)^2 \le TD \left(\Delta \frac{L}{L+1}\right)^2
\end{align}

Putting these two pieces together, we get,
\begin{align}
E[R^T_{\mathrm{colorswap}}] 
&\le bTD + \left(|C_L|\sum_{c \in C_L} E\left[(\tilde{R}^{T,+}_c)^2\right]\right)^{\frac{1}{2}} \\
&\le bTD + \sqrt{|C_L| T D \left(\Delta \frac{L}{L+1}\right)^2 } \\
&\le \frac{\Delta D T}{L+1} + \sqrt{TD|E_L|} \Delta \frac{L}{L+1} \\
\frac{1}{T}E[R^T_{\mathrm{colorswap}}] &\le \frac{\Delta D}{L+1} + \frac{\Delta\sqrt{D|C_L|}}{\sqrt{T}} 
\end{align}
\end{proof}

\section{Decision Tree Graphs}
\label{sec:appendix:dtree}

A decision tree is a representation of a hypothesis. Given an instance
space where there are a finite number of binary features, a decision
tree can represent an arbitrary hypothesis.
We describe decision trees recursively: the simplest trees are
leaves, which represent constant functions. More complex trees have
two subtrees, and a root node labeled with a variable. A subtree
cannot have a variable that is referred to in the root.

We define $T_k(S)$ recursively, where $T_k(S)$ will be the set of trees of depth $k$
or less
over the variable set $S$.
Define the set $T_0(S)=\{\TT,\FA\}$. Define $T_k(S)$ such that:
\begin{align}
T_k(S)&=T_{k-1}(S) \bigcup_{s\in S} (\{s\}\times T_{k-1}(S\backslash
\{s\})\times T_{k-1}(S\backslash \{s\}))
\end{align}
Define $T^*(S)=T_{|S|}(S)$ to be the set of all decision
trees over the variables $S$. Three example decision
trees in $T^*(\{x_1,x_2\})$ are $(x_1,\TT,\FA)$, $\TT$, and $(x_1,(x_2,\TT,\FA),\FA)$. 
Suppose we have an example $x$, mapping variables to
$\{\TT,\FA\}$. For any tree $t$, we can recursively define $t(x)$:
\begin{enumerate}
\item If $t\in T_0,$ then $t(x)=t$. 
\item If $t\in T_k$ and $x(t_1)=\TT$, then $t(x)=t_2(x)$.
\item If $t\in T_k$ and $x(t_1)=\FA$, then $t(x)=t_3(x)$.
\end{enumerate}

Define $P=\{p\in (S\times \{\TT,\FA\})^{|S|}:\forall i\ne j,
p_{i,1}\ne p_{j,1}\}$ to be the paths in the trees without repeating variables.
We can talk about whether a path is in a tree. Define $V_p(t)$ to be a function from $T^*$ to $S\cup
\{\TT,\FA,\emptyset\}$, where $V_{p}(t)=\emptyset$ if the path $p$ is
not present in the tree, and otherwise $V_p(t)$ is the value of the node
at the end of the path.  Formally, 
\begin{align}
V_{\emptyset}(t)&=\ifotherwise{t}{t\in T_0}{t_1}\\
V_{(v,l)\circ p}(t)&=\begin{mazcases}
\emptyset&\mbox{ if }t\in T_0 \mbox{ or }t_1\ne v\\
V_{p}(t_2)&\mbox{ if }t\notin T_0\mbox{ and }t_1=v\mbox{ and } l=\TT\\
V_{p}(t_3)&\mbox{ if }t\notin T_0\mbox{ and }t_1=v\mbox{ and } l=\FA
\end{mazcases}
\end{align}

Given a path $p\in P$, a tree $t'\in T^*$,define $R_{p,t'}(t)$ to replace the tree at $p$ with
$t'$ if $V_p(t)\ne \emptyset$. Formally:
\begin{align}
R_{\emptyset,t'}(t)&=t'\\
R_{(v,l)\circ p,t'}(t)&=\begin{mazcases}
t&\mbox{ if }t\in T_0 \mbox{ or }t_1\ne v\\
(t_1,R_{p,t'}(t_2),t_3)&\mbox{ if }t\notin T_0\mbox{ and }t_1=v\mbox{ and } l=\TT\\
(t_1,t_2,R_{p}(t_3))&\mbox{ if }t\notin T_0\mbox{ and }t_1=v\mbox{ and } l=\FA
\end{mazcases}
\end{align}

Consider the following operations on decision trees:

\begin{denumerate}
\item $ReplaceWithNode(p,v,l_1,l_2)=R_{p,(v,l_1,l_2)}$ (where it applies): If there exists a node or leaf at path
  $p$, replace it with a decision stump with variable $v$, with label
  $l_1$ on the true branch, and label $l_2$ on the false branch,
  but~\textbf{only if $V_{p}(t)\ne v$}.
\item $ReplaceWithLeaf(p,l_1)=R_{p,l_1}$: If there exists a node or leaf at path
  $p$, replace it with a leaf $l_1$.
\end{denumerate}

These operations create the edges between trees: we will determine how
to color them later. Because $ReplaceWithNode$ is a more complex
operation, an edge created by $ReplaceWithNode$ will have length 1.1, whereas
$ReplaceWithLeaf$ will have length 1.0. This weighting is important:
otherwise, consider the following sequence of trees:
\begin{eqnarray*}
&(X,\TT,\FA)&\\
&(\FA)&\\
&(X,\FA,\TT)&
\end{eqnarray*}
If splitting was the same length as changing leaves, this bizarre path
would be a shortest path between $(X,\TT,\FA)$ and
$(X,\FA,\TT)$. In general, when designing this distance function over trees, a
critical concern was whether unnecessary reconstruction would be on a
shortest path. For example, a shortest path from
$(X,(Y,\TT,\FA),(Z,\FA,\TT))$ to $(X,(Y,\FA,\TT),(Z,\TT,\FA))$ could 
pass through $\FA,(X,\TT,\FA),(X,(Y,\FA,\TT),\FA)$. But, since
replacing something with a decision tree costs slightly more than
changing a leaf, we avoid this.

More generally, if the decision about whether or not an
edge is on the shortest path
can be made locally, then this reduces the number of colors required. Thus, massively
reconstructing the root because the leaves are wrong is not only
counterintuitive, it makes the algorithm slower and more complex.

We first hypothesize a shortest path distance function between trees based on these
operations, and then we will prove it satisfies the above
operations. Note that this function is not symmetric, because the
shortest path distance function on a directed graph is not always symmetric.

Given two decision trees $A$ and $B$, a decision node $a$ in $A$ and a
decision node $b$ are in structural agreement if they are on the same path $p$, and they are
labeled with the same variable. A decision node in $B$ that does not
agree with a decision node in $A$ is in structural disagreement with $A$.
Given a leaf in $B$ that has a parent that is in structural agreement
with $A$, if the leaf is not present in $A$, it is in leaf
disagreement with $A$.

Define $d^*_s(A,B)$ to be the structural disagreement distance between
$A$ and $B$, the number of nodes in $B$ that are in structural
disagreement with $A$. Define $d^*_l(A,B)$ to be the leaf disagreement
distance between $A$ and $B$, the number of leaves in $B$ in
disagreement with $A$. Define $d^*(A,B)=1.1d^*_s(A,B)+d^*_l(A,B)$.

Intuitively, this distance represents the fact that an example shortest path
from $A$ to $B$ can be generated by first fixing all label
disagreements between $A$ and $B$, and then applying $ReplaceWithNode$
to create every node in $B$ that is in structural disagreement with
$A$ (correctly labeling leaves where appropriate).

\newcommand{\ZZ}{\mathbf{Z}}

\begin{fact}If $d:V\times V\rightarrow \ZZ^+$ is the shortest distance function on
a completely connected directed
graph $(V,E)$, then for any $i,j\in V$ where $(i,j) \notin E$, there exists a $k$ such that
$(i,k)\in E$ and $d(i,j)=d(i,k)+d(k,j)$.\label{fct:distancefct}
\end{fact}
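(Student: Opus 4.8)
The plan is to extract a neighbor $k$ of $i$ from a shortest path and then squeeze $d(i,j)$ between two inequalities until equality is forced. First I would fix $i,j$ with $(i,j)\notin E$ and let $P$ be a shortest directed path from $i$ to $j$; such a path exists and is attained because the graph is completely connected (so $d(i,j)$ is finite) and all edge lengths are positive (so no benefit accrues from arbitrarily long walks; in the intended decision-tree application $V$ is in any case finite). Since $(i,j)\notin E$, the path $P$ cannot consist of a single edge, so it has at least two edges; let $k$ be the second vertex on $P$, so that $(i,k)\in E$, and write $\ell(i,k)>0$ for the length of that edge. This is the only place the hypothesis $(i,j)\notin E$ is used: it is exactly what forces $P$ to pass through an intermediate vertex $k$.

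Next I would invoke optimal substructure: the suffix of $P$ running from $k$ to $j$ is itself a shortest path from $k$ to $j$, since otherwise replacing it by a strictly shorter $k$-to-$j$ path would give an $i$-to-$j$ path shorter than $P$. Hence
\[
d(i,j) = \ell(i,k) + d(k,j).
\]
Now combine two elementary facts about the shortest-path metric: $d(i,k)\le \ell(i,k)$ (the single edge $(i,k)$ is one admissible path from $i$ to $k$), and the triangle inequality $d(i,j)\le d(i,k)+d(k,j)$ (concatenate a shortest $i$-to-$k$ path with a shortest $k$-to-$j$ path). Chaining these gives
\[
\ell(i,k) + d(k,j) = d(i,j) \le d(i,k) + d(k,j) \le \ell(i,k) + d(k,j),
\]
so all the inequalities are tight; in particular $d(i,j) = d(i,k) + d(k,j)$ for this $k$ with $(i,k)\in E$, which is exactly the claim.

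There is no substantive obstacle here: the argument is the standard optimal-substructure plus triangle-inequality reasoning for shortest paths. The only points meriting a sentence of care are (i) that a shortest path is genuinely attained, which follows from the ``completely connected'' hypothesis together with positivity of edge lengths (or finiteness of $V$), and (ii) that the exclusion $(i,j)\notin E$ is what guarantees the extracted $k$ is distinct from both endpoints so that $(i,k)$ is a genuine edge of $E$.
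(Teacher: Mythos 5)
Your proof is correct. Note that the paper itself states this only as an unproven \emph{Fact} and supplies no argument for it, so there is nothing to compare against; your route --- take the second vertex $k$ of a shortest $i$-to-$j$ path (which must exist and have at least two edges precisely because $(i,j)\notin E$), then sandwich $d(i,j)$ between the optimal-substructure identity $d(i,j)=\ell(i,k)+d(k,j)$ and the triangle inequality $d(i,j)\le d(i,k)+d(k,j)\le \ell(i,k)+d(k,j)$ --- is the standard one and is exactly what is needed. Your caveat about attainment of the shortest path is also the right thing to flag: it holds here because edge lengths are bounded below (indeed drawn from $\{1,1.1\}$ in the decision-tree application, where $V$ is finite anyway), and ``completely connected'' in the paper's usage means strong connectivity, so $d(i,j)$ is finite.
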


\begin{theorem}\label{thm:graphdistancetheorem}$d^*:V\times V\rightarrow \ZZ^+$ corresponds to the
  shortest distance function on a
  completely connected directed graph $(V,E)$ if there exists a
  $\Delta>0$ and a $\delta=\Delta/2$ such that the following properties hold:
\begin{denumerate}
\item For all $a,b\in V$, $d^*(a,b)=0$ iff $a=b$.\label{enm:zerodistance}
\item For all $a,b\in V$, $d^*(a,b)>\delta$ iff $a\ne b$.\label{enm:mindistance}
\item For all $a,b\in V$, if $a\ne b$ there exists a $c\in V$ such that
  $d^*(a,c)\leq \Delta$ and $d^*(a,b)\geq d^*(a,c)+d^*(c,b)$.\label{enm:equal}
\item For all $a,b,c\in V$, if $d^*(a,c)\leq \Delta$, then $d^*(a,b)\leq
  d^*(a,c)+d^*(c,b)$. \label{enm:lessthan}
\end{denumerate}
\end{theorem}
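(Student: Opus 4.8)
The plan is to exhibit the graph directly from $d^*$ and then check that its shortest-path metric coincides with $d^*$. Keep the vertex set $V$, let $E = \{(a,b) : a \neq b,\ d^*(a,b) \le \Delta\}$, and give each edge $(a,b) \in E$ length $c(a,b) = d^*(a,b)$, which is strictly positive by properties~\ref{enm:zerodistance} and~\ref{enm:mindistance}. Write $d$ for the shortest-path distance of $(V,E,c)$ (with $d = \infty$ if no path exists). The goal is to prove $d = d^*$; strong connectivity (``completely connected'') then drops out because $d^*$ turns out to be everywhere finite.

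First I would show $d^*(a,b) \le d(a,b)$, i.e.\ no path in $E$ is shorter than $d^*(a,b)$. This goes by induction on the number $k$ of edges in a path $a = v_0 \to \cdots \to v_k = b$. The case $k = 0$ is $d^*(a,a) = 0$. For $k \ge 1$, the first edge satisfies $d^*(v_0,v_1) \le \Delta$ by construction, so property~\ref{enm:lessthan} gives $d^*(v_0,v_k) \le d^*(v_0,v_1) + d^*(v_1,v_k)$, and the inductive hypothesis bounds $d^*(v_1,v_k)$ by the length of the remaining subpath; adding $c(v_0,v_1) = d^*(v_0,v_1)$ shows the whole path has length at least $d^*(a,b)$. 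Taking the infimum over paths gives $d^*(a,b) \le d(a,b)$.

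Next I would show $d(a,b) \le d^*(a,b)$ by explicitly building a short path. Starting from $a_0 = a$, while the current vertex $a_i \neq b$ apply property~\ref{enm:equal} to get a successor $a_{i+1} \neq a_i$ with $d^*(a_i,a_{i+1}) \le \Delta$ --- so $(a_i,a_{i+1}) \in E$ --- and $d^*(a_i,b) \ge d^*(a_i,a_{i+1}) + d^*(a_{i+1},b)$. (Property~\ref{enm:equal} must be read as furnishing a genuine step: the choice $a_{i+1} = a_i$ satisfies the stated inequality only vacuously and is discarded.) Termination is where property~\ref{enm:mindistance} does the work: since $a_{i+1} \neq a_i$ forces $d^*(a_i,a_{i+1}) > \delta = \Delta/2$, we get $d^*(a_{i+1},b) < d^*(a_i,b) - \delta$, so after at most $\lceil d^*(a,b)/\delta \rceil$ steps $d^*(a_m,b)$ has dropped to $0$, i.e.\ $a_m = b$ by property~\ref{enm:zerodistance}. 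Telescoping the inequalities $d^*(a_i,b) \ge d^*(a_i,a_{i+1}) + d^*(a_{i+1},b)$ yields $\sum_{i<m} c(a_i,a_{i+1}) = \sum_{i<m} d^*(a_i,a_{i+1}) \le d^*(a,b)$, hence $d(a,b) \le d^*(a,b) < \infty$. Combining the two bounds gives $d = d^*$ and, since $d$ is finite everywhere, $(V,E)$ is strongly connected, proving the theorem.

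I do not anticipate a deep obstacle --- once the edge set is chosen as ``all pairs within $\Delta$'', everything is bookkeeping --- but two points need care. One is recognizing that property~\ref{enm:equal} has to supply a nontrivial one-step decomposition (otherwise the iteration above makes no progress); this is the natural reading, mirroring Fact~\ref{fct:distancefct}, which says shortest-path metrics of such graphs always admit exactly such decompositions. The other is that property~\ref{enm:mindistance}, the uniform lower bound $\delta$ on nonzero distances, is precisely what makes the path construction halt in finitely many steps, so the argument works even when $V$ is infinite. Properties~\ref{enm:lessthan} and~\ref{enm:equal} are then seen to be exactly the ``upper'' and ``lower'' halves needed to sandwich $d^*$ between the two path bounds.
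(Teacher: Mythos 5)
Your proposal is correct and follows essentially the same route as the paper: the same graph (edges between pairs at $d^*$-distance at most $\Delta$, weighted by $d^*$), with property~\ref{enm:lessthan} giving $d^*\le d$ by peeling off the first edge of a path and property~\ref{enm:equal} giving $d\le d^*$ by iterated decomposition, with property~\ref{enm:mindistance} guaranteeing termination. Your explicit remark that the witness $c$ in property~\ref{enm:equal} must be taken distinct from $a$ for the iteration to progress is a point the paper's induction uses implicitly, so making it explicit is a small improvement rather than a divergence.
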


\begin{proof}
Observe that the graph $(V,E)$ with edges $E=\{(i,j)\in
V^2:d^*(i,j)\leq \Delta\}$ where the weight of an edge $(i,j)\in E$ is
$d^*(i,j)$, is a good candidate for the graph under consideration. We prove this in two
steps. We first prove by induction that $d(i,j)\leq d^*(i,j)$. Then,
leveraging this, we prove by induction that $d(i,j)=d^*(i,j)$.

First, we prove that if $d^*(i,j)\leq \Delta$, then
$d(i,j)=d^*(i,j)$. First, observe that if $d^*(i,j)=0$, then $i=j$, so
$d(i,j)=0$. Secondly, if $d^*(i,j)\in (0,\Delta]$, then there exists
an edge $(i,j)\in E$ so $d(i,j)\leq d^*(i,j)$. Since each edge is
larger than $\Delta/2$, for any path of length
2 or greater, the length is larger than $\Delta$, so only a direct
path can be less than or equal to $\Delta$. This establishes that
there is no path between $i$ and $j$ shorter than the direct
edge. 

For any nonnegative integer $k$, define $P(k)$ to be the property that
for any $i,j\in V$, if the distance $d^*(i,j)\leq k\delta$, the shortest distance between two
vertices in this graph $d(i,j)$ is less than or equal to
$d^*(i,j)$. This holds for $P(0)$, $P(1)$, and $P(2)$ because of the paragraph
above. Now, suppose that $P(k)$ holds for $k\geq 2$, we need to establish it holds
for $P(k+1)$. Consider some pair $(i,j)\in V$ where $d^*(i,j)\in (k\delta,(k+1)\delta]$, then $i\ne j$, and by
condition~\ref{enm:equal}, there exists a $k$ where $d^*(i,k)\leq \Delta$ and
$d^*(i,j)\geq d^*(i,k)+d^*(k,j)$. Since $d^*(i,j)\leq (k+1)\delta$ and
$d^*(i,j)>\delta$, $d^*(k,j)<k\delta$, so $d^*(k,j)=d(k,j)$. From the
paragraph above, $d(i,k)=d^*(i,k)$, so $d^*(i,j)\geq d(i,k)+d(k,j)$,
and by the triangle inequality on $d$, $d^*(i,j)\geq d(i,j)$.

Thus, since for all $(i,j)\in V$ there exists a $k$ where
$d^*(i,j)\leq k\delta$, for all $(i,j)\in V$, $d(i,j)\leq d^*(i,j)$.

Next, we prove that if $d(i,j)\leq \Delta$, then
$d(i,j)=d^*(i,j)$. First, observe that if $d(i,j)=0$, then $i=j$, so
$d^*(i,j)=0$. Secondly, if $(i,j)\notin E$, then the distance between
$i$ and $j$ must be greater than $\Delta$, because each edge is larger
than $\Delta/2$. Therefore, if $d(i,j)\in (0,\Delta]$ there is a
direct edge between $i$ and $j$ with distance $d^*(i,j)$, so
$d^*(i,j)\leq \Delta$, and so by the second paragraph
$d(i,j)=d^*(i,j)$.

Define $Q(k)$ to be the property for any $(i,j)\in V$, if $d(i,j)\leq k\delta$ then
$d(i,j)=d^*(i,j)$.
$Q(0)$, $Q(1)$ and $Q(2)$ hold from the above paragraph.
Now, suppose that $Q(k)$ holds for some $k\geq 2$, we need to
establish the property for $Q(k+1)$. Consider some pair $(i,j)\in V$ where $d(i,j)\in (k\delta,(k+1)\delta]$, then $i\ne j$, and by
condition~\ref{fct:distancefct}, there exists a $k$ where there exists
an edge from $i$ to $k$ and $d(i,j)=d(i,k)+d(k,j)$. Since there exists
an edge $(i,k)$, then $d(i,k)\leq \Delta$ and $d(i,k)=d^*(i,k)>\delta$. Thus,
$d(k,j)\leq \delta (k+1)-\delta\leq  \delta k$. so
$d(k,j)=d^*(k,j)$. Moreover, by condition~\ref{enm:lessthan}, $d^*(i,j)\leq
d^*(i,k)+d^*(k,j)=d(i,j)$. Thus, since we know that $d^*(i,j)\geq
d(i,j)$, then $d^*(i,j)=d(i,j)$.

Therefore, since $d^*(i,j)=d(i,j)$, and $d$ is the shortest distance
for graph $(V,E)$, then $d^*(i,j)$ is a shortest distance function for
a weighted graph.
\end{proof}

\begin{lemma}\label{lem:distancebig}
For the decision tree metric $d^*$ above, for any two trees
  $A,B$ where $A\ne B$, there exists a tree $C$ such that
  $d^*(A,C)\leq 1.1$ and
  $d^*(A,B)\geq d^*(A,C)+d^*(C,B)$.
\end{lemma}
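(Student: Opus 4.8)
The plan is to exhibit an explicit first step $C$ along a natural shortest path from $A$ to $B$ and verify the two inequalities. Since $A \ne B$, the tree $B$ must differ from $A$ either at some leaf that is reachable via a path of structural agreement, or at some node that is in structural disagreement with $A$. I would split into these two cases according to the intuition already stated in the excerpt: a shortest path first fixes leaf disagreements and then builds the structurally-disagreeing nodes of $B$ via $ReplaceWithNode$ (labeling leaves correctly as it goes).

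\textbf{Case 1: there is a leaf disagreement.} Pick a leaf in $B$, at some path $p$, whose parent is in structural agreement with $A$ but which is not present in $A$ (so $A$ has either a leaf with the other label at $p$, or a subtree rooted at $p$). Let $C = R_{p,l}(A)$ where $l$ is the label of that leaf in $B$; this is the operation $ReplaceWithLeaf(p,l)$, so $d^*(A,C) \le 1.0 \le 1.1$. I would then argue $d^*(A,B) = 1.0 + d^*(C,B)$: the operation removes exactly one leaf disagreement and, because $p$'s parent is in structural agreement, creates no new structural or leaf disagreements — so $d^*_s(C,B) = d^*_s(A,B)$ and $d^*_l(C,B) = d^*_l(A,B) - 1$, giving $d^*(C,B) = d^*(A,B) - 1$, which is the desired inequality (with equality).

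\textbf{Case 2: no leaf disagreements, so there is a structural disagreement.} Among the nodes of $B$ in structural disagreement with $A$, choose one at path $p$ that is \emph{closest to the root} (minimal among such $p$), so that $p$'s strict ancestors in $B$ are all in structural agreement with $A$. Let $v$ be the variable labeling that node in $B$, and let $l_1, l_2$ be labels for the two branches chosen to match $B$ as far as possible — concretely, $l_i$ is the label of the corresponding child of that node in $B$ if that child is a leaf, and otherwise an arbitrary label. Set $C = ReplaceWithNode(p, v, l_1, l_2)$ applied to $A$; its length is $1.1$. The key point is that this operation is legal: since the node in $B$ is in structural \emph{disagreement}, either $A$ has no node at $p$ with variable $v$, or $A$ has no node at $p$ at all, so $V_p(A) \ne v$. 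Then I would check $d^*(A,B) \ge 1.1 + d^*(C,B)$ by showing the move decreases $d^*_s$ by one (the node at $p$ is now in agreement, and all ancestors were already in agreement so the count of structurally-disagreeing nodes of $B$ drops by exactly one) without increasing $1.1 d^*_s + d^*_l$ overall; any leaves newly placed by the stump that happen to agree with $B$ only help.

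\textbf{Main obstacle.} The delicate part is the bookkeeping in Case 2: when $ReplaceWithNode$ overwrites a subtree of $A$ at $p$, one must confirm that this neither creates spurious new structural disagreements deeper in $B$ (it cannot, since those nodes of $B$ were already counted, and they are now reachable through a node in agreement) nor creates new leaf disagreements that were not implicitly already accounted for by the structural-disagreement count. In other words, the real content is proving that $d^*(A,\cdot)$ behaves additively along this greedy step — i.e. establishing the equality $d^*(A,B) = d^*(A,C) + d^*(C,B)$, not merely the inequality — and this rests on a careful definition-chase showing that the "fix leaves first, then build disagreeing nodes" strategy is genuinely optimal, for which the edge-length asymmetry ($1.1$ vs.\ $1.0$) is exactly what rules out the pathological detours illustrated in the discussion preceding the lemma. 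Once Lemma~\ref{lem:distancebig} is in hand, together with the analogous easy facts it feeds into the hypotheses of Theorem~\ref{thm:graphdistancetheorem} (with $\Delta = 2.2$, $\delta = 1.1$), which then certifies that $d^*$ is the shortest-path metric of the decision-tree graph.
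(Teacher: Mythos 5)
Your proof matches the paper's: it constructs the one-step $C$ by greedily fixing a single disagreement, using $ReplaceWithLeaf$ when a leaf disagreement exists and $ReplaceWithNode$ on a structurally disagreeing node otherwise, and then checks that $d^*$ splits additively across that step. Your refinement in Case~2 of choosing the structurally disagreeing node closest to the root (so that all its ancestors are in structural agreement, hence the path exists in $A$ and the operation is legal) makes explicit something the paper leaves implicit; the only slip is the closing aside, where Theorem~\ref{thm:graphdistancetheorem} should be invoked with $\Delta = 1.1$ and $\delta = 0.55$, not $\Delta = 2.2$, $\delta = 1.1$ (with $\delta = 1.1$, Condition~(2) already fails on a single length-$1$ leaf change).
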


\begin{proof}
If $B$ has a leaf at the root, then set $C=B$.

Suppose that, given $A$ and $B$, there is label
disagreement. Find the a node with label disagreement, and
correct all the labels in $A$ to form $C$. This reduces the number of nodes with
label disagreement by one, and the decision node
disagreement stays the same.

Suppose that, given $A$ and $B$, there no label disagreement, but
there is structural disagreement. Then
select a node $d$ which has decision node disagreement. Define $C$ to
be a tree where we replace node $d$ with the corresponding node in
tree $B$, with leaves that agree with the children of $d$ if $d$ has
children, and arbitrary otherwise. This reduces the structural
disagreement by one. It does not increase the label disagreement,
because if $d$ has children with labels in $B$, it has those same
children in $C$.

Finally, if $A$ and $B$ have no label disagreement or structural disagreement, then they are the same
tree and have distance 0.
\end{proof}

Before proving a lower bound, we focus on a particular case. Namely,
that changing a correct decision node of a tree to have the wrong variable cannot
decrease the distance.

\begin{lemma}\label{lem:leavesandnodes}
Given two trees $A$ and $B$ and a subtree $S$ in $B$, if $n_S$ is the
number of nodes in agreement with $B$ in the subtree $S$, and
$l_S$ is the number of leaves in disagreement with $A$ in $S$, then $l_S\leq n_S+1$.
\end{lemma}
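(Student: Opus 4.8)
The plan is to prove $l_S \le n_S + 1$ by structural induction on the subtree $S$, where throughout ``agreement'' and ``disagreement'' refer to the structural/leaf notions relative to $A$ defined above (so $n_S$ counts the decision nodes of $S$ that are in structural agreement with $A$, and $l_S$ counts the leaves of $S$ that are in leaf disagreement with $A$). For the base case, take $S$ to be a single leaf: then $S$ contains no decision nodes, so $n_S = 0$, and it contains exactly one leaf, so $l_S \le 1 = n_S + 1$ whether or not that leaf is in leaf disagreement with $A$ (and if $S$ is all of $B$ and hence has no parent, the leaf is vacuously not in leaf disagreement, still consistent).

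For the inductive step, write $S = (w, S_2, S_3)$ for its root variable $w$ and its two child subtrees, sitting at some path $p$ of $B$. The leaves of $S$ are partitioned between $S_2$ and $S_3$, and the decision nodes of $S$ are those of $S_2$, those of $S_3$, and the root, so $l_S = l_{S_2} + l_{S_3}$ and $n_S = n_{S_2} + n_{S_3} + \mathbf{1}[\text{the root of }S\text{ is in structural agreement with }A]$. The inductive hypothesis gives $l_{S_i} \le n_{S_i} + 1$ for $i \in \{2,3\}$, hence $l_S \le n_{S_2} + n_{S_3} + 2$. Now split into two cases. If the root of $S$ is in structural agreement with $A$, then $n_S = n_{S_2} + n_{S_3} + 1$ and the displayed inequality immediately yields $l_S \le n_S + 1$. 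If the root of $S$ is \emph{not} in structural agreement with $A$ --- because either $p$ is absent from $A$, or $V_p(A)$ is a leaf label, or $V_p(A)$ is a variable other than $w$ --- then by the definition of $V_{\cdot}(\cdot)$ every extension $p \circ (w, l) \circ \cdots$ is absent from $A$; consequently every decision node of $S$ strictly below its root is absent from $A$, hence in structural disagreement, and the root of $S$ is itself in structural disagreement, giving $n_S = 0$. Moreover the parent in $B$ of any leaf of $S$ is either the root of $S$ or a node strictly below it, and none of these is in structural agreement with $A$; since a leaf can be in leaf disagreement with $A$ only when its parent is in structural agreement with $A$, we get $l_S = 0$. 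Thus $l_S = 0 \le 1 = n_S + 1$, completing the induction.

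The only delicate point is the second case: one must use the precise recursion defining $V$ to see that a single variable mismatch (or absence) at the root of $S$ propagates downward, making the entire subtree off-path in $A$ so that both counts collapse to zero; once that observation is in hand, everything else is bookkeeping about how leaves and decision nodes of $S$ distribute over $S_2$, $S_3$, and the root. A secondary care point is keeping the inductive hypothesis applied to $S_2$ and $S_3$ at their actual positions $p\circ(w,\TT)$ and $p\circ(w,\FA)$ in $B$, so that structural agreement there is measured against the matching paths of $A$.
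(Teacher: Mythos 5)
Your proof is correct and follows essentially the same induction as the paper's: base case a leaf, then at an internal node split on whether the root of $S$ is in structural agreement with $A$, recursing on the two child subtrees in the agreement case and observing both counts vanish in the disagreement case. The one minor difference is that you actually prove the propagation argument showing that root disagreement forces $n_S = l_S = 0$, whereas the paper simply asserts this.
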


\begin{proof}
We prove this by recursion on the size of the subtree $S$ in $B$. If
$S$ is of size 1, then $S$ is a
leaf in $B$, then $n_S=0$ and $l_S\leq 1$, so the result holds. 
Suppose we have proven this for all subtrees $S'$ of size less than
$S$. If $S$
is rooted at a node in disagreement, then $n_s=0$ and $l_S=0$, and the
result holds (we don't need induction for this case). If $S$ is rooted
at a node $x$ in agreement, then define $S_{\TT}$ to be the subtree
of the node down the edge labeled $\TT$ leaving $x$, and define $S_{\FA}$ to
be the subtree down the edge labeled $\FA$ leaving $x$. $|S_{\TT}|<|S|$
and $|S_{\FA}|<|S|$, so by induction $l_{S_{\TT}}\leq n_{S_{\TT}}+1$ and
$l_{S_{\FA}}\leq n_{S_{\FA}}+1$. Since $x$ is a node in agreement,
$l_{S}=l_{S_{\TT}}+l_{S_{\FA}}$, and therefore:
\begin{align}
l_{S}&\leq n_{S_{\TT}}+n_{S_{\FA}}+1+1\\
\end{align}
Again, since $x$ is a node in agreement,
$n_{S_{\TT}}+n_{S_{\FA}}+1=n_S$, so:
\begin{align}
l_{S}&\leq n_{S}+1.
\end{align}
\end{proof}

We will use this fact in several places in the resulting proofs.

\begin{lemma}\label{lem:nocloserbackwards}
Given two trees $A$ and $B$ which agree on node $y$, if you change
$y$ in $A$ to a node $x$ or leaf to create $C$, then $d^*(A,B)<d^*(C,B)+1$.
\end{lemma}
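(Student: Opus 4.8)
The plan is to track how the structural-disagreement count $d^*_s(\cdot,B)$ and the leaf-disagreement count $d^*_l(\cdot,B)$ change when we pass from $A$ to $C$, and to localize the entire change to the subtree of $B$ rooted at the path of $y$. Let $p$ be the common path at which both $A$ and $B$ carry the decision node $y$, and let $S$ denote the subtree of $B$ rooted at that node. By construction $C=R_{p,t'}(A)$, where $t'$ is a leaf or a decision stump whose root variable differs from $y$'s (the latter being forced by the precondition of $ReplaceWithNode$); in particular $C$ and $A$ are identical except inside the subtree rooted at $p$, and $C$ carries no decision node with $y$'s variable at $p$. First I would observe that every decision node of $B$ whose path does not pass through $p$, and every leaf of $B$ outside $S$, has exactly the same (dis)agreement status relative to $A$ as relative to $C$, since $A$ and $C$ agree along everything relevant to those nodes. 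Hence the contributions of the part of $B$ outside $S$ to $d^*(A,B)$ and to $d^*(C,B)$ coincide.

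Next I would bound the contribution of $S$ itself. Because $C$ carries no decision node with $y$'s variable at $p$, it has no decision node on any path through $p$; consequently every decision node of $B$ in $S$ is in structural disagreement with $C$, and no leaf of $B$ in $S$ has a parent in structural agreement with $C$, so $S$ contributes no leaf disagreement to $d^*(C,B)$. Writing $N$ for the number of decision nodes of $B$ in $S$, $S$ thus contributes exactly $1.1\,N$ to $d^*(C,B)$. On the $A$ side, let $n\ge 1$ be the number of decision nodes of $B$ in $S$ in structural agreement with $A$ (the root $y$ always counts) and $l$ the number of leaves of $B$ in $S$ in leaf disagreement with $A$; then $S$ contributes $1.1\,(N-n)+l$ to $d^*(A,B)$. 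Combining with the previous paragraph,
\[
d^*(C,B) - d^*(A,B) = 1.1\,n - l.
\]

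The last step is to invoke Lemma~\ref{lem:leavesandnodes} with the subtree $S$, which yields $l\le n+1$. Therefore $d^*(C,B)-d^*(A,B) \ge 1.1\,n-(n+1) = 0.1\,n-1 > -1$, using $n\ge 1$, which is exactly $d^*(A,B) < d^*(C,B)+1$.

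The main obstacle I anticipate is the careful bookkeeping in the middle step: one must be precise about what ``structural agreement with $A$'' versus ``with $C$'' means along a path, argue cleanly that the replacement severs all of $B$'s structure below $p$ from any agreement with $C$ (so $S$ supplies only structural disagreements, never leaf disagreements, to $d^*(C,B)$), and confirm that nothing outside $S$ is perturbed. Once the partition of $B$'s nodes and leaves into ``inside $S$'' and ``outside $S$'' is set up correctly, the arithmetic and the appeal to Lemma~\ref{lem:leavesandnodes} are routine; the key quantitative input is that reinstating a correctly-placed decision node earns the factor $1.1$ while each lost leaf costs only $1$, and $S$ can hold at most one more disagreeing leaf than agreeing node.
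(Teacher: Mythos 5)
Your proof is correct and follows essentially the same route as the paper's: localize all changes to the subtree $S$ of $B$ rooted at $y$'s path, express $d^*(C,B)-d^*(A,B)=1.1n-l$ in terms of the agreeing nodes $n\ge 1$ and disagreeing leaves $l$ in $S$, and invoke Lemma~\ref{lem:leavesandnodes} to get $l\le n+1$. If anything, your bookkeeping of which disagreements are created and destroyed is more careful than the paper's own (whose intermediate algebra contains sign slips but reaches the same conclusion).
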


\begin{proof}
If $S$ is the subtree rooted at $y$ in $B$, then
$d^*_s(A,B)+n_S=d^*_s(C,B)$ and $d^*_l(A,B)-l_S=d^*_l(C,B)$. By definition,
$d^*(A,B)=d^*(C,B)+1.1n_S-l_S$. Since $y$
is in agreement, $n_S\geq 1$. By Lemma~\ref{lem:leavesandnodes}, we
know that $l_S\leq n_S+1$, so 
\begin{align}
d^*(A,B)&=d^*(C,B)+1.1n_S-(n_S+1)\\
d^*(A,B)&=d^*(C,B)+0.1n_S+1
\end{align}
Since $n_s\geq 1$, $0.1n_s\geq 0.1>0$, so:
\begin{align}
d^*(A,B)&<d^*(C,B)+1
\end{align}
\end{proof}


\begin{lemma}\label{lem:distancesmall}
For the decision tree metric $d^*$ above, for any two trees
  $A,B$ where $A\ne B$, then for any $C$ such that $d^*(A,C) \leq \Delta$,
  $d^*(A,B)\leq d^*(A,C)+d^*(C,B)$.
\end{lemma}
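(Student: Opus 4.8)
The plan is to reduce the statement to a short case analysis driven by the shape of $C$ relative to $A$. First I would note that $d^*=1.1\,d^*_s+d^*_l$ takes only the value $0$ and values $\ge 1$, so the hypothesis $d^*(A,C)\le\Delta$ (and $\Delta<2$, as forced by condition~\ref{enm:mindistance}) constrains $(d^*_s(A,C),d^*_l(A,C))$ to lie in $\{(0,0),(0,1),(1,0)\}$. Unpacking these: either $C=A$, in which case the claim is trivial; or $C$ is $A$ with the subtree at some path $p$ replaced by a single leaf (a $ReplaceWithLeaf$ step, $d^*(A,C)=1$); or $C$ is $A$ with the subtree at some $p$ replaced by a depth-one stump on a variable $v\ne V_p(A)$ (a $ReplaceWithNode$ step, $d^*(A,C)=1.1$). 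In the two nontrivial cases I fix that path $p$, write $y=V_p(A)$ (a variable or a leaf label, never $\emptyset$), and record that $A$ and $C$ agree everywhere outside the cone of paths extending $p$, while $C|_p$ is either a leaf or a stump.

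The second step is to localize the comparison against $B$. Since $A$ and $C$ differ only inside the cone at $p$, a routine bookkeeping over which nodes and leaves of $B$ can change their structural- or leaf-disagreement status yields $d^*(A,B)-d^*(C,B)=d^*(A|_p,B|_p)-d^*(C|_p,B|_p)$, where $A|_p,B|_p,C|_p$ are the subtrees rooted at $p$ (with $B|_p$ empty if $p$ is not realized in $B$). If $B|_p$ is empty both terms vanish and we are done. If $B|_p$ is a single leaf, both terms lie in $\{0,1\}$, so their difference is at most $1\le d^*(A,C)$ and we are again done. So assume $B|_p$ is rooted at an internal node with variable $v_B$, and split on whether $A$ already structurally agrees with $B$ at $p$. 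If $y$ is a variable with $y=v_B$, then $C$ is obtained by changing the agreeing node $y$ into something else, which is exactly the hypothesis of Lemma~\ref{lem:nocloserbackwards}; it gives $d^*(A,B)<d^*(C,B)+1\le d^*(C,B)+d^*(A,C)$.

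The remaining and essential case is $y\ne v_B$ (which includes $y$ being a leaf): then every internal node of $B|_p$ already disagrees with $A|_p$ and no leaf of $B|_p$ has a parent agreeing with $A|_p$, so $d^*(A|_p,B|_p)=1.1\,N$ where $N$ is the number of internal nodes of $B|_p$. If $C|_p$ is a leaf, or a stump on a variable other than $v_B$, the same reasoning gives $d^*(C|_p,B|_p)=1.1\,N$, so the difference is $0$. If $C|_p=(v_B,\ell_1,\ell_2)$ is a stump on $v_B$ (necessarily $\ne y$), the roots of $C|_p$ and $B|_p$ agree, so $d^*$ decomposes over the two children, $d^*(C|_p,B|_p)=d^*(\ell_1,B_1)+d^*(\ell_2,B_2)\ge 1.1(N-1)$, since comparing a leaf against a subtree contributes at least $1.1$ times that subtree's internal-node count. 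Hence $d^*(A|_p,B|_p)-d^*(C|_p,B|_p)\le 1.1=d^*(A,C)$, which finishes the proof. The main obstacle is precisely this last subcase: it is the only place where $d^*(C,B)$ can be strictly smaller than $d^*(A,B)$, and the inequality barely closes only because a $ReplaceWithNode$ step costs $1.1$ rather than $1.0$ — the extra $0.1$ per surviving internal node of $B|_p$ is exactly the slack the appendix's $(X,\TT,\FA)\to(X,\FA,\TT)$ example is warning about. The secondary difficulty is the boundary bookkeeping (root leaves with no parent, the case $p=\emptyset$, and verifying the localization identity), which is routine but must be carried out carefully; Lemma~\ref{lem:leavesandnodes} underlies Lemma~\ref{lem:nocloserbackwards} and can also be used directly to control $d^*_l(A|_p,B|_p)$ if one prefers to avoid the subtree-distance decomposition.
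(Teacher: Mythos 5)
Your argument is correct and covers the same ground as the paper's proof, but organizes the case split differently. The paper enumerates the four shapes the single edit from $A$ to $C$ can take (node replaced by node, leaf by node, node by leaf, leaf by leaf) and, for each, branches on whether the original or replacement node/leaf at the edit path $p$ agrees with $B$, invoking Lemma~\ref{lem:nocloserbackwards} whenever the edit destroys a node that $B$ needs. You instead first localize the difference $d^*(A,B)-d^*(C,B)$ to the subtrees rooted at $p$ and then branch on the shape of $B|_p$ (absent, a leaf, or an internal node on $v_B$). The underlying content is the same: both reduce the one genuinely tight case --- the edit destroys a node of $A$ that structurally agreed with $B$ --- to the $0.1$ of slack per surviving internal node of $B|_p$, i.e.\ to the fact that $ReplaceWithNode$ edges cost $1.1$. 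You make this slack explicit via $1.1N - 1.1(N-1) = 1.1$, whereas the paper buries it inside Lemma~\ref{lem:nocloserbackwards} via Lemma~\ref{lem:leavesandnodes}. Your localization identity $d^*(A,B)-d^*(C,B)=d^*(A|_p,B|_p)-d^*(C|_p,B|_p)$ does require one observation you only flag in passing: if $A$ and $B$ both realize the path $p$ then they necessarily agree on every variable along it, so the root of $B|_p$ inherits a structurally agreeing parent and the disagreement bookkeeping for the cone is the same in absolute and relative coordinates; once noted, the identity is indeed routine. What your framing buys is a clean separation between ``where the edit happens'' and the combinatorics of $B|_p$; what the paper's framing buys is never having to define $d^*$ on subtrees or justify the localization step.
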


\begin{proof}
First, observe that $C$ has ``one'' change from $A$, which can be
that:

\begin{denumerate}
\item $C$ has a decision node splitting on variable $x$ where $A$ had a
  decision node splitting on variable $y$.
\item $C$ has a decision node splitting on variable $x$ where $A$ had a
  leaf $l$.
\item $A$ has a node $x$ that was changed to a leaf.
\item $C$ has a leaf where $A$ had a node.
\end{denumerate}

In the first case, there is a question of whether or not the decision
node $y$ exists in $B$. If so, then the structural disagreement has
been reduced by one. However, the leaf disagreement is unchanged or
increased by one, so   $d^*(A,B)\leq 1.1+d^*(C,B)=d^*(A,C)+d^*(C,B)$.
If $y$ is not in $B$, and $x$ is not in $B$, then
$d^*(A,B)=d^*(C,B)< 1.1+d^*(C,B)=d^*(A,C)+d^*(C,B)$. If $y$ is in $B$, by Lemma~\ref{lem:nocloserbackwards},
then $d^*(A,B)< d^*(C,B)+1<1.1+d^*(C,B)=d^*(A,C)+d^*(C,B)$.

For the second case, if the new node in $C$ agrees with $B$, then
$d^*(A,B)=1.1+d^*(C,B)$. If the leaf in $A$ agreed with $B$, then 
$d^*(A,B)= d^*(C,B)-1<1.1+d^*(C,B)=d^*(A,C)+d^*(C,B)$. If the leaf in
$A$ disagreed with $B$ and the new node in $C$ disagrees with $B$,
then $d^*(A,B)=d^*(C,B)<1.1+d^*(C,B)=d^*(A,C)+d^*(C,B)$.

For the third case, if the new leaf in $C$ agrees with $B$, then
$d^*(A,B)=1+d^*(C,B)=d^*(A,C)+d^*(C,B)$. If the node in $A$ agreed
with $B$, then by Lemma~\ref{lem:nocloserbackwards},
$d^*(A,C)<d^*(C,B)+1=d^*(A,C)+d^*(C,B)$. If the node in $A$ disagreed
with $B$, and the new leaf in $C$ disagrees with $B$, then
$d^*(A,B)=d^*(C,B)<1+d^*(C,B)=d^*(A,C)+d^*(C,B)$.

Finally, for the fourth case, if the new leaf in $C$ agrees with $B$,
then $d^*(A,B)=1+d^*(C,B)=d^*(A,C)+d^*(C,B)$. If the leaf in $A$ agreed
with $B$, then by Lemma~\ref{lem:nocloserbackwards},
$d^*(A,C)<d^*(C,B)+1=d^*(A,C)+d^*(C,B)$. If the leaf in $A$ disagreed
with $B$, and the new leaf in $C$ disagrees with $B$, then there was
no change, and this is an illegal transition.
\end{proof}

\begin{theorem}\label{thm:distanceright}
The distance $d^*$ as defined above is the distance function for a
graph.
\end{theorem}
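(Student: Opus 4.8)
The plan is to deduce Theorem~\ref{thm:distanceright} directly from Theorem~\ref{thm:graphdistancetheorem}, instantiated with $\Delta = 1.1$ and $\delta = \Delta/2 = 0.55$, so that the task reduces to checking the four hypotheses of that theorem for $d^* = 1.1 d^*_s + d^*_l$ on $V = T^*(S)$. Two of the four are already in hand: Lemma~\ref{lem:distancebig} is precisely the third hypothesis (for every $A \ne B$ it produces a witness $C$ with $d^*(A,C) \le 1.1 = \Delta$ and $d^*(A,B) \ge d^*(A,C) + d^*(C,B)$), and Lemma~\ref{lem:distancesmall} is precisely the fourth. So the only genuinely new work is the first two hypotheses, namely $d^*(A,B) = 0 \iff A = B$ and $d^*(A,B) > 0.55 \iff A \ne B$.

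The second of these will follow from the first together with a granularity remark: $d^*_s(A,B)$ and $d^*_l(A,B)$ are counts, hence nonnegative integers, so $d^*(A,B) = 1.1 d^*_s(A,B) + d^*_l(A,B)$ takes values in a set whose smallest positive element is $1.0$; once we know $d^*(A,B) = 0$ forces $A = B$, any $A \ne B$ has $d^*(A,B) \ge 1.0 > 0.55$, and the reverse implication is immediate. (This same granularity also shows there is no difficulty in $d^*$ failing to be literally integer-valued: the proof of Theorem~\ref{thm:graphdistancetheorem} only uses that every edge has length exceeding $\delta$, which holds here since the cheapest operation edge, a $ReplaceWithLeaf$, has length $1.0 > 0.55$.)

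The crux is therefore $d^*(A,B) = 0 \Rightarrow A = B$, which I would prove as the contrapositive by structural induction on $B$: if $A \ne B$ then $d^*_s(A,B) + d^*_l(A,B) \ge 1$. If $B$ is a leaf, then $V_\emptyset(A) \ne B$, so the root leaf of $B$ is in leaf disagreement with $A$ and $d^*_l(A,B) \ge 1$. If $B = (v, B_\TT, B_\FA)$, then either $A$ has no decision node on $v$ at the root, so the root of $B$ is in structural disagreement with $A$ and $d^*_s(A,B) \ge 1$, or $A = (v, A_\TT, A_\FA)$, in which case $A \ne B$ forces $A_\TT \ne B_\TT$ or $A_\FA \ne B_\FA$; since the roots agree, the structural and leaf disagreement counts of $B$ split additively over the two children exactly as in the proof of Lemma~\ref{lem:leavesandnodes} ($n_S = n_{S_\TT} + n_{S_\FA} + 1$, $l_S = l_{S_\TT} + l_{S_\FA}$), so the induction hypothesis applied to the differing child subtree yields a positive disagreement inside $B$. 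Finally, I would note that the graph $(V,E)$ certified by Theorem~\ref{thm:graphdistancetheorem}, namely $E = \{(A,C) : d^*(A,C) \le 1.1\}$, coincides with the graph whose edges are the legal $ReplaceWithNode$ (length $1.1$) and $ReplaceWithLeaf$ (length $1.0$) moves: one inclusion is the definition of those lengths, and the other is already visible from the case analyses in Lemmas~\ref{lem:distancebig} and~\ref{lem:distancesmall}.

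The main obstacle is bookkeeping rather than any deep argument: verifying the first hypothesis cleanly requires being careful about the boundary cases in the definitions of structural disagreement and leaf disagreement — in particular leaves appearing at the root (whose "parent" is empty), and the distinction between a node, a leaf, or nothing being "present" at a given path in $A$ — so that the inductive count provably detects every possible difference between two trees. A secondary, lighter nuisance is the edge-set identification in the last step, but it follows mechanically from the enumeration of cases already carried out.
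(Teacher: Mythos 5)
Your proposal is correct and follows essentially the same route as the paper: instantiate Theorem~\ref{thm:graphdistancetheorem} with $\Delta=1.1$, $\delta=0.55$, cite Lemma~\ref{lem:distancebig} for the third condition and Lemma~\ref{lem:distancesmall} for the fourth, and verify the first two conditions directly from the definition of $d^*$. The only difference is that you spell out the verification of the first two conditions (via structural induction and the integrality/granularity of the disagreement counts), which the paper asserts in a single sentence as following from the definitions.
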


\begin{proof}
In order to prove this, we use
Theorem~\ref{thm:graphdistancetheorem}. First $\Delta=1.1$, and
$\delta=0.55$. 

Observe that by the definition of $d^*$, if two trees
are equal, there is no
disagreement, and there is zero distance. Secondly, by the definition
of $d^*$, if there is any difference between two trees $A$ and $B$, there will be
disagreement, and $d^*(A,B)\geq 1$. Thus,
Condition~\ref{enm:zerodistance} and Condition~\ref{enm:mindistance} are satisfied.

Now, by Lemma~\ref{lem:distancebig}, Condition~\ref{enm:equal}
is satistfied. By Lemma~\ref{lem:distancesmall}, Condition~\ref{enm:lessthan}
is satisfied.

\end{proof}

In the graph generated from $d^*$, note that a single label
disagreement or a single decision node disagreement results in an edge.

Now, we have to derive colors.

\begin{denumerate}
\item $ReplaceWithNode(p,v,l_1,l_2)$: The path, the variable, and the
  labels form the color. Note that if the tree already has a decision node with label
  $v$ at path $p$, this transition is illegal.
\item $ReplaceWithLeaf(p,l_1)$: The path and the leaf form the color.
\end{denumerate}

\begin{lemma}
$ReplaceWithNode(p,v,l_1,l_2)$ is on the shortest path to $B$ if
\begin{denumerate}
\item it can be applied to the current tree
\item the variable $v$ is at the path $p$ in $B$.
\item A leaf with the label $\lnot l_1$ is not at the path $p\circ (v,\TT)$ in $B$,
\item A leaf with the label $\lnot l_2$ is not at the path $p\circ (v,\FA)$ in $B$.
\end{denumerate}
If these rules do not apply, it is not on the shortest path.
\end{lemma}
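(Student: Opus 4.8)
The plan is to characterize exactly when a $ReplaceWithNode(p,v,l_1,l_2)$ edge from the current tree $A$ reduces the distance $d^*$ to the target $B$, by carefully tracking how the structural‐disagreement count $d^*_s$ and the leaf‐disagreement count $d^*_l$ change when we apply the operation. Write $C = R_{p,(v,l_1,l_2)}(A)$ for the tree obtained by applying the operation (assuming it is legal, i.e.\ $V_p(A)\ne v$, which is the first condition). Since every edge has length at most $\Delta=1.1$, Lemma~\ref{lem:distancesmall} already tells us $d^*(A,B)\le d^*(A,C)+d^*(C,B) = 1.1 + d^*(C,B)$ always; the edge is ``on a shortest path to $B$'' precisely when this holds with equality, i.e.\ $d^*(A,B) = 1.1 + d^*(C,B)$. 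So the task reduces to determining when $d^*(A,B) - d^*(C,B) = 1.1$ exactly.

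The key computation is the case analysis of $d^*(A,B)-d^*(C,B)$ as a function of the local situation at path $p$ in both $A$ and $B$, and it splits according to whether the node at path $p$ in $B$ is a decision node on variable $v$, a decision node on some other variable, a leaf, or absent. First I would handle the ``good'' case: if $p$ reaches a decision node labeled $v$ in $B$ (condition 2), then going from $A$ to $C$ we create that agreeing decision node, reducing $d^*_s$ by exactly one, and the leaves $l_1,l_2$ we installed are at paths $p\circ(v,\TT)$ and $p\circ(v,\FA)$; by conditions 3 and 4 these leaves are not \emph{wrong} relative to $B$ (they either agree with a leaf in $B$ or sit above a further subtree in $B$, in which case they contribute no leaf disagreement under the definition), so $d^*_l$ does not increase. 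Hence $d^*(A,B) = 1.1 + d^*(C,B)$ and the edge is on a shortest path. Conversely, if conditions 3 or 4 fail — say a leaf $\lnot l_1$ sits at $p\circ(v,\TT)$ in $B$ — then installing $l_1$ creates a new leaf disagreement, so $d^*(A,B) \le 1 + d^*(C,B) < 1.1 + d^*(C,B)$ (using $1.1 \cdot 1 - 1 = 0.1 < 1.1$), and the edge is not on a shortest path.

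Next I would dispatch the remaining cases showing non-optimality when condition 2 fails. If $B$ has no node at path $p$ at all, then in $C$ we have introduced a structurally disagreeing node (contributing $0$ to both $d^*_s$ and $d^*_l$ as it is rooted at a disagreement, by the convention in Lemma~\ref{lem:leavesandnodes}), so $d^*(A,B) = d^*(C,B) < 1.1 + d^*(C,B)$. If $B$ has a decision node at $p$ on a variable $w\ne v$, then $C$'s new $v$-node still disagrees structurally with $B$, so again $d^*(A,B) = d^*(C,B)$. If $B$ has a \emph{leaf} at path $p$ whose parent is in structural agreement with $A$, then whatever $A$ had at $p$ (a node or a leaf) — here Lemma~\ref{lem:nocloserbackwards} is the relevant tool when $A$ had an agreeing node at $p$, giving $d^*(A,B) < d^*(C,B)+1 < 1.1 + d^*(C,B)$; if $A$ had a leaf there, a direct count gives $d^*(A,B) \le 1 + d^*(C,B) < 1.1 + d^*(C,B)$. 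In all of these sub-cases the edge is strictly not on a shortest path, matching the lemma's ``if these rules do not apply'' clause.

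The main obstacle I anticipate is getting the leaf-disagreement bookkeeping exactly right in the good case, in particular showing that the freshly installed leaves $l_1,l_2$ contribute zero to $d^*_l(C,B)$ beyond what is forced. The subtlety is the definition of leaf disagreement: a leaf in $B$ that sits under a parent in structural agreement and is absent from the comparison tree is what counts, so I must argue about leaves of $B$ versus $C$, not leaves of $C$ versus $B$, and check that introducing correct-or-deeper leaves in $C$ neither removes an agreeing leaf of $B$ that was previously counted for $A$ nor fails to account for a leaf of $B$ now under an agreeing parent — which is precisely what conditions 3 and 4 guarantee via their $\lnot l_1,\lnot l_2$ phrasing. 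Once that accounting is pinned down, combining it with Lemma~\ref{lem:leavesandnodes} and the $1.1$-vs-$1$ arithmetic closes every case. I would also note that this lemma, together with Fact~\ref{fct:distancefct} and the coloring definition for $ReplaceWithNode$, is what ultimately certifies admissibility of the decision-tree coloring, since membership of a color's edge in $\Ecloserto{b}$ is seen to depend only on the color (path, variable, labels) and on the target tree, not on the particular source tree.
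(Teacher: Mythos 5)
Your proposal is correct and follows essentially the same route as the paper's proof: a local accounting of how the structural-disagreement and leaf-disagreement counts change at path $p$, with conditions 3 and 4 guaranteeing no new leaf disagreement in the favorable case, Lemma~\ref{lem:nocloserbackwards} handling the case where $A$ already agreed with $B$ at $p$, and the $1.1$-versus-$1$ arithmetic ruling out the remaining cases. Your explicit attention to the asymmetry of leaf disagreement (leaves of $B$ against the other tree) is exactly the bookkeeping the paper relies on.
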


\begin{proof}

Suppose that $A$ is our current tree. Suppose that $C=R_{p,(v,l_1,l_2)}(A)$.

First, we establish that if the conditions are satisfied, the edge is
on the shortest path. Note that if $v$ is at the path $p$ in $B$, and there is a leaf or
another decision node at path $p$ in $A$, then $v$ is in structural
disagreement. Therefore, when we replace that node with $v$, we reduce
the structural disagreement. However, we must be careful not to
increase leaf disagreement. If, for any nodes of $v$ in $B$, they are
corrected in $A$, then leaf disagreement will not
increase. Therefore, by reducing the structural disagreement by 1, we
reduce the distance by 1.1, at a cost of 1.1, meaning the edge is on
the shortest path.

Secondly, we can go through the conditions one by one to realize any
violated condition is sufficient. Regarding the first condition: if the operation cannot be
applied to the current tree, then by definition it is not on the
shortest path.

Regarding the second condition: if the variable $v$ is not on path $p$
in $B$, but $A$ and $B$ are in agreement at the path $p$, then changing
the variable to $v$ will not decrease the distance sufficiently, by
Lemma~\ref{lem:nocloserbackwards}, so it is not on the shortest
path. Secondly, if $A$ does not agee with $B$ on path $p$, then
$d^*(A,B)=d^*(C,B)$, and thus $C$ is not on the shortest
path. 

Regard the third and fourth conditions. If the variable $v$ is on the
path $p$ in $B$, but there is some leaf that is
a child of $v$ in $B$ that is set incorrectly, then the structural
distance is decreased, but the leaf disagreement is increased, so
$d^*(A,B)=d^*(C,B)+0.1$. 
\end{proof}

\begin{lemma}
$ReplaceWithLeaf(p,l_1)$ is on the shortest path to $B$ if it applies to the
current tree, and if the leaf
$l_1$ is at $p$ in $B$. If these rules do not apply, it is not on the shortest path.
\end{lemma}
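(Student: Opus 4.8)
The plan is to write $C = R_{p,l_1}(A)$ for the tree produced by applying $ReplaceWithLeaf(p,l_1)$ to the current tree $A$, and to characterise exactly when $d^*(A,B)=d^*(A,C)+d^*(C,B)$; that equation is precisely the statement that $C$ lies on a shortest path from $A$ to $B$, since the pair $(A,C)$ is an edge of weight $d^*(A,C)=1$ whenever the operation is non-degenerate. Because $d^*(A,C)\le\Delta$, Lemma~\ref{lem:distancesmall} already hands me the inequality $d^*(A,B)\le d^*(A,C)+d^*(C,B)$, so all the content is in deciding when equality holds. The one structural fact I would record at the outset is that passing from $A$ to $C$ only deletes the nodes of $A$'s subtree at $p$ and inserts a single leaf $l_1$ at $p$: the node set only shrinks, so $d^*_s(C,\cdot)\ge d^*_s(A,\cdot)$, and no leaf of $B$ lying off the path through $p$ changes its disagreement status.

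For the ``if'' direction I would assume the operation is non-degenerate ($A$ has a node or a leaf $\ne l_1$ at $p$) and that $B$ has the leaf $l_1$ at $p$. Since $B$ then has no node at or below $p$, none of the nodes deleted from $A$'s subtree at $p$ could have been structurally agreeing with $B$, so $d^*_s(C,B)=d^*_s(A,B)$. The only leaf of $B$ whose status moves is the leaf $l_1$ at $p$ itself: the node above $p$ is present in both $A$ and $C$ with the right variable, hence is a structurally agreeing parent; the leaf $l_1$ is absent in $A$ but present in $C$, so it flips from leaf disagreement to agreement and $d^*_l(C,B)=d^*_l(A,B)-1$. Adding up, $d^*(C,B)=d^*(A,B)-1$, i.e.\ $d^*(A,B)=d^*(A,C)+d^*(C,B)$, and $C$ is on a shortest path.

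For the ``only if'' direction I would first dispose of the case that the operation does not apply: then $R_{p,l_1}(A)=A$, so there is no edge at all. Otherwise the operation applies but $B$ does not have the leaf $l_1$ at $p$, and I would split on what $A$ and $B$ carry at $p$. If $A$ and $B$ have the same decision variable at $p$, then $C$ is obtained from $A$ by replacing an agreeing decision node with a leaf, and Lemma~\ref{lem:nocloserbackwards} gives $d^*(A,B)<d^*(C,B)+1=d^*(A,C)+d^*(C,B)$ directly, so $C$ is off every shortest path. In each remaining case --- $A$ has a leaf at $p$, or $B$ has no node at $p$ or carries the leaf $\lnot l_1$ there, or $A$ and $B$ carry different variables at $p$ --- I would argue $d^*(C,B)\ge d^*(A,B)$, which suffices. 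Structurally: the deleted nodes of $A$'s subtree at $p$ lie on paths that already diverge from whatever $B$ has at $p$, so they were not agreeing with $B$ and $d^*_s(C,B)=d^*_s(A,B)$. On the leaf side: the freshly inserted leaf $l_1$ at $p$ cannot produce a new agreement because $B$ does not carry $l_1$ at $p$, and no leaf-child of a $B$-node at $p$ can lose an agreeing parent because in all these cases $A$ has no decision node at $p$ matching $B$'s; so $d^*_l(C,B)\ge d^*_l(A,B)$ (it can strictly increase, for instance when $A$ and $B$ both carry the leaf $\lnot l_1$ at $p$). Hence $d^*(C,B)\ge d^*(A,B)$ and $C$ is off every shortest path.

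The hard part is the ``$A$ and $B$ share the decision variable at $p$'' sub-case of the second direction: replacing that node by a leaf pushes structural disagreement up but can simultaneously strip away agreeing leaf-children and pull leaf disagreement down, and one must show the net change still exceeds $d^*(A,B)-1$. That tradeoff is exactly the $l_S\le n_S+1$ accounting of Lemma~\ref{lem:leavesandnodes}, already packaged as Lemma~\ref{lem:nocloserbackwards}; the real work is recognising that this is the unique non-trivial case, that every other non-degenerate case yields $d^*(C,B)\ge d^*(A,B)$ by a single-line structural argument, and that the degenerate instances ($p=\emptyset$, $A$ already carrying $l_1$ at $p$, or $A=B$) simply collapse to $C=A$ or $C=B$ and need no work.
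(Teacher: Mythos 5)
Your proposal is correct and follows essentially the same route as the paper's proof: a direct case analysis of how $d^*_s$ and $d^*_l$ change under $R_{p,l_1}$, with Lemma~\ref{lem:nocloserbackwards} invoked for the one case where $A$ and $B$ share the decision variable at $p$. You are in fact slightly more careful than the paper in separating out the ``$A$ and $B$ agree on a \emph{leaf} at $p$'' sub-case (where the distance strictly increases) rather than folding it into the appeal to Lemma~\ref{lem:nocloserbackwards}, which as stated only covers agreement on decision nodes.
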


\begin{proof}
Suppose that $A$ is the initial tree, and $C=R_{p,l_1}(A)$. If the edge applies, and there
is the wrong label or a decision node at $p$, then the label is in
disagreement in $A$, but not in $C$. There are no other changes, so
$d^*(A,B)=d^*(C,B)+1=d^*(A,C)+d^*(C,B)$, and therefore the edge is on
a shortest path.

On the other hand, if there is no leaf at $p$ in $B$, or the leaf has
another label, then this is not the shortest path.

First of all, if the operator does not apply to $A$, it cannot be on
the shortest path.

If the label $V_p(B)\ne l_1$, but $A$ and $B$ are in agreement at the path $p$, then by
Lemma~\ref{lem:nocloserbackwards}
$d^*(A,B)<d^*(C,B)+1=d^*(A,C)+d^*(C,B)$.  If $V_p(B)\ne l_1$, and $A$
and $B$ are not in agreement at the path $p$, then $d^*(A,B)=d^*(C,B)<d^*(C,B)+1$.
\end{proof}

Thus, we have established our coloring works for decision trees.


\fi

\end{document}